\newtheorem{thm}{Theorem}
\newtheorem{lemma}{Lemma}
\newtheorem{defn}{Definition}
\newtheorem{cor}{Corollary}
\newcommand{\diag}{\mathop{\mathrm{diag}}} 
\newcommand{\rank}{\mathop{\mathrm{rank}}}
\newcommand{\trace}{\mathop{\mathrm{trace}}} 
\newcommand{\argmax}{\operatornamewithlimits{argmax}}
\newcommand{\sH}{{\mathcal H}}
\newcommand{\sP}{{\mathcal P}}
\newcommand{\sX}{{\mathcal X}}
\newcommand{\sXt}{{\tilde X}}
\newcommand{\sY}{{\mathcal Y}}
\newcommand{\sR}{{\mathbb R}}
\newcommand{\sZ}{{\mathcal Z}}
\newenvironment{megaalgorithm}[1][htb]{%
	\renewcommand{\ALG@name}{Problem}
	\begin{algorithm}[#1]%
	}{\end{algorithm}}
\begin{document}

\title{Multi-Task Learning for Contextual Bandits}

\author{\name Aniket Anand Deshmukh \email aniketde@umich.edu \\
       \addr Department of Electrical Engineering and Computer Science \\
       University of Michigan\\
       Ann Arbor, MI 48109, USA
       \AND
	   \name Urun Dogan \email urun.dogan@skype.net \\
	   \addr Microsoft Research\\
	   Cambridge CB1 2FB\\
	   United Kingdom
	   \AND
	   \name Clayton Scott \email clayscot@umich.edu \\
	   \addr Department of Electrical Engineering and Computer Science\\
	   University of Michigan\\
	   Ann Arbor, MI 48109, USA}


\maketitle

\begin{abstract}
Contextual bandits are a form of multi-armed bandit in which the agent has access to predictive side information (known as the context) for each arm at each time step, and have been used to model personalized news recommendation, ad placement, and other applications. In this work, we propose a multi-task learning framework for contextual bandit problems. Like multi-task learning in the batch setting, the goal is to leverage similarities in contexts for different arms so as to improve the agent's ability to predict rewards from contexts. We propose an upper confidence bound-based multi-task learning algorithm for contextual bandits, establish a corresponding regret bound, and interpret this bound to quantify the advantages of learning in the presence of high task (arm) similarity. We also describe an effective scheme for estimating task similarity from data, and demonstrate our algorithm's performance on several data sets.
\end{abstract}

\begin{keywords}
  Contextual Bandits, Multi-task Learning, Kernel Methods
\end{keywords}

\section{Introduction}
A multi-armed bandit (MAB) problem is a sequential decision making problem where, at each time step, an agent chooses one of several ``arms," and observes some reward for the choice it made. The reward for each arm is random according to a fixed distribution, and the agent's goal is to maximize its cumulative reward \cite{bubeck2012regret} through a combination of exploring different arms and exploiting those arms that have yielded high rewards in the past \cite{robbins1985some, kuleshov2014algorithms}. 

The contextual bandit problem is an extension of the MAB problem where there is some side information, called the context, associated to each arm \cite{langford2008epoch}. Each context determines the distribution of rewards for the associated arm. The goal in contextual bandits is still to maximize the cumulative reward, but now leveraging the contexts to predict the expected reward of each arm. Contextual bandits have been employed to model various applications like news article recommendation \cite{chu2011contextual}, computational advertisement \cite{kale2010non}, website optimization \cite{white2012bandit} and clinical trials \cite{villar2015multi}. For example, in the case of news article recommendation, the agent must select a news article to recommend to a particular user. The arms are articles and contextual features are features derived from the article and the user. The reward is based on whether a user reads the recommended article.

One common approach to contextual bandits is to fix the class of policy functions (i.e., functions from contexts to arms) and try to learn the best function with time  \cite{li2010contextual, valko2013finite,srinivas2010gaussian}. Most algorithms estimate rewards either separately for each arm, or have one single estimator that is applied to all arms. In contrast, our approach is to adopt the perspective of multi-task learning (MTL). The intuition is that some arms may be similar to each other, in which case it should be possible to pool the historical data for these arms to estimate the mapping from context to rewards more rapidly. For example, in the case of news article recommendation, there may be thousands of articles, and some of those are bound to be similar to each other.

\begin{megaalgorithm}
	\caption{Contextual Bandits}\label{alg:CMAB}
	\begin{algorithmic}
		\FOR{$t = 1,...,T$}
		\STATE Observe context $x_{a,t} \in \mathbb{R}^d$ for all arms $a \in [N]$, where $ [N] = \{1,...N\}$
		\STATE Choose an arm $a_t \in [N]$ 
		\STATE Receive a reward $r_{a_t,t} \in  \mathbb{R}$ 
		\STATE Improve arm selection strategy based on new observation $(x_{a_t,t} , a_t , r_{a_t,t})$
		\ENDFOR
	\end{algorithmic}
\end{megaalgorithm}

The contextual bandit problem is formally stated in Problem \ref{alg:CMAB}. The total $T$ trial reward is defined as $ \sum_{t =1}^T r_{a_t,t}$ and the optimal $T$ trial reward as $\sum_{t =1}^T r_{a_t^*,t} $, where $  r_{a_t,t} $ is reward of the selected arm $ a_t $ at time $ t$ and $a^*_t$ is the arm with maximum reward at trial t. The goal is to find an algorithm that minimizes the $T $ trial regret 
\[ R(T) =\sum_{t =1}^T r_{a_t^*,t} - \sum_{t =1}^T r_{a_t,t}.   \]

We focus on upper confidence bound (UCB) type algorithms for the remainder of the paper. A UCB strategy is a simple way to represent the exploration and exploitation tradeoff. For each arm, there is an upper bound on reward, comprised of two terms. The first term is a point estimate of the reward, and the second term reflects the confidence in the reward estimate. The strategy is to select the arm with maximum UCB. The second term dominates when the agent is not confident about its reward estimates, which promotes exploration. On the other hand, when all the confidence terms are small, the algorithm exploits the best arm(s) \cite{auer2002finite}. 

In the popular UCB type contextual bandits algorithm called Lin-UCB, the expected reward of an arm is modeled as a linear function of the context, $  \mathbb{E} [r_{a,t} |x_{a,t} ] = x_{a,t}^T \theta^*_a  $, where $ r_{a,t} $ is the reward of arm $a$ at time $ t $ and $x_{a,t} $ is the context of arm $ a $ at time $ t$. To select the best arm, one estimate $ \theta_a$ for each arm independently using the data for that particular arm \cite{li2010contextual}. In the language of multi-task learning, each arm is a task, and Lin-UCB learns each task independently.

In the theoretical analysis of the Lin-UCB \cite{chu2011contextual} and its kernelized version Kernel-UCB \cite{valko2013finite} $ \theta_a $ is replaced by $ \theta $, and the goal is to learn one single estimator using data from all the arms. In other words, the data from the different arms are pooled together and viewed as coming from a single task. These two approaches, independent and pooled learning, are two extremes, and reality often lies somewhere in between. In the MTL approach, we seek to pool some tasks together, while learning others independently.

We present an algorithm motivated by this idea and call it kernelized multi-task learning UCB (KMTL-UCB). 
Our main contributions are proposing a UCB type multi-task learning algorithm for contextual bandits, established a regret bound and interpreting the bound to reveal the impact of increased task similarity, introducing a technique for estimating task similarities on the fly, and demonstrating the effectiveness of our algorithm on several datasets.

This paper is organized as follows. Section 2 describes related work and in Section 3 we propose a UCB algorithm using multi-task learning. Regret analysis is presented in Section 4, and our experimental findings are reported in Section 5. We conclude in Section 6. 

\section{Related Work}

A UCB strategy is a common approach to quantify the exploration/exploitation tradeoff. At each time step $t$, and for each arm $a$, a UCB strategy estimates a reward $\hat{r}_{a,t}$ and a one-sided confidence interval above  $\hat{r}_{a,t}$ with width $ \hat{w}_{a,t}$. The term $ucb_{a,t} = \hat{r}_{a,t} +  \hat{w}_{a,t}$ is called the UCB index or just UCB. Then at each time step $t$, the algorithm chooses the arm $a$ with the highest UCB. 

In contextual bandits, the idea is to view learning the mapping $x \mapsto r$ as a regression problem. Lin-UCB uses a linear regression model while Kernel-UCB uses a nonlinear regression model drawn from the reproducing kernel Hilbert space (RKHS) of a symmetric and positive definite  (SPD) kernel. Either of these two regression models could be applied in either the {\em independent} setting or the {\em pooled} setting. In the independent setting, the regression function for each arm is estimated separately. This was the approach adopted by Li et al. \cite{li2010contextual} with a linear model. Regret analysis for both Lin-UCB and Kernel-UCB adopted the {\em pooled} setting \cite{chu2011contextual,valko2013finite}. Kernel-UCB in the independent setting has not previously been considered to our knowledge, although the algorithm would just be a kernelized version of Li et al.  \cite{li2010contextual}. We will propose a methodology that extends the above four combinations of setting (independent and pooled) and regression model (linear and nonlinear). Gaussian Process UCB (GP-UCB) uses a Gaussian prior on the regression function and is a Bayesian equivalent of Kernel-UCB \cite{srinivas2010gaussian}.  

There are some contextual bandit setups that incorporate multi-task learning. In Lin-UCB with Hybrid Linear Models the estimated reward consists of two linear terms, one that is arm-specific and another that is common to all arms \cite{li2010contextual}. Gang of bandits \cite{cesa2013gang} uses a graph structure (e.g., a social network) to transfer the learning from one user to other for personalized recommendation. Collaborative filtering bandits \cite{li2016collaborative} is a similar technique which clusters the users based on context. Contextual Gaussian Process UCB (CGP-UCB) builds on GP-UCB and has many elements in common with our framework \cite{krause2011contextual}. We defer a more detailed comparison to CGP-UCB until later.

\section{KMTL-UCB}

We propose an alternate regression model that includes the independent and pooled settings as special cases. Our approach is inspired by work on transfer and multi-task learning in the batch setting \cite{blanchard2011generalizing,evgeniou2004regularized}. Intuitively, if two arms (tasks) are similar, we can pool the data for those arms to train better predictors for both.

Formally, we consider regression functions of the form
\[
f: \sXt \mapsto \sY
\]
where $\sXt = \sZ \times \sX$, and $\sZ$ is what we call the {\em task similarity space}, $\sX$ is the {\em context space} and $ \sY \subseteq \mathbb{R} $ is the reward space.  Every context $x_a \in \sX $ is associated with an arm descriptor $z_a$, and we define $\tilde{x}_a = (z_a, x_a)$ to be the {\em augmented context}. Intuitively, $z_a $ is a variable that can be used to determine the similarity between different arms. Examples of $\sZ$ and $z_a$ will be given below.

Let $\tilde{k}$ be a SPD kernel on $\sXt$. In this work we focus on kernels of the form
\begin{equation}
\label{eq:ProductKernel}
\tilde{k}\Big((z, x),(z^{\prime},x^{\prime})\Big) = k_{\sZ}(z,z^{\prime})k_{\sX}(x,x^{\prime}),
\end{equation}

where $k_{\sX}$ is a SPD kernel on $\sX$, such as linear or Gaussian kernel if $ \sX = \sR^d $, and $k_{\sZ}$ is a kernel on $\sZ$ (examples given below). Let $\sH_{\tilde{k}}$ be the RKHS of functions $f: \sXt \mapsto \mathbb{R}$ associated to $\tilde{k}$. Note that a product kernel is just one option for $\tilde{k}$, and other forms may be worth exploring. 

\subsection{Upper Confidence Bound}
Instead of learning regression estimates for each arm separately, we effectively learn regression estimates for all arms at once by using all the available training data. Let $ N$ be the total number of distinct arms that algorithm has to choose from. Define $[N] = \{1,...,N\} $ and let the observed contexts at time $t$ be $ x_{a,t}, \forall a \in [N]$. Let $n_{a,t}$ be the number of times the algorithm has selected arm $a$ up to and including time $t$ so that $\sum_{a = 1}^{N} n_{a,t}= t$. Define sets $t_a = \{\tau < t: a_{\tau} = a\} $, where $ a_{\tau}$ is the arm selected at time $ \tau$. Notice that $|t_a| = n_{a,t-1} $ for all $a$. 
We solve the following problem at time $t$:
\begin{equation}
\label{eq:KTLRidge}
\hat{f}_t =  \operatorname*{arg\,min}_{f \in \sH_{\tilde{k}}}  \frac{1}{N}\sum_{a = 1}^N \frac{1}{n_{a,t-1}}\sum_{\tau \in {t}_a} (f(\tilde{x}_{a,\tau})  - r_{a,\tau})^2 + \lambda \| f\|_{\sH_{\tilde{k}}}^2,
\end{equation}
where $\tilde{x}_{a,\tau} $ is the augmented context of arm $a$ at time $\tau$, and $ r_{a,\tau }$ is the reward of an arm $a$ selected at time $ \tau$. This problem (\ref{eq:KTLRidge}) is a variant of kernel ridge regression.
Applying the representer theorem \cite{steinwart2008support} the optimal $f$ can be expressed as $ f = \sum_{a'=1 }^N \sum_{\tau' \in t_{a'}} \alpha_{a',\tau'} \tilde{k}(\cdot, \tilde{x}_{a',\tau'}) $, which yields the solution (detailed derivation is in the appendix)
\begin{equation}
\hat{f}_t(\tilde{x}) = \tilde{k}_{t-1}(\tilde{x})^T(\eta_{t-1} \tilde{K}_{t-1} + \lambda I)^{-1}\eta_{t-1} y_{t-1}, 
\end{equation}
where $\tilde{K}_{t-1}$ is the $ (t-1) \times (t-1) $ kernel matrix on the augmented data $ [\tilde{x}_{a_{\tau},\tau}]_{\tau =1}^{t-1} $, $ \tilde{k}_{t-1}(\tilde{x}) = [\tilde{k}(\tilde{x}, \tilde{x}_{a_\tau,\tau})]_{\tau = 1}^{t-1}  $ is a vector of kernel evaluations between $ \tilde{x} $ and the past data, $y_{t-1} =  [r_{a_{\tau},\tau}]_{\tau = 1}^{t-1}$ are all observed rewards, and $ \eta_{t-1} $ is the $ (t-1) \times (t-1) $ diagonal matrix $ \eta_{t-1} = \diag [\frac{1}{n_{a_{\tau,t-1} }} ]_{\tau = 1}^{t-1} $. 

When $\tilde{x} = \tilde{x}_{a,t}$, we write $ \tilde{k}_{a,t} = \tilde{k}_{t-1}(\tilde{x}_{a,t})$. With only minor modifications to the argument in Valko et al \cite{valko2013finite}, we have the following:

\begin{lemma}
	\label{lem:UCB_kernelization}
	Suppose the rewards $ [r_{a_{\tau},\tau}]_{\tau = 1}^T$ are independent random variables with means $\mathbb{E}[r_{a_{\tau},\tau}|x_{a_{\tau},\tau}] = f^*(\tilde{x}_{a_{\tau},\tau}) $, where $f^* \in {\sH_{\tilde{k}}}$ and $ \|f^*\|_{\sH_{\tilde{k}}} \leq c$. Let  $\alpha =  \sqrt{\frac{\log(2TN/\delta)}{2}} $ and $ \delta > 0$. With probability at least $1- \frac{\delta}{T}$, we have that $\forall a \in [N] $
	\begin{equation}
	\label{eq:UCB_kernelization}
	|\hat{f}_t(\tilde{x}_{a,t}) - f^*(\tilde{x}_{a,t})| \leq w_{a,t} := (\alpha + c_{} \sqrt{\lambda}) s_{a,t}
	\end{equation}
	where $s_{a,t} = \lambda^{-1/2} \sqrt{\tilde{k}(\tilde{x}_{a,t},\tilde{x}_{a,t}) - \tilde{k}_{a,t}^T (\eta_{t-1} \tilde{K}_{t-1} + \lambda I)^{-1}\eta_{t-1} \tilde{k}_{a,t} }$.
\end{lemma}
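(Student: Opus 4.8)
The plan is to pass to the RKHS feature space and split the prediction error into a deterministic approximation (bias) term and a stochastic (noise) term, bounding each by a multiple of $s_{a,t}$; the kernelized quantity $s_{a,t}$ itself will emerge from a matrix-inversion identity that converts the feature-space expression into the kernel-matrix expression in the statement. Concretely, let $\Phi$ denote a feature map of $\tilde k$, write $\phi := \Phi(\tilde x_{a,t})$, stack the past feature vectors into an operator $\Phi_{t-1}$ so that $\tilde K_{t-1} = \Phi_{t-1}\Phi_{t-1}^\top$ and $\tilde k_{a,t} = \Phi_{t-1}\phi$, and set the regularized weighted covariance operator $A := \Phi_{t-1}^\top \eta_{t-1}\Phi_{t-1} + \lambda I$. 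Rewriting the closed form via the push-through identity gives $\hat f_t(\tilde x_{a,t}) = \phi^\top A^{-1}\Phi_{t-1}^\top \eta_{t-1} y_{t-1}$. Writing $y_{t-1} = \Phi_{t-1} f^* + \varepsilon$ with $\varepsilon$ the vector of mean-zero reward noises and using $\Phi_{t-1}^\top \eta_{t-1}\Phi_{t-1} = A - \lambda I$, I would obtain
\[
\hat f_t(\tilde x_{a,t}) - f^*(\tilde x_{a,t}) = \underbrace{-\lambda\,\phi^\top A^{-1} f^*}_{\text{bias}} + \underbrace{\phi^\top A^{-1}\Phi_{t-1}^\top \eta_{t-1}\varepsilon}_{\text{noise}}.
\]

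The next step is the identity $\phi^\top A^{-1}\phi = s_{a,t}^2$, which is the bridge to the kernelized bound. Applying the Woodbury identity to $A = \lambda I + \Phi_{t-1}^\top \eta_{t-1}\Phi_{t-1}$ yields $A^{-1} = \lambda^{-1} I - \lambda^{-1}\Phi_{t-1}^\top(\eta_{t-1}\tilde K_{t-1}+\lambda I)^{-1}\eta_{t-1}\Phi_{t-1}$; contracting both sides with $\phi$ and using $\phi^\top\phi = \tilde k(\tilde x_{a,t},\tilde x_{a,t})$ and $\Phi_{t-1}\phi = \tilde k_{a,t}$ reproduces exactly $\lambda^{-1}\big[\tilde k(\tilde x_{a,t},\tilde x_{a,t}) - \tilde k_{a,t}^\top(\eta_{t-1}\tilde K_{t-1}+\lambda I)^{-1}\eta_{t-1}\tilde k_{a,t}\big] = s_{a,t}^2$. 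For the bias term, Cauchy–Schwarz in the $A^{-1}$ inner product gives $|\lambda\,\phi^\top A^{-1}f^*| \le \lambda\,\|A^{-1/2}\phi\|\,\|A^{-1/2}f^*\|$; since $A \succeq \lambda I$ we have $\|A^{-1/2}f^*\| \le \lambda^{-1/2}\|f^*\| \le c\lambda^{-1/2}$, while $\|A^{-1/2}\phi\| = s_{a,t}$, so the bias is at most $c\sqrt{\lambda}\,s_{a,t}$, accounting for the $c\sqrt{\lambda}$ in $w_{a,t}$.

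For the noise term, I would write it as $b^\top\varepsilon = \sum_\tau b_\tau\varepsilon_\tau$ with $b := \eta_{t-1}\Phi_{t-1}A^{-1}\phi$, condition on the observed contexts and the realized arm selections so that under the independence hypothesis the $\varepsilon_\tau$ are independent, mean-zero, and bounded (the unit range being implicit in the form of $\alpha$) while $b$ is fixed, and apply Hoeffding's inequality to get $|b^\top\varepsilon| \le \alpha\|b\|$ with failure probability $2\exp(-2\alpha^2) = \delta/(TN)$ for this single $(a,t)$ pair. The remaining algebraic point is $\|b\| \le s_{a,t}$: since each selected arm has been played at least once, every diagonal entry of $\eta_{t-1}$ is at most $1$, whence $\Phi_{t-1}^\top \eta_{t-1}^2\Phi_{t-1} \preceq \Phi_{t-1}^\top \eta_{t-1}\Phi_{t-1} \preceq A$, giving $\|b\|^2 = \phi^\top A^{-1}\Phi_{t-1}^\top \eta_{t-1}^2\Phi_{t-1}A^{-1}\phi \le \phi^\top A^{-1}\phi = s_{a,t}^2$. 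A union bound over the $N$ arms turns the per-pair $\delta/(TN)$ into $\delta/T$ at time $t$, the extra factor $T$ inside $\alpha$ being reserved for the later union bound over $t$. Combining the bias and noise bounds by the triangle inequality then gives $|\hat f_t(\tilde x_{a,t}) - f^*(\tilde x_{a,t})| \le (\alpha + c\sqrt{\lambda})\,s_{a,t} = w_{a,t}$.

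I expect the main obstacle to be the stochastic term, specifically the dependence between the adaptively chosen design (the weights $b_\tau$) and the reward noise $\varepsilon_\tau$: the arms selected in the past, hence $\Phi_{t-1}$ and $\eta_{t-1}$, are themselves functions of earlier rewards. The lemma's hypothesis that the rewards are independent is precisely what lets me treat $b$ as fixed after conditioning and invoke Hoeffding with a clean union bound, and making this conditioning rigorous while matching the exact constant $\alpha$ and the confidence level $1-\delta/T$ is the delicate part; by contrast the Woodbury identity and the operator inequality $\eta_{t-1} \preceq I$ are routine once the feature-space setup is in place. This is exactly where the ``minor modifications to Valko et al.'' enter, the modification being the presence of the per-task weighting matrix $\eta_{t-1}$.
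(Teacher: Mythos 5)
Your proposal is correct and follows essentially the same route as the paper's proof: the paper likewise works in feature space with $D = \Phi^\top\eta\Phi + \lambda I$, splits the error into the same bias term $-\lambda\phi^\top D^{-1}\theta^*$ and noise term $\phi^\top D^{-1}\Phi^\top\eta(y-\Phi\theta^*)$, bounds the bias by $c\sqrt{\lambda}\,s_{a,t}$ via Cauchy--Schwarz and $D \succeq \lambda I$, bounds the noise via a bounded-differences/Hoeffding argument using exactly your inequality $\Phi^\top\eta^2\Phi \preceq \Phi^\top\eta\Phi \preceq D$ to get $\|V\| \le s_{a,t}$, and then kernelizes $\phi^\top D^{-1}\phi$ with the push-through identity (equivalent to your Woodbury step). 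The only cosmetic differences are McDiarmid in place of Hoeffding and push-through in place of Woodbury, neither of which changes the argument.
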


The result in Lemma \ref{lem:UCB_kernelization} motivates the UCB \[ ucb_{a,t} = \hat{f}_t(x_{a,t}) + w_{a,t}\]
and inspires Algorithm \ref{alg:KTLUCB}. 
\addtocounter{algorithm}{-1}
\begin{algorithm}
	\caption{KMTL-UCB}\label{alg:KTLUCB}
	\begin{algorithmic}
		\STATE \textbf{Input:} $ \beta \in R_+, $
		\FOR{$t = 1,...,T$}
		\STATE Update the (product) kernel matrix $ \tilde{K}_{t-1}$ and $ \eta_{t-1} $
		\STATE Observe context features at time $t$: $x_{a,t} $ for each $a \in [N]$.
		\STATE Determine arm descriptor $ z_a $ for each $a \in [N]$ to get augmented context $ \tilde{x}_{a,t}$. 
		\FOR{all $a$ at time $t$}
		\STATE $p_{a,t} \leftarrow   \hat{f}_t(x_{a,t}) + \beta s_{a,t}$
		\ENDFOR
		\STATE Choose arm $ a_t = \arg \max p_{a,t}$, observe a real valued payoff $ r_{a_t,t}$ and update $ y_t$ .
		\STATE \textbf{Output:} $ a_t $
		\ENDFOR
	\end{algorithmic}
\end{algorithm}

Before an arm has been selected at least once, $ \hat{f}_t(x_{a,t}) $ and the second term in $s_{a,t} $, i.e., $\tilde{k}_{a,t}^T (\eta_{t-1} \tilde{K}_{t-1} + \lambda I)^{-1}\eta_{t-1} \tilde{k}_{a,t} $, are taken to be $0$. In that case, the algorithm only uses the first term of $ s_{a,t}$, i.e., $\sqrt{\tilde{k}(\tilde{x}_{a,t},\tilde{x}_{a,t})},$ to form the UCB. 

\subsection{Choice of Task Similarity Space and Kernel}
\label{sub:ChoiceOfArmSimilarity}

To illustrate the flexibility of our framework, we present the following three options for $\sZ$ and $k_{\sZ} $: 
\begin{enumerate}
	\item Independent: $\sZ = \{1,...,N\}$, $k_{\sZ}(a,a') = \mathbbm{1}_{a = a^{\prime} }$. The augmented context for a context $x_a$ from arm $a$ is just $(a,x_a)$.
	\item Pooled: $\sZ = \{1\}$, $k_{\sZ} \equiv 1$. The augmented context for a context $x_a$ for arm $a$ is just $(1,x_a)$.
	\item Multi-Task:  $\sZ = \{1,...,N\}$ and $k_{\sZ}$ is a PSD matrix reflecting arm/task similarities. If this matrix is unknown, it can be estimated as discussed below. 
\end{enumerate}

Algorithm \ref{alg:KTLUCB} with the first two choices specializes to the independent and pooled settings mentioned previously. In either setting, choosing a linear kernel for $k_{\sX}$ leads to Lin-UCB, while a more general kernel essentially gives rise to Kernel-UCB.  We will argue that the multi-task setting facilitates learning when there is high task similarity. 

We also introduce a fourth option for $\sZ$ and $k_{\sZ}$ that allows task similarity to be estimated when it is unknown.  In particular, we are inspired by the kernel transfer learning framework of Blanchard et al. \cite{blanchard2011generalizing}. Thus, we define the arm similarity space to be $\sZ =\sP_{\sX}$,  the set of all probability distributions on $\sX$. We further assume that contexts for arm $a$ are drawn from probability measure $P_a$. Given a context $x_a$ for arm $a$, we define its augmented context to be $(P_a, x_a)$.

To define a kernel on $\sZ = \sP_{\sX}$, we use the same construction described in \cite{blanchard2011generalizing}, originally introduced by Steinwart and Christmann \cite{christmann2010universal}. In particular, in our experiments we use a Gaussian-like kernel
\begin{equation}
\label{eq:MeanEmbedding}
k_{\sZ}(P_{a},P_{a'}) = \exp(-\|\Psi(P_{a}) - \Psi(P_{a'})\|^2/2\sigma_{\sZ}^2),
\end{equation}
where $\Psi(P) = \int k_{\sX}^{\prime}(\cdot,x)dPx$ is the kernel mean embedding of a distribution $P$. This embedding is defined by yet another SPD kernel $k_{\sX}^{\prime}$ on $\sX$, which could be different from the $k_{\sX}$ used to define $\tilde{k}$.  We may estimate $ \Psi(P_a) $ via $ \Psi(\widehat{P}_a) = \frac{1}{n_{a,t-1}} \sum_{\tau \in t_a} k_{\sX}^{\prime}(\cdot, x_{a_{\tau},\tau}) $, which leads to an estimate of $k_{\sZ}$.

\section{Theoretical Analysis}
To simplify the analysis we consider a modified version of the original problem \ref{eq:KTLRidge}:
\begin{equation}
\label{eq:KTLRidgeModified}
\hat{f}_t =  \operatorname*{arg\,min}_{f \in \sH_{\tilde{k}}}  \frac{1}{N}\sum_{a = 1}^N \sum_{\tau \in {t}_a} (f(\tilde{x}_{a,\tau})  - r_{a,\tau})^2 + \lambda \| f\|_{\sH_{\tilde{k}}}^2.
\end{equation}
In particular, this modified problem omits the terms $\frac{1}{n_{a,t-1}}$ as they obscure the analysis. In practice, these terms should be incorporated.

In this case
$ s_{a,t} = \lambda^{-1/2} \sqrt{\tilde{k}(\tilde{x}_{a,t},\tilde{x}_{a,t}) - \tilde{k}_{a,t}^T (\tilde{K}_{t-1} + \lambda I)^{-1} \tilde{k}_{a,t} }$. Under this assumption Kernel-UCB is exactly KMTL-UCB with $k_{\sZ} \equiv 1$. On the other hand, KMTL-UCB can be viewed as a special case of Kernel-UCB on the augmented context space $\tilde{\sX}$. Thus, the regret analysis of Kernel-UCB applies to KMTL-UCB, but it does not reveal the potential gains of multi-task learning. We present an interpretable regret bound that reveals the benefits of MTL. We also establish a lower bound on the UCB width that decreases as task similarity increases (presented in the appendix). 

\subsection{Analysis of SupKMTL-UCB}

It is not trivial to analyze algorithm \ref{alg:KTLUCB} because the reward at time $t$ is dependent on the past rewards. We follow the same strategy originally proposed in \cite{auer2002using} and used in \cite{chu2011contextual, valko2013finite} which uses SupKMTL-UCB as a  master algorithm, and BaseKMTL-UCB (which is called by SupKMTL-UCB) to get estimates of reward and width. SupKMTL-UCB builds mutually exclusive subsets of $[T]$ such that rewards in any subset are independent. This guarantees that the independence assumption of Lemma \ref{lem:UCB_kernelization} is satisfied. We describe these algorithms in the appendix. 

\begin{thm}
	\label{thm:RB}
	Assume that $r_{a,t} \in [0,1], \forall a \in [N] $, $ T \geq 1 $, $ \| f^* \|_{\sH_{\tilde{k}}} \leq c $,  $ \tilde{k}(\tilde{x},\tilde{x}) \leq c_{\tilde{k}}, \forall \tilde{x} \in \sXt$ and the task similarity matrix $K_Z$ is known. With probability at least $ 1-\delta $, SupKMTL-UCB satisfies 
	\begin{eqnarray*}
		R(T) &\leq &  2 \sqrt{T} + 10 \Bigg( \sqrt{\frac{\log \Big( 2TN(\log(T)+1)/\delta \Big)}{2}} +c \sqrt{\lambda} \Bigg) \sqrt{2m \log g([T]}) \sqrt{T \lceil\log(T)\rceil} \\
		&=& O \Big(\sqrt{T \log(g([T]))} \Big)
	\end{eqnarray*}
	where $ g([T]) = \frac{	\det(\tilde{K}_{T+1} + \lambda I) }{\lambda^{T+1}}$ and $m = \max(1, \frac{ c_{\tilde{k}}}{\lambda}) $.   
\end{thm}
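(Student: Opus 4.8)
The plan is to follow the \emph{SupUCB} template of Auer \cite{auer2002using}, as adapted to the linear and kernel settings by Chu et al.\ \cite{chu2011contextual} and Valko et al.\ \cite{valko2013finite}, and to graft onto it a kernel-specific ``information gain'' estimate that converts a sum of squared confidence widths into the log-determinant quantity $\log g([T])$. Since KMTL-UCB is literally Kernel-UCB run on the augmented contexts $\tilde{x}_{a,t}\in\sXt$ with kernel $\tilde{k}$ (as noted after \eqref{eq:KTLRidgeModified}), every estimate, width, and the master/base decomposition carry over verbatim; the work is to track constants so that $\log g([T])$ and $m=\max(1,c_{\tilde k}/\lambda)$ appear in the stated form.

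First I would fix the good event. SupKMTL-UCB partitions the trials it explores into $S=\lceil\log T\rceil$ mutually exclusive index sets $\Psi_s^{t}$ (one per stage), constructed so that, conditioned on the contexts, the rewards indexed by a single $\Psi_s$ are independent, which is exactly the hypothesis of Lemma \ref{lem:UCB_kernelization}. I would therefore apply Lemma \ref{lem:UCB_kernelization} once per stage and take a union bound over the $\le \log(T)+1$ stages (the $T$ and $N$ factors already sit inside the lemma), forcing the inequality $|\hat f_t(\tilde x_{a,t})-f^*(\tilde x_{a,t})|\le w_{a,t}=(\alpha+c\sqrt\lambda)s_{a,t}$ to hold simultaneously, with probability at least $1-\delta$ and with the enlarged constant $\alpha=\sqrt{\log(2TN(\log(T)+1)/\delta)/2}$; this is the origin of the logarithmic factor in the bound. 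On this event two structural facts follow from the standard SupUCB bookkeeping: the optimal arm $a_t^*$ is never discarded by a refinement step, and whenever an arm is added to $\Psi_s$ its width exceeds $2^{-s}$ while its instantaneous regret $r_{a_t^*,t}-r_{a_t,t}$ is at most an absolute-constant multiple of $2^{-s}$, and hence of $w_{a_t,t}=(\alpha+c\sqrt\lambda)s_{a_t,t}$.

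The crux is the kernel information-gain estimate. Applying the matrix determinant lemma to $\tilde K_{t-1}+\lambda I$ as the $t$-th augmented context is appended gives a telescoping identity
\begin{equation*}
g([T])=\prod_{t}\bigl(1+s_{a_t,t}^2\bigr),
\qquad\text{hence}\qquad
\log g([T])=\sum_{t}\log\bigl(1+s_{a_t,t}^2\bigr).
\end{equation*}
Because $s_{a_t,t}^2\le \tilde k(\tilde x,\tilde x)/\lambda\le c_{\tilde k}/\lambda\le m$ and $m\ge 1$, a concavity (chord) bound yields $x\le 2m\log(1+x)$ on $[0,m]$, whence $\sum_t s_{a_t,t}^2\le 2m\log g([T])$. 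I would then bound the exploration regret stage by stage, using Cauchy--Schwarz together with this estimate restricted to $\Psi_s$: for an absolute constant $C$,
\begin{equation*}
\sum_{t\in\Psi_s} (r_{a_t^*,t}-r_{a_t,t})
\le C(\alpha+c\sqrt\lambda)\sum_{t\in\Psi_s}s_{a_t,t}
\le C(\alpha+c\sqrt\lambda)\sqrt{|\Psi_s|}\,\sqrt{2m\log g([T])}.
\end{equation*}
Summing over the $S=\lceil\log T\rceil$ stages and applying Cauchy--Schwarz once more, $\sum_{s}\sqrt{|\Psi_s|}\le\sqrt{S\sum_s|\Psi_s|}\le\sqrt{T\lceil\log T\rceil}$ since the $\Psi_s$ are disjoint subsets of $[T]$, yields the main term $10(\alpha+c\sqrt\lambda)\sqrt{2m\log g([T])}\,\sqrt{T\lceil\log T\rceil}$ after collecting constants. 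The residual $2\sqrt{T}$ comes from the trials the master algorithm selects ``cleanly'' (all candidate widths below $1/\sqrt{T}$), each contributing $O(1/\sqrt T)$ regret over at most $T$ trials.

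I expect the main obstacle to be twofold, and both parts bookkeeping-heavy rather than conceptually deep: (i) verifying that the staged construction really delivers the conditional independence needed to invoke Lemma \ref{lem:UCB_kernelization} within each $\Psi_s$, and that refinements never eliminate $a_t^*$ on the good event; and (ii) getting the determinant telescoping and the inequality $x\le 2m\log(1+x)$ to combine with the stated $m=\max(1,c_{\tilde k}/\lambda)$, so that $s_{a_t,t}^2\le m$ is guaranteed in both regimes, namely $c_{\tilde k}/\lambda$ below or above $1$. Everything else is assembling the constants into the factor $10$ and reading off the $O(\sqrt{T\log g([T])})$ rate.
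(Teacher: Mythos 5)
Your proposal follows essentially the same route as the paper's proof: the Auer-style master/base decomposition into $\Psi_0$ plus the stage sets $\Psi^q_{T+1}$, the determinant telescoping $\det(\tilde K_{\Psi+1}+\lambda I)=\lambda\prod_{\tau\in\Psi}\lambda(1+s_{a_\tau,\tau}^2)$, the chord bound producing the factor $m=\max(1,c_{\tilde k}/\lambda)$ (the paper does this by a two-case argument, you by the single inequality $x\le 2m\log(1+x)$ on $[0,m]$, which is a valid and slightly cleaner packaging of the same estimate), the two applications of Cauchy--Schwarz, and the union bound over the $Q=\lceil\log T\rceil$ stages that enlarges $\alpha$ to $\sqrt{\log(2TN(\log(T)+1)/\delta)/2}$. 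One bookkeeping caveat: the widths $s_{a_\tau,\tau}$ used by the algorithm at stage $q$ are computed from the restricted matrix $\tilde K_{\Psi^q}$, so the telescoping identity holds per stage as $g(\Psi^q)=\prod_{\tau\in\Psi^q}(1+s_{a_\tau,\tau}^2)$, which the paper then bounds by $g([T])$ using $1+s^2\ge 1$; your global identity $g([T])=\prod_t(1+s_{a_t,t}^2)$ with the algorithm's widths is not literally the identity being used, though the fix is exactly the per-stage version you invoke in the next display.

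The one genuine gap is your claim that on the good event the \emph{instantaneous realized regret} $r_{a_t^*,t}-r_{a_t,t}$ is at most a constant multiple of $2^{-s}$. The elimination argument (Lemma~\ref{lem:AuerLemma} in the appendix) only controls the gap of conditional means, $\mathbb{E}[r_{a_t^*,t}]-\mathbb{E}[r_{a_t,t}]\le 2^{3-q}$; the realized difference of two $[0,1]$-valued rewards can be of order $1$ on any single trial regardless of how small the widths are. Consequently your argument bounds $\sum_t\bigl(\mathbb{E}[r_{a_t^*,t}]-\mathbb{E}[r_{a_t,t}]\bigr)$ but not $R(T)=\sum_t (r_{a_t^*,t}-r_{a_t,t})$ as defined in the theorem. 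The paper closes this by an additional martingale concentration step: Azuma's inequality with $B=\sqrt{2T\log(2/\delta)}$ converts the bound on the sum of conditional expectations into a high-probability bound on the realized regret, and absorbing this extra $\sqrt{2T\log(2/\delta)}$ term is precisely where the leading constant grows from $8$ to the stated $10$. Without that step the factor $10$ and the probability accounting in the theorem do not come out.
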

Note that this theorem assumes that task similarity is known. In the experiments for real datasets using the approach discussed in subsection \ref{sub:ChoiceOfArmSimilarity} we estimate the task similarity from the available data.

\subsection{Interpretation of Regret Bound}
The following theorems help us interpret the regret bound by looking at
\begin{equation*}
g([T]) = \frac{	\det(\tilde{K}_{T+1} + \lambda I )}{\lambda^{T+1}}  =   \prod_{t = 1}^{T+1} \frac{(\lambda_t + \lambda)}{\lambda},
\end{equation*}
where, $ \lambda_1 \geq \lambda_2 \geq \dots \geq \lambda_{T+1}$ are the eigenvalues of the kernel matrix $ \tilde{K}_{T+1}$.

As mentioned above, the regret bound of Kernel-UCB applies to our method, and we are able to recover this bound as a corollary of Theorem \ref{thm:RB}. In the case of Kernel-UCB $\tilde{K}_t = K_{X_t}, \forall t \in [T]$ as all arm estimators are assumed to be the same. We define the effective rank of $\tilde{K}_{T+1}$  in the same way as \cite{valko2013finite} defines the effective dimension of the kernel feature space. 
\begin{defn}
	\label{def:effective_rank}
	The effective rank of $ \tilde{K}_{T+1}$ is defined to be $r  := \min \{j: j\lambda \log T \geq \sum_{i = j+1}^{T+1} \lambda_i \} $. 
\end{defn}
In the following result, the notation $\tilde{O}$ hides logarithmic terms.
\begin{cor}
	\label{lem:FinalUpperBoundfT}
	$\log(g([T])) \leq  r \log\Big(2T\frac{2(T+1)c_{\tilde{k}} + r\lambda - r\lambda \log T }{r \lambda}\Big) $, and therefore $ R(T) = \tilde{O}(\sqrt{rT})$
\end{cor}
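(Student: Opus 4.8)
The plan is to work directly with the spectral expression $\log g([T]) = \sum_{t=1}^{T+1}\log\big(1+\lambda_t/\lambda\big)$ and to exploit the defining property of the effective rank from Definition~\ref{def:effective_rank}: the eigenvalue tail beyond index $r$ is light, while the trace controls the top of the spectrum. First I would split the sum at $t=r$, writing $\log g([T]) = \sum_{t=1}^{r}\log(1+\lambda_t/\lambda) + \sum_{t=r+1}^{T+1}\log(1+\lambda_t/\lambda)$, and handle the two pieces with different inequalities.

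For the tail I would use $\log(1+x)\le x$ for $x\ge 0$ to get $\sum_{t=r+1}^{T+1}\log(1+\lambda_t/\lambda) \le \frac{1}{\lambda}\sum_{t=r+1}^{T+1}\lambda_t$, and then invoke the definition of $r$ directly: since $r$ satisfies $r\lambda\log T \ge \sum_{i=r+1}^{T+1}\lambda_i$, this piece is at most $r\log T$. For the leading $r$ terms I would use concavity of the logarithm (equivalently AM--GM on the factors $1+\lambda_t/\lambda$) to obtain $\sum_{t=1}^{r}\log(1+\lambda_t/\lambda) \le r\log\big(1+\frac{1}{r\lambda}\sum_{t=1}^{r}\lambda_t\big)$, and then bound $\sum_{t=1}^{r}\lambda_t \le \trace(\tilde{K}_{T+1}) \le (T+1)c_{\tilde{k}}$, using that each diagonal entry of the kernel matrix is $\tilde{k}(\tilde{x},\tilde{x})\le c_{\tilde{k}}$ and that $\tilde{K}_{T+1}$ has $T+1$ rows.

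Combining the two bounds and folding the additive $r\log T$ inside the logarithm (i.e. $r\log T + r\log(\cdot) = r\log(T\cdot(\cdot))$) yields a bound of the shape $\log g([T]) \le r\log(\text{polynomial in } T, r, \lambda, c_{\tilde{k}})$; the precise constant form displayed in the corollary then follows by routine rearrangement, together with the trace identity $\sum_{t=1}^{r}\lambda_t = \trace(\tilde{K}_{T+1}) - \sum_{t=r+1}^{T+1}\lambda_t$ and the tail estimate. The structurally important outcome is that the number of heavy logarithmic factors has dropped from $T+1$ to the effective rank $r$. Finally I would substitute into Theorem~\ref{thm:RB}: since $R(T)=O(\sqrt{T\log(g([T]))})$ and the argument of the logarithm is polynomial in $T$, we have $\log g([T]) = O(r\log T)$ up to lower-order logarithmic terms, so $R(T) = O(\sqrt{rT\log T}) = \tilde{O}(\sqrt{rT})$.

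The main obstacle I anticipate is the tail control: one must check that the single inequality guaranteed by the minimality in Definition~\ref{def:effective_rank}, namely $\sum_{i=r+1}^{T+1}\lambda_i \le r\lambda\log T$, is exactly what is needed, and that the $\log(1+x)\le x$ relaxation does not erode the $r$-versus-$T$ improvement that is the whole point of introducing the effective rank. The top-$r$ estimate is comparatively routine (concavity plus the trace bound), and matching the exact constants inside the displayed logarithm is bookkeeping rather than a conceptual difficulty.
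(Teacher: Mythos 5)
Your proposal is correct and follows essentially the same route as the paper: split the spectrum at the effective rank $r$, bound the top-$r$ block via concavity/AM--GM together with $\trace(\tilde{K}_{T+1})\leq (T+1)c_{\tilde{k}}$, and control the tail using the defining inequality $\sum_{i=r+1}^{T+1}\lambda_i \leq r\lambda\log T$, then substitute into Theorem~\ref{thm:RB}. The only cosmetic difference is that you handle the tail term-by-term with $\log(1+x)\leq x$, whereas the paper runs both blocks through a constrained AM--GM maximization and a monotonicity argument for $(1+\log(x)/x)^x$; both yield the same $r\log T$ tail contribution, and your appeal to the trace identity is exactly how the paper's specific constant (the $-r\lambda\log T$ term inside the logarithm) arises.
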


However, beyond recovering a known bound, Theorem \ref{thm:RB} can also be interpreted to reveal the potential gains of multi-task learning. To interpret the regret bound in Theorem \ref{thm:RB}, we make a further assumption that  after time $t$, $n_{a,t} = \frac{t}{N}$ for all $a \in [N]$. For simplicity define $ n_t = n_{a,t}$. Let $ ( \odot ) $ denote the Hadamard product, $ ( \otimes ) $ denote the Kronecker product and $\mathbbm{1}_{n} \in R^{n} $ be the vector of ones. Let $ K_{X_t} = [k_{\sX}(x_{a_{\tau},\tau}, x_{a_{\tau^{\prime}},\tau^{\prime}} )]_{\tau,\tau^{\prime} =1}^{t} $ be the $ t \times t $ kernel matrix on contexts, $ K_{Z_t} = [k_{\sZ}(z_{a_{\tau}}, z_{a_{\tau^{\prime}}})]_{\tau,\tau^{\prime} =1}^{t}$ be the associated $ t \times t $ kernel matrix based on arm similarity, and $ K_Z = [k_{\sZ}(z_a, z_{a})]_{a =1}^{N}$ be the $ N \times N $ arm/task similarity matrix between N arms, where $ x_ {a_{\tau},\tau}$ is the observed context and $ z_{a_{\tau}} $ is the associated arm descriptor.  Using eqn. (\ref{eq:ProductKernel}), we can write $\tilde{K}_t = K_{Z_t} \odot K_{X_t} $. We rearrange the sequence of $ x_{a_{\tau},\tau}$ to get $[x_{a,\tau}]_{a = 1,\tau = (t+1)_a}^N $ such that elements $ (a-1)n_t $ to $ a n_t$ belong to arm $a$. Define   $\tilde{K}_t^r, K_{X_t}^r$ and $K_{Z_t}^r$ to be the rearranged kernel matrices based on the re-ordered set  $ [x_{a,\tau}]_{a = 1,\tau = (t+1)_a}^N $. 
Notice that we can write $ \tilde{K}_t^r = (K_Z \otimes \mathbbm{1}_{n_t}\mathbbm{1}_{n_t}^T ) \odot  K_{X_t}^r$ and the eigenvalues $\lambda(\tilde{K}_t) $ and $ \lambda(\tilde{K}_t^r) $ are equal. 
To summarize, we have 
\begin{eqnarray}
\label{eq:KernelMatrixDef}
\nonumber \tilde{K}_t &=& K_{Z_t} \odot K_{X_t} \\ 
\lambda(\tilde{K}_t) &=& \lambda\Big( (K_Z \otimes \mathbbm{1}_{n_t}\mathbbm{1}_{n_t}^T ) \odot  K_{X_t}^r \Big).
\end{eqnarray}

\begin{thm}
	\label{thm:RBInterpret1}
	Let the rank of  matrix $K_{X_{T+1}}$ be $r_x $ and the rank of matrix $ K_Z $ be $ r_z$. Then
	$\log(g([T])) \leq \ r_z r_x \log\Big( \frac{(T+1)c_{\tilde{k}}+ \lambda}{\lambda } \Big)$
\end{thm}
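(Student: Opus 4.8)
The plan is to work directly with the eigenvalue product
\[
\log(g([T])) = \sum_{t=1}^{T+1} \log\frac{\lambda_t + \lambda}{\lambda},
\]
and to control this sum by separately bounding (i) how many of the eigenvalues $\lambda_t$ are nonzero, and (ii) how large each nonzero eigenvalue can be. For every zero eigenvalue the corresponding summand is $\log 1 = 0$, so only the nonzero eigenvalues contribute, and their number is exactly $\rank(\tilde{K}_{T+1})$. Hence the whole argument reduces to showing $\rank(\tilde{K}_{T+1}) \leq r_z r_x$ together with the fact that every eigenvalue is at most $(T+1)c_{\tilde{k}}$.

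For the rank bound I would use the rearranged representation in (\ref{eq:KernelMatrixDef}). Since rearranging the data amounts to conjugating by a permutation matrix, $\rank(\tilde{K}_{T+1}) = \rank(\tilde{K}_{T+1}^r)$, where $\tilde{K}_{T+1}^r = (K_Z \otimes \mathbbm{1}_{n}\mathbbm{1}_{n}^T) \odot K_{X_{T+1}}^r$. The key linear-algebra fact is that the rank of a Hadamard product is at most the product of the ranks, $\rank(A \odot B) \leq \rank(A)\,\rank(B)$, which follows by writing each factor as a sum of rank-one terms and observing that $A \odot B$ is then a sum of at most $\rank(A)\,\rank(B)$ rank-one terms. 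Applying this gives
\[
\rank(\tilde{K}_{T+1}^r) \leq \rank\!\big(K_Z \otimes \mathbbm{1}_{n}\mathbbm{1}_{n}^T\big)\cdot \rank\!\big(K_{X_{T+1}}^r\big).
\]
I then use the Kronecker identity $\rank(A \otimes B) = \rank(A)\,\rank(B)$ with $\rank(\mathbbm{1}_{n}\mathbbm{1}_{n}^T) = 1$ to obtain $\rank(K_Z \otimes \mathbbm{1}_{n}\mathbbm{1}_{n}^T) = r_z$, and permutation-invariance again to get $\rank(K_{X_{T+1}}^r) = \rank(K_{X_{T+1}}) = r_x$. This yields $\rank(\tilde{K}_{T+1}) \leq r_z r_x$.

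For the size of each eigenvalue, I would bound $\lambda_1 \leq \trace(\tilde{K}_{T+1}) = \sum_{\tau=1}^{T+1}\tilde{k}(\tilde{x}_{a_\tau,\tau},\tilde{x}_{a_\tau,\tau}) \leq (T+1)c_{\tilde{k}}$, using nonnegativity of the eigenvalues of the PSD matrix $\tilde{K}_{T+1}$ and the assumption $\tilde{k}(\tilde{x},\tilde{x}) \leq c_{\tilde{k}}$. Combining the two bounds, at most $r_z r_x$ of the summands are nonzero and each is at most $\log\frac{(T+1)c_{\tilde{k}}+\lambda}{\lambda}$, so that
\[
\log(g([T])) \leq r_z r_x \log\frac{(T+1)c_{\tilde{k}}+\lambda}{\lambda},
\]
as claimed. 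The only nontrivial ingredient, and thus the step I expect to be the main obstacle, is the Hadamard-product rank inequality; once that is in hand, the remainder is routine bookkeeping with permutations, the Kronecker rank identity, and the trace bound.
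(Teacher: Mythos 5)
Your proposal is correct and follows essentially the same route as the paper's proof: bound the number of nonzero eigenvalues of $\tilde{K}_{T+1}$ by $r_z r_x$ via the Hadamard-product rank inequality (the paper's Lemma~\ref{lem:RankHadamard}) and bound each contributing factor via $\trace(\tilde{K}_{T+1}) \leq (T+1)c_{\tilde{k}}$. The only cosmetic difference is that you bound each eigenvalue directly by the trace where the paper first invokes AM--GM and then relaxes $\trace(\tilde{K}_{T+1})/r$ to $\trace(\tilde{K}_{T+1})$, which yields the identical final bound.
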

This means that when the rank of the task similarity matrix is low, which reflects a high degree of inter-task similarity, the regret bound is tighter. For comparison, note that when all tasks are independent, $ r_z = N$ and when all tasks are the same (pooled), then $ r_z = 1$. In the case of Lin-UCB \cite{chu2011contextual} where all arm estimators are assumed to be the same and $k_{\sX} $ is a linear kernel, the regret bound in Theorem \ref{thm:RB} evaluates to $\tilde{O}(\sqrt{d T})$, where $d$ is the dimension of the context space. In the original Lin-UCB algorithm \cite{li2010contextual} where all arm estimators are different, the regret bound would be $ \tilde{O}(\sqrt{N d T}) $. 

We can further comment on $g([T]) $ when all distinct tasks (arms) are similar to each other with task similarity equal to $ \mu $. Thus define $ K_Z(\mu)  :=  (1 - \mu) I_{N} + \mu \mathbbm{1}_N\mathbbm{1}_N^T$ and $ \tilde{K}^r_t(\mu) = (K_Z(\mu) \otimes \mathbbm{1}_{n_t}\mathbbm{1}_{n_t}^T ) \odot  K_{X_t}^r $.  	

\begin{thm}
	\label{thm:RBInterpret2}
	Let $g_{\mu}([T]) = \frac{	\det(\tilde{K}^r_{T+1}(\mu) + \lambda I )}{\lambda^{T+1}}$. 
	If $ \mu_1 \leq \mu_2$ then $ g_{\mu_1}([T]) \geq g_{\mu_2}([T])$. 
\end{thm}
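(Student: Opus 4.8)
The plan is to turn $\det(\tilde{K}^r_{T+1}(\mu)+\lambda I)$ into an explicit function of $\mu$ and show it is non-increasing on the admissible range $\mu\in[0,1]$ (the range over which $K_Z(\mu)$ is a valid PSD similarity matrix) by a concavity-of-$\log\det$ argument. First I would exploit that $K_Z(\mu)=(1-\mu)I_N+\mu\,\mathbbm{1}_N\mathbbm{1}_N^T$ is \emph{affine} in $\mu$, hence so is the whole matrix. Using $\mathbbm{1}_N\mathbbm{1}_N^T\otimes\mathbbm{1}_{n}\mathbbm{1}_{n}^T=\mathbbm{1}\mathbbm{1}^T$ (the all-ones matrix of the full size) and that Hadamard multiplication by $\mathbbm{1}\mathbbm{1}^T$ is the identity, a short computation gives
\[
\tilde{K}^r_{T+1}(\mu)=(1-\mu)D+\mu\,K_{X_{T+1}}^r = D+\mu O,
\]
where $D:=(I_N\otimes\mathbbm{1}_{n}\mathbbm{1}_{n}^T)\odot K_{X_{T+1}}^r$ is the block-diagonal (within-arm) part of $K_{X_{T+1}}^r$ and $O:=K_{X_{T+1}}^r-D$ is the complementary between-arm part, whose diagonal blocks are all zero. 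Both $D$ and $K_{X_{T+1}}^r$ are PSD, and for $\mu\in[0,1]$ the matrix $\tilde K^r_{T+1}(\mu)$ is PSD by the Schur product theorem, so $M(\mu):=D+\lambda I+\mu O$ is positive definite throughout the interval.

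Next I would set $\phi(\mu):=\log\det M(\mu)$, so that $g_\mu([T])=e^{\phi(\mu)}/\lambda^{T+1}$ and it suffices to prove $\phi$ is non-increasing on $[0,1]$. By Jacobi's formula $\phi'(\mu)=\trace\!\big(M(\mu)^{-1}O\big)$, and differentiating once more,
\[
\phi''(\mu)=-\trace\!\big(M(\mu)^{-1}O\,M(\mu)^{-1}O\big)=-\big\|M(\mu)^{-1/2}O\,M(\mu)^{-1/2}\big\|_F^2\le 0,
\]
where I use that $O$ is symmetric so that $M(\mu)^{-1/2}O\,M(\mu)^{-1/2}$ is symmetric. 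Thus $\phi$ is concave and $\phi'$ is non-increasing.

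The crucial step is to evaluate $\phi'$ at the origin. Since $D+\lambda I$ is block diagonal, so is its inverse; and $O$ has all diagonal blocks equal to zero, so every diagonal entry of $(D+\lambda I)^{-1}O$ vanishes and $\phi'(0)=\trace\!\big((D+\lambda I)^{-1}O\big)=0$. Combining this with concavity yields $\phi'(\mu)\le\phi'(0)=0$ for all $\mu\ge 0$, so $\phi$ is non-increasing on $[0,1]$. Dividing by the fixed positive constant $\lambda^{T+1}$, I conclude $g_{\mu_1}([T])\ge g_{\mu_2}([T])$ whenever $\mu_1\le\mu_2$, which is the claim.

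I expect the main obstacle to be pinning down the two structural facts that feed the final inequality: first, verifying carefully that $\tilde K^r_{T+1}(\mu)$ really is the affine interpolation $D+\mu O$ with $O$ supported strictly off the diagonal blocks (this is what makes $\phi'(0)=0$), and second, confirming that $M(\mu)$ stays positive definite across the whole interval so that the $\log\det$ derivative identities are valid. Once the block structure is established, the vanishing of $\trace((D+\lambda I)^{-1}O)$ together with concavity of $\log\det$ finishes the argument; I would also remark that the same analysis shows $\mu=0$ is the unique maximizer of $\phi$, so monotonicity genuinely requires restricting to $\mu\ge 0$, consistent with interpreting $\mu\in[0,1]$ as interpolating between the independent and pooled regimes.
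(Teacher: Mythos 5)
Your proof is correct, and it takes a genuinely different route from the paper's. The paper argues at the level of eigenvalues and majorization: it computes the spectrum of $K_Z(\mu)$ explicitly ($1+\mu(N-1)$ once and $1-\mu$ with multiplicity $N-1$), shows $\lambda(K_Z(\mu_1)) \prec \lambda(K_Z(\mu_2))$ for $\mu_1\le\mu_2$ by comparing partial sums, transports this majorization through the Kronecker and Hadamard products via its appendix lemmas, and finishes with the Schur concavity of $x\mapsto \prod_i(x_i+\lambda)$. You instead work directly with the matrix: the affine decomposition $\tilde K^r_{T+1}(\mu)=D+\mu O$ into a block-diagonal within-arm part and an off-block between-arm part is correct, concavity of $\mu\mapsto\log\det(D+\lambda I+\mu O)$ follows from Jacobi's formula, and the vanishing of $\trace\big((D+\lambda I)^{-1}O\big)$ — a block-diagonal matrix times a matrix with zero diagonal blocks has zero block trace — places the maximum at $\mu=0$, giving monotonicity for $\mu\ge 0$. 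Your approach is self-contained (Schur product theorem plus elementary calculus) and, notably, sidesteps the most delicate step of the paper's argument: passing majorization through the Hadamard product with the common factor $K^r_{X_{T+1}}$, where the cited lemma only compares $\lambda(A\odot B)$ with $\lambda(A)$ rather than two Hadamard products sharing a factor. The paper's approach, in exchange, produces eigenvalue-level majorization statements that feed the same Schur-concavity machinery used elsewhere in its analysis. Both arguments implicitly require $\mu\ge 0$ (yours via $\phi'(0)=0$, the paper's via the assumed ordering of the eigenvalues of $K_Z(\mu)$), consistent with interpreting $\mu$ as a nonnegative task similarity.
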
	

This shows that when there is more task similarity, the regret bound is tighter. 

\subsection{Comparison with CGP-UCB}
CGP-UCB transfers the learning from one task to another by leveraging additional known task-specific context variables  \cite{krause2011contextual}, similar in spirit to KTML-UCB. Indeed, with slight modifications, KMTL-UCB can be viewed as a frequentist analogue of CGP-UCB, and similarly CGP-UCB could be modified to address our setting. Furthermore, the term $ g([T])$ appearing in our regret bound is equivalent to an information gain term used to analyze CGP-UCB. In the agnostic case of CGP-UCB where there is no assumption of a Gaussian prior on decision functions, their regret bound is $O(\log(g([T]))\sqrt{T})$, while their regret bound matches ours when they adopt a GP prior on $f^*$. Thus, our primary contributions with respect to CGP-UCB are to quantify the gains of multi-task learning in the form of Theorems 2 and 3, and a technique for estimating task similarity which is critical for real-world applications. In contrast to our examples given below, the experiments in \cite{krause2011contextual} assume a known task similarity matrix.

\section{Experiments}
We test our algorithm on synthetic data and some multi-class classification datasets. In the case of multi-class datasets, the number of arms $N$ is the number of classes and the reward is $ 1 $ if we predict the correct class, otherwise it is $0$. We separate the data into two parts - validation set and test set. We use all Gaussian kernels and pre-select the bandwidth of kernels using five fold cross-validation on a holdout validation set. Then we run the algorithm on the test set 10 times (with different sequences of streaming data) and report the mean regret. For the synthetic data, we compare Kernel-UCB in the independent setting (Kernel-UCB-Ind) and pooled setting (Kernel-UCB-Pool), KMTL-UCB with known task similarity, and KMTL-UCB-Est which estimates task similarity on the fly. For the real datasets in the multi-class classification setting, we compare Kernel-UCB-Ind and KMTL-UCB-Est. In this case, the pooled setting is not valid because $x_{a,t}$ is the same for all arms (only $ z_a $ differs) and KMTL-UCB is not valid because the task similarity matrix is unknown. We also report the confidence intervals for these results in the appendix.

\subsection{Synthetic News Article Data}
Suppose an agent has access to a pool of articles and their context features. The agent then sees a user along with his/her features for which it needs to recommend an article. Based on user features and article features the algorithm gets a combined context $ x_{a,t}$. 
The user context $ x_{u,t} \in \mathbb{R}^2, \forall t $ is randomly drawn from an ellipse centered at $ (0,0)$ with major axis length $1$ and minor axis length $ 0.5 $. Let $ x_{u,t}[:,1] $ be the minor axis and $ x_{u,t}[:,2]$ be the major axis. Article context $ x_{art,t} $ is any angle $ \theta \in [0,\frac{\pi}{2} ]$. To get the overall summary $ x_{a,t}$ of user and article the user context $ x_{u,t} $ is rotated with $ x_{art,t} $. 
\begin{figure}[h]
	\caption{Synthetic Data}
	\centering
	\includegraphics[height=50mm,width=130mm]{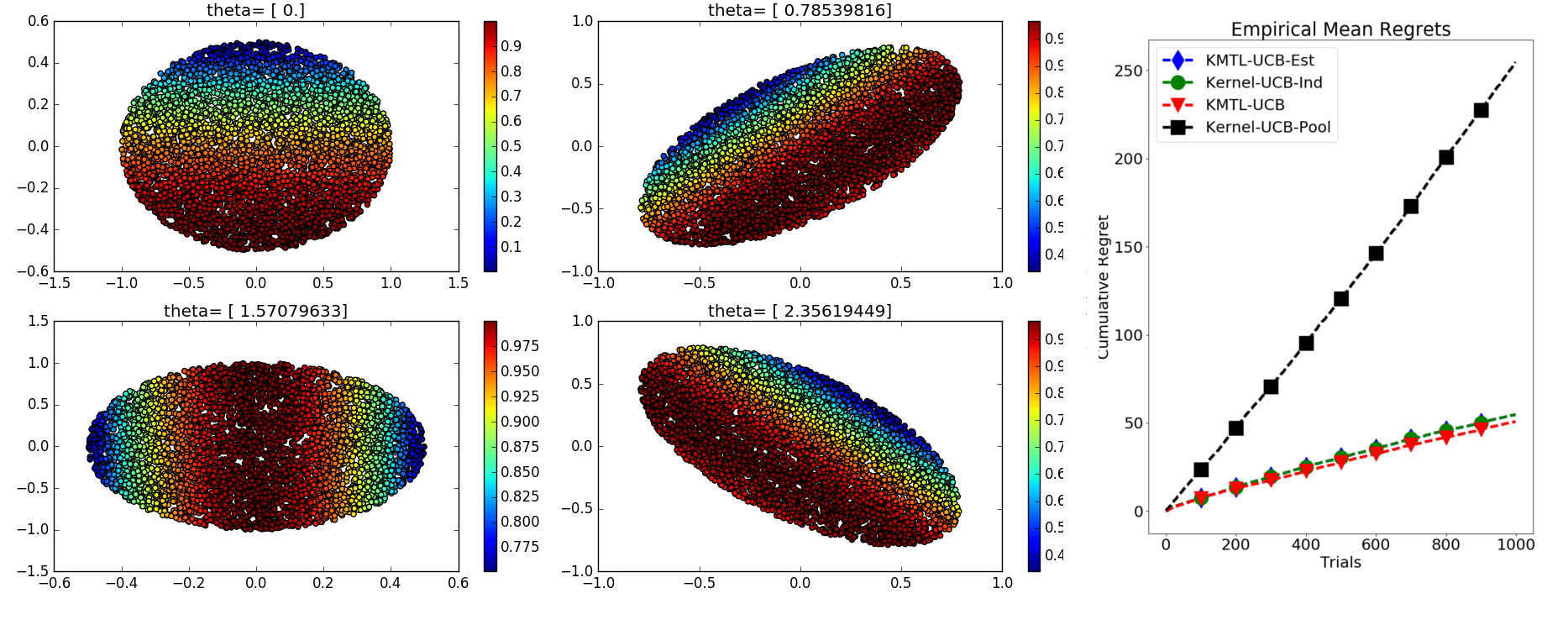}
	\label{fig:synth}
\end{figure}
Rewards for each article are defined based on the minor axis  $r_{a,t} = \Big(1.0 - (x_{u,t}[:,1] - \frac{a}{N} + 0.5)^2 \Big)$.  Figure \ref{fig:synth} shows one such example for $ 4$ different arms. The color code describes the reward, the two axes show the information about user context, and theta is the article context. We take $ N = 5$. For KMTL-UCB, we use a Gaussian kernel on $ x_{art,t}$ to get the task similarity. 

The results of this experiment are shown in Figure \ref{fig:synth}. As one can see, Kernel-UCB-Pool performs the worst. That means for this setting combining all the data and learning a single estimator is not efficient. KMTL-UCB beats the other methods in all 10 runs, and Kernel-UCB-Ind and KMTL-UCB-Est perform equally well. 

\subsection{Multi-class Datasets}
In the case of multi-class classification, each class is an arm and the features of an example for which the algorithm needs to recommend a class are the contexts. We consider the following datasets: Digits ($N = 10, d= 64  $), Letter ($N = 26, d =  16$), MNIST ($N = 10$, $d = 780$ ), Pendigits ($N = 10, d =  16$), Segment ($N = 7, d =  19$) and USPS  ($N = 10, d =  256$). Empirical mean regrets are shown in Figure \ref{fig:real}. KMTL-UCB-Est performs the best in three of the datasets and performs equally well in the other three datasets. Figure \ref{fig:task_sim} shows the estimated task similarity (re-ordered to reveal block structure) and one can see the effect of the estimated task similarity matrix on the empirical regret in Figure \ref{fig:real}. For the Digits, Segment and MNIST datasets, there is significant inter-task similarity. For Digits and Segment datasets, KMTL-UCB-Est is the best in all 10 runs of the experiment while for MNIST, KMTL-UCB-Est is better for all but 1 run.

\begin{figure}[h]
	\caption{Results on Multiclass Datasets - Empirical Mean Regret}
	\centering
	\includegraphics[height=90mm,width=135mm]{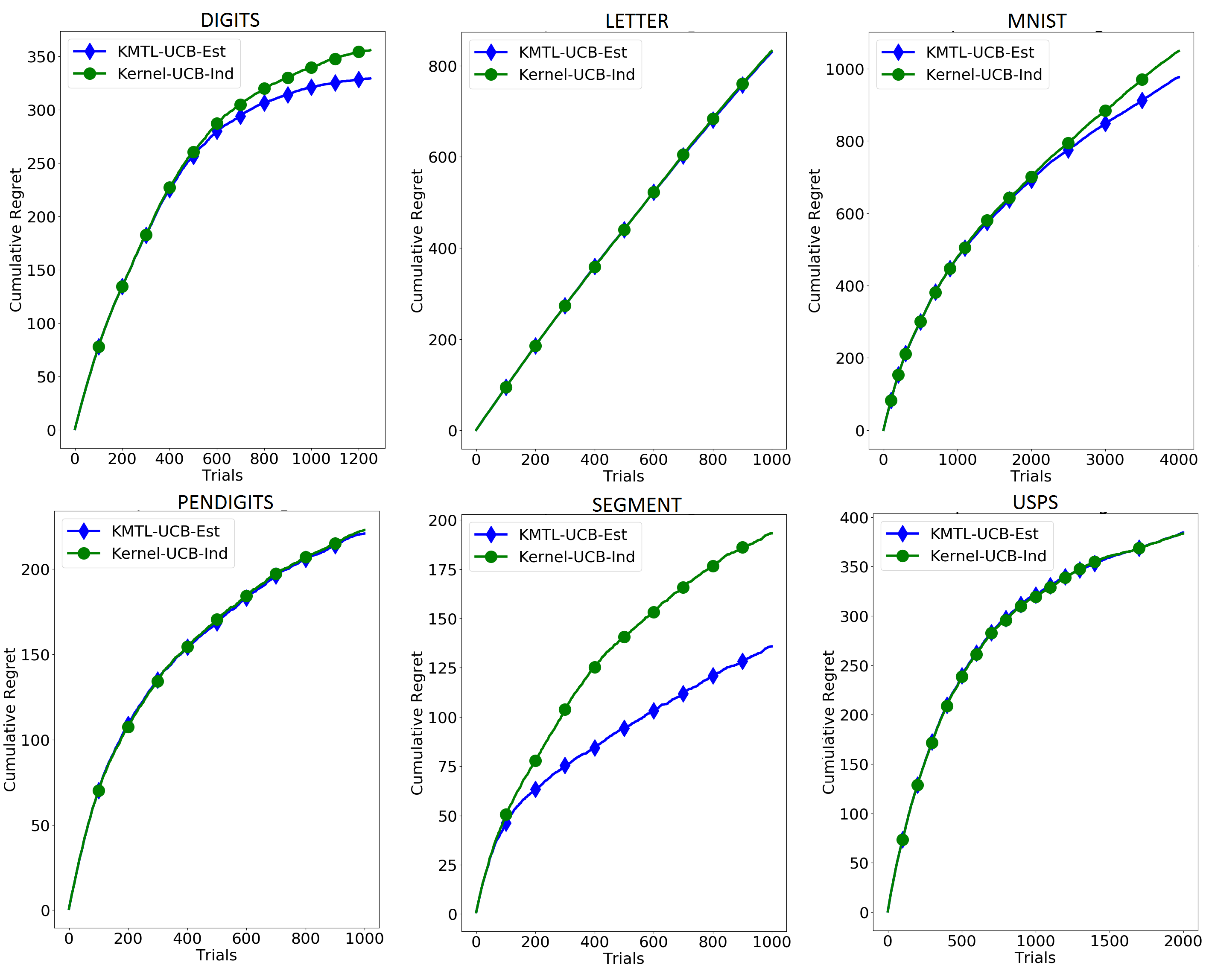}
	\label{fig:real}
\end{figure}

\begin{figure}[h]
	\caption{Estimated Task Similarity for Real Datasets}
	\centering
	\includegraphics[height=30mm,width=150mm]{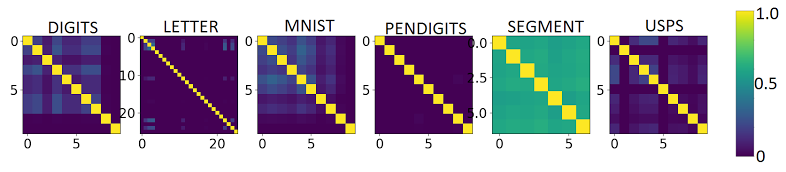}
	\label{fig:task_sim}
\end{figure}

\section{Conclusions and future work}
We present a multi-task learning framework in the contextual bandit setting and describe a way to estimate task similarity when it is not given. We give theoretical analysis, interpret the regret bound, and support the theoretical analysis with extensive experiments. In the appendix we establish a lower bound on the UCB width, and argue that it decreases as task similarity increases. 

Our proposal to estimate the task similarity matrix using the arm similarity space $\sZ = P_{\sX}$ can be extended in different ways. For example, we could also incorporate previously observed rewards into $\sZ$. This would alleviate a potential problem with our approach, namely, that some contexts may have been selected when they did not yield a high reward. Additionally, by estimating the task similarity matrix, we are estimating arm-specific information. In the case of multiclass classification, $k_{\sZ}$ reflects information that represents the various classes. A natural extension is to incorporate methods for representation learning into the MTL bandit setting.

\acks{This work was supported in part by NSF grant 1422157.}

\newpage

\appendix

\section{KMTL Ridge Regression}
Let $n_{a,t}$ be the number of times the algorithm has selected arm $a$ up and including time $t$ so that $\sum_{a = 1}^{N} n_{a,t}= t$. Define sets $t_a = \{\tau < t: a_{\tau} = a\} $, where $ a_{\tau}$ is the arm selected at time $ \tau$. Notice that $|t_a| = n_{a,t-1} $ for all $a$. 
We solve the following problem at time $t$:

\begin{equation}
\label{eq:KMTLRidge}
\hat{f}_t =  \operatorname*{arg\,min}_{f \in \sH_{\tilde{k}}}  \frac{1}{N}\sum_{a = 1}^N \frac{1}{n_{a,t-1}}\sum_{\tau \in {t}_a} (f(\tilde{x}_{a,\tau})  - r_{a,\tau})^2 + \lambda \| f\|_{\sH_{\tilde{k}}}^2,
\end{equation}

where $\tilde{x}_{a,\tau} $ is augmented context and $ r_{a,\tau }$ is the reward of arm $a$ selected at time $ \tau$. We can minimize (\ref{eq:KMTLRidge}) by solving a variant of kernel ridge regression.
Applying the representer theorem \cite{steinwart2008support} the optimal $f$ can be expressed as $ f = \sum_{a^{\prime}=1 }^N \sum_{\tau^{\prime} \in t_a} \alpha_{a^{\prime}\tau^{\prime}} \tilde{k}(\cdot, \tilde{x}_{a^{\prime},\tau^{\prime}}) $. Plugging this in, we have the objective function 

\begin{eqnarray*}
	\nonumber	J(f) &=& \frac{1}{N}\sum_{a = 1}^N \frac{1}{n_{a,t-1}}\sum_{\tau \in t_a} (\sum_{a^{\prime}=1}^N \sum_{\tau^{\prime} \in t_a} \alpha_{a^{\prime}\tau^{\prime}} \tilde{k}(\tilde{x}_{a,\tau}, \tilde{x}_{a^{\prime},\tau^{\prime}})  - r_{a,\tau})^2 + \lambda \| f\|_{\sH_{\tilde{k}}}^2 \\
	\nonumber	&=&   ( y_{t-1} - \tilde{K}_{t-1}\alpha )^T \eta_{t-1} (y_{t-1} - \tilde{K}_{t-1} \alpha) + \lambda \alpha^T \tilde{K}_{t-1} \alpha \\
	\nonumber	&=&  y_{t-1}^T \eta_{t-1} y_{t-1} - y_{t-1}^T \eta_{t-1} \tilde{K}_{t-1} \alpha - \alpha^T \tilde{K}_{t-1} \eta_{t-1} y_{t-1}\\  
	\nonumber & & + \alpha^T \tilde{K}_{t-1} \eta_{t-1} \tilde{K}_{t-1} \alpha + \lambda \alpha^T \tilde{K}_{t-1} \alpha. \\
\end{eqnarray*}

Taking the gradient, we have
\begin{eqnarray*}
	\nonumber	\frac{\partial J}{\partial \alpha} &=& -2\tilde{K}_{t-1} \eta_{t-1} y_{t-1} + 2 \tilde{K}_{t-1} \eta_{t-1} \tilde{K}_{t-1} \alpha + 2 \lambda \tilde{K}_{t-1} \alpha = 0. \\
\end{eqnarray*}

Solving for $\alpha $ yields
\begin{eqnarray*}
	\nonumber  \alpha &=& (\eta_{t-1} \tilde{K}_{t-1} + \lambda I)^{-1}\eta_{t-1} y_{t-1}, \\
\end{eqnarray*}

which implies
\begin{eqnarray}
\label{eq:fOpt}
\hat{f}_t(\tilde{x}) &=& \tilde{k}_{t-1}(\tilde{x})^T(\eta_{t-1} \tilde{K}_{t-1} + \lambda I)^{-1}\eta_{t-1} y_{t-1}. 
\end{eqnarray}

Here $\tilde{K}_{t-1}$ is the $ (t-1) \times (t-1) $ kernel matrix on the augmented data $ [\tilde{x}_{a_{\tau},\tau}]_{\tau =1}^{t-1} $, $ \tilde{k}_{t-1}(\tilde{x}) = [\tilde{k}(\tilde{x}, \tilde{x}_{a_\tau,\tau})]_{\tau = 1}^{t-1}  $ is a vector of kernel evaluations between $ \tilde{x} $ and the past data, $y_{t-1} =  [r_{a_{\tau},\tau}]_{\tau = 1}^{t-1}$ are all observed labels or rewards and $ \eta_{t-1} $ is the $ (t-1) \times (t-1)$ diagonal matrix $ \eta_{t-1} = \diag [\frac{1}{n_{a_{\tau} }} ]_{\tau = 1}^{t-1} $.

We can also derive the solution without using the representer theorem. Let $ \phi $ be a feature map associated with kernel $\tilde{k} $. Let

\begin{equation}
\label{eq:KMTLRidgeFeatureSpace}
\hat{\theta} =  \operatorname*{arg\,min}_{\theta }  \frac{1}{N}\sum_{a = 1}^N \frac{1}{n_{a,t-1}}\sum_{\tau \in t_a} (\phi(\tilde{x}_{a,\tau})^T\theta - r_{a,\tau})^2 + \lambda \| \theta \|^2.
\end{equation}

Minimizing eqn. (\ref{eq:KMTLRidgeFeatureSpace}) over $\theta$ gives,

\begin{equation}
\label{eq:thetaOpt}
\hat{\theta}_t = D_{t-1}^{-1} \Phi^T_{t-1} \eta_{t-1} y_{t-1},
\end{equation}

where $D_{t-1} = (\Phi^T_{t-1} \eta_{t-1} \Phi_{t-1} +  \lambda I )$, $\Phi_t = [\phi(\tilde{x}_{a_{\tau},\tau})^T]_{\tau = 1}^t  \in \mathbb{R}^{t \times  \tilde{d}} $ and $\tilde{d}$ is the dimension of feature space $\phi(x) $. The equivalence between eqn. (\ref{eq:fOpt}) and (\ref{eq:thetaOpt}) follows from the matrix inversion lemma. 

\section{Upper Confidence Bound}

\begin{lemma}
	\label{lem:KMTLUCBound}
	Suppose the rewards $ [r_{a_{\tau},\tau}]_{\tau = 1}^T$ are independent random variables with means $\mathbb{E}[r_{a_{\tau},\tau}|x_{a_\tau,\tau}] = \phi(\tilde{x}_{a_{\tau},\tau})^T\theta^* $, where $ \|\theta^*\| \leq c$. Let $\alpha =  \sqrt{\frac{\log(2TN/\delta)}{2}} $ and $ \delta > 0$. With probability at least $1- \frac{\delta}{T}$, we have that $\forall a \in [N] $
	
	\[  | \phi(\tilde{x}_{a,t})^T\hat{\theta}_t - \phi(\tilde{x}_{a,t})^T\theta^*| \leq (\alpha +c \sqrt{\lambda}) s_{a,t}, \]
	
	where $ s_{a,t} = \sqrt{\phi(\tilde{x}_{a,t})^T D_t^{-1}{\phi(\tilde{x}_{a,t})}} $. 
\end{lemma}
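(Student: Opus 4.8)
The plan is to follow the standard ridge-regression confidence-bound argument (as in Valko et al.), splitting the prediction error into a deterministic bias term controlled by the regularization together with $\|\theta^*\|\le c$, and a stochastic term controlled by Hoeffding's inequality. To keep the notation light I would write $\phi := \phi(\tilde{x}_{a,t})$, $\Phi := \Phi_{t-1}$, $\eta := \eta_{t-1}$ and $D := \Phi^T\eta\Phi + \lambda I$, so that $\hat{\theta}_t = D^{-1}\Phi^T\eta\, y_{t-1}$ and $s_{a,t} = \sqrt{\phi^T D^{-1}\phi}$. I would first write $y_{t-1} = \Phi\theta^* + \epsilon$, where $\epsilon_\tau = r_{a_\tau,\tau} - \phi(\tilde{x}_{a_\tau,\tau})^T\theta^*$ are independent, zero-mean, and (using $r_{a_\tau,\tau}\in[0,1]$) supported on an interval of length one. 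Substituting this and using $\Phi^T\eta\Phi = D - \lambda I$, a short calculation yields the clean decomposition
\[
\phi^T(\hat{\theta}_t - \theta^*) = -\lambda\,\phi^T D^{-1}\theta^* + \phi^T D^{-1}\Phi^T\eta\,\epsilon,
\]
so it suffices to bound the two summands by $c\sqrt{\lambda}\,s_{a,t}$ and $\alpha\,s_{a,t}$ respectively.

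For the bias term, Cauchy--Schwarz in the $D^{-1}$ inner product gives $\lambda|\phi^T D^{-1}\theta^*| \le \lambda\sqrt{\phi^T D^{-1}\phi}\,\sqrt{\theta^{*T}D^{-1}\theta^*} = \lambda\, s_{a,t}\sqrt{\theta^{*T}D^{-1}\theta^*}$, and since $D \succeq \lambda I$ we have $D^{-1}\preceq \lambda^{-1}I$, whence $\theta^{*T}D^{-1}\theta^* \le \|\theta^*\|^2/\lambda \le c^2/\lambda$; this bounds the bias term by $c\sqrt{\lambda}\,s_{a,t}$. For the stochastic term I would write it as $b^T\epsilon$ with $b := \eta\Phi D^{-1}\phi$. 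Because the $\epsilon_\tau$ are independent and each ranges over an interval of length one, Hoeffding's inequality gives $|b^T\epsilon| \le \alpha\|b\|$ with probability at least $1 - \delta/(TN)$, the constant $\alpha = \sqrt{\log(2TN/\delta)/2}$ being precisely what makes $2\exp(-2\alpha^2) = \delta/(TN)$.

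The key remaining step---and I expect this to be the main obstacle---is to show $\|b\| \le s_{a,t}$, i.e. $\phi^T D^{-1}\Phi^T\eta^2\Phi D^{-1}\phi \le \phi^T D^{-1}\phi$. Conjugating by $D^{1/2}$, this reduces to the operator inequality $\Phi^T\eta^2\Phi \preceq D = \Phi^T\eta\Phi + \lambda I$, which I would establish by observing that each diagonal entry of $\eta$ equals $1/n_{a_\tau}\le 1$ (an arm is selected at least once before it contributes a row), so $\eta \preceq I$ and therefore $\eta^2 \preceq \eta$; hence $\Phi^T\eta^2\Phi \preceq \Phi^T\eta\Phi \preceq D$. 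Combining the bias and stochastic bounds gives $|\phi^T(\hat{\theta}_t - \theta^*)| \le (\alpha + c\sqrt{\lambda})\,s_{a,t}$ for a fixed arm with probability at least $1 - \delta/(TN)$, and a union bound over the $N$ arms finishes the proof with probability at least $1 - \delta/T$.
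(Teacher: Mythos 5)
Your proposal is correct and follows essentially the same route as the paper's proof: the same decomposition into a $\lambda\,\phi^T D^{-1}\theta^*$ bias term and a $\phi^T D^{-1}\Phi^T\eta\,\epsilon$ noise term, the same key inequality $\Phi^T\eta^2\Phi \preceq \Phi^T\eta\Phi \preceq D$ (via $\eta\preceq I$) to show $\|b\|\le s_{a,t}$, and a union bound over arms. The only cosmetic differences are that the paper invokes McDiarmid's inequality where you use Hoeffding (identical for a linear function of independent bounded rewards) and bounds the bias term by Cauchy--Schwarz in the Euclidean rather than the $D^{-1}$ inner product.
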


\begin{proof}
	Proof of this theorem is similar to proof of Lemma 1 in \cite{chu2011contextual}. 
	For simplicity we write $ D_{t-1} = D, \Phi_{t-1} = \Phi$, $ y_{t-1} = y$ and $\eta_{t-1} = \eta $. Now
	\begin{eqnarray*}
		\phi(\tilde{x}_{a,t})^T\hat{\theta}_t - \phi(\tilde{x}_{a,t})^T\theta^*	&=& \phi(\tilde{x}_{a,t})^TD^{-1}\Phi^T \eta y - \phi(\tilde{x}_{a,t})^TD^{-1}D\theta^* \\
		&=& \phi(\tilde{x}_{a,t})^TD^{-1}\Phi^T \eta y - \phi(\tilde{x}_{a,t})^TD^{-1}(\Phi^T \eta \Phi + \lambda I )\theta^* \\
		&=& \phi(\tilde{x}_{a,t})^TD^{-1}\Phi^T \eta y - \phi(\tilde{x}_{a,t})^TD^{-1}(\Phi^T \eta \Phi\theta^*  + \lambda \theta^* )\\
		&=& \phi(\tilde{x}_{a,t})^TD^{-1}\Phi^T \eta ( y - \Phi\theta^* ) - \phi(\tilde{x}_{a,t})^TD^{-1}\lambda \theta^*.
	\end{eqnarray*}
	
	Therefore
	\begin{eqnarray*}
		| \phi(\tilde{x}_{a,t})^T\hat{\theta}_t -\phi(\tilde{x}_{a,t})^T\theta^* | & \leq & | \phi(\tilde{x}_{a,t})^TD^{-1}\Phi^T \eta ( y - \Phi \theta^* )| + \|\theta^*\| \|\phi(\tilde{x}_{a,t})^TD^{-1}\lambda\| \\
		& \leq & 	| \phi(\tilde{x}_{a,t})^TD^{-1}\Phi^T \eta ( y - \Phi \theta^* )| +c  \lambda || \phi(\tilde{x}_{a,t})^TD^{-1} ||\\
	\end{eqnarray*}
	
	where the first inequality is due to Cauchy-Schwarz. 
	
	Now we know that $ \mathbb{E} y = \mathbb{E}  [r_{a_{\tau},\tau}]_{\tau = 1,...,t-1} = \Phi\theta^* \implies \mathbb{E} [y - \Phi\theta^*] = 0$. Let $f(y^1,...,y^{t-1}) = |\phi(\tilde{x}_{a,t})^TD^{-1}\Phi^T \eta ( y - \Phi\theta^* )|  $ and vector $V = \phi(\tilde{x}_{a,t})^TD^{-1}\Phi^T \eta$. Then $$| f(y^1,...y^i,...,y^{t-1}) - f(y^1,...\hat{y}^i,...,y^{t-1}) | = |V_i (y^i - \hat{y}^i) | \leq |V_i|. $$ That means any component $ y_i $ can change $f(y^1,...,y^{t-1})$ by at most $|V_i|$.  
	
	Using statistical independence of all random variables $ r_{a_{\tau},\tau} $ in a vector $ y $ and using McDiarmid's Inequality:
	
	\begin{eqnarray*}
		P ( |\phi(\tilde{x}_{a,t})^TD^{-1}\Phi^T \eta ( y - \Phi\theta^*)| \geq \alpha s_{a,t} ) & \leq & 2 \exp (- \frac{2 \alpha^2 s_{a,t}^2}{\| V \| ^2})\\
		& \leq & 2 \exp (-2\alpha^2) \\
		&=& \frac{\delta}{TN} \\
	\end{eqnarray*}
	
	where the second inequality is due to
	\begin{eqnarray*}
		s_{a,t}^2 &=&  \phi(\tilde{x}_{a,t})^TD^{-1}\phi(\tilde{x}_{a,t}) \\
		&=&  \phi(\tilde{x}_{a,t})^TD^{-1}(\Phi^T \eta \Phi +  \lambda I)D^{-1}\phi(\tilde{x}_{a,t}) \\
		& \geq &  \phi(\tilde{x}_{a,t})^TD^{-1}\Phi^T \eta  \Phi D^{-1}\phi(\tilde{x}_{a,t}) \\
		& \geq &  \phi(\tilde{x}_{a,t})^TD^{-1}\Phi^T \eta^2  \Phi D^{-1}\phi(\tilde{x}_{a,t}) \\
		&=&  \|  \eta \Phi D^{-1}\phi(\tilde{x}_{a,t}) \|^2 \\
		&=& \| V \| ^2.
	\end{eqnarray*}

	Now applying the union bound we can see that, with probability at least  $ 1 - \frac{\delta}{T}$, $ \forall a \in [N]$
	
	\begin{equation*}
	|\phi(\tilde{x}_{a,t})^TD^{-1}\Phi^T \eta ( y - \Phi\theta^*_a )| \leq \alpha s_{a,t}.
	\end{equation*}

	Bounding the second term:
	\begin{eqnarray*}
		c \lambda || \phi(\tilde{x}_{a,t})^TA_a^{-1} || &=&c \lambda  \sqrt{\phi(\tilde{x}_{a,t})^TD^{-1}ID^{-1}\phi(\tilde{x}_{a,t})} \\
		&\leq&c \sqrt{\lambda}\sqrt{\phi(\tilde{x}_{a,t})^TD^{-1}( \lambda I + \Phi^T \Phi)D^{-1}\phi(\tilde{x}_{a,t})} \\
		&=&c \sqrt{\lambda} \sqrt{ \phi(\tilde{x}_{a,t})^TD^{-1}\phi(\tilde{x}_{a,t})}\\
		&=& c \sqrt{\lambda} s_{a,t}.
	\end{eqnarray*}
\end{proof}

We kernelize $s_{a,t}$ in the following result. 

\subsection{Proof of Lemma 1 In Main Paper} 
\begin{proof}
	We use Lemma \ref{lem:KMTLUCBound} to get the width and then kernelize it using techniques in \cite{valko2013finite}. Note that $\Phi \phi(\tilde{x}) =  \tilde{k}_{t-1}(\tilde{x}) $. When $\tilde{x} = \tilde{x}_{a,t}$, we write $ \tilde{k}_{a,t} = \tilde{k}_{t-1}(\tilde{x}_{a,t})$. For simplicity we write $ \eta_{t-1} = \eta$ and $\Phi_{t-1} = \Phi$. 	
	Since the matrices $ (\Phi^T \eta \Phi + \lambda I)$, $ (\eta \Phi\Phi^T + \lambda I)$ are regularized, they are strictly positive definite and hence their inverses are defined. Observe that
	\begin{eqnarray}
	\label{Eq:kernelizationFirstStep}
	(\Phi^T \eta \Phi + \lambda I)\Phi^T &=& \Phi^T (\eta \Phi\Phi^T + \lambda I)
	\end{eqnarray}  
	by associative property of matrix multiplication and 
	\begin{eqnarray}
	\label{Eq:kernelizationSecondStep}
	\Phi^T (\eta \Phi\Phi^T + \lambda I)^{-1} &=& (\Phi^T \eta \Phi + \lambda I)^{-1} \Phi^T
	\end{eqnarray}
	by multiplication of $(\Phi^T \eta \Phi + \lambda I)^{-1}$ and $(\eta \Phi\Phi^T + \lambda I)^{-1}$ on both sides. 
	Also observe that
	\begin{eqnarray}
	\label{eq:kernelizationThirdStep}
	\nonumber (\Phi^T \eta \Phi + \lambda I)\phi(\tilde{x}_{a,t}) &=& (\Phi^T \eta \tilde{k}_{a,t} + \lambda \phi(\tilde{x}_{a,t}) )
	\end{eqnarray}
	by associative property of matrix multiplication and using  $\Phi \phi(\tilde{x}_{a,t}) =  \tilde{k}_{a,t}$. 
	Multiplying on the left by $(\Phi^T \eta \Phi + \lambda I)^{-1}$,
	\begin{eqnarray}
	\nonumber \phi(\tilde{x}_{a,t})  &=& (\Phi^T \eta \Phi + \lambda I)^{-1} (\Phi^T \eta \tilde{k}_{a,t} + \lambda \phi(\tilde{x}_{a,t}) ) \\
	\nonumber  &=&  (\Phi^T \eta \Phi + \lambda I)^{-1} \Phi^T \eta \tilde{k}_{a,t} + \lambda (\Phi^T \eta \Phi + \lambda I)^{-1} \phi(\tilde{x}_{a,t}) \\
	&=&  	\Phi^T (\eta \Phi\Phi^T + \lambda I)^{-1} \eta \tilde{k}_{a,t} + \lambda (\Phi^T \eta \Phi + \lambda I)^{-1} \phi(\tilde{x}_{a,t})
	\end{eqnarray}
	where the last step is due to eqn. (\ref{Eq:kernelizationSecondStep}).  
	
	Multiplying both sides of eqn. (\ref{eq:kernelizationThirdStep}) by $\phi(\tilde{x}_{a,t})^T$ we get,
	
	\begin{eqnarray*}
		\phi(\tilde{x}_{a,t})^T \phi(\tilde{x}_{a,t}) &=&  \tilde{k}_{a,t}^T (\eta \Phi\Phi^T + \lambda I)^{-1}\eta \tilde{k}_{a,t} + \lambda \phi(\tilde{x}_{a,t})^T(\Phi^T\eta\Phi + \lambda I)^{-1}\phi(\tilde{x}_{a,t})
	\end{eqnarray*}
	or, equivalently, 
	\begin{eqnarray*}  
		\tilde{k}(\tilde{x}_{a,t},\tilde{x}_{a,t}) &=& \tilde{k}_{a,t}^T (\eta \tilde{K}_{t-1} + \lambda I)^{-1}\eta \tilde{k}_{a,t}^T  + \lambda s_{a,t}^2.
	\end{eqnarray*}

	By rearranging terms, we get  
	\begin{equation}
	\label{Eq:UCB_Kernel}
	s_{a,t} =\lambda^{-1/2} \sqrt{\tilde{k}(\tilde{x}_{a,t},\tilde{x}_{a,t}) - \tilde{k}_{a,t}^T (\eta_{t-1} \tilde{K}_{t-1} + \lambda I)^{-1}\eta_{t-1} \tilde{k}_{a,t} }.
	\end{equation}
\end{proof} 

\section{UCB Width}
In this subsection we establish a lower bound on the UCB width. To simplify the analysis we consider a problem:
\begin{equation}
\label{eq:KTLRidgeModified}
\hat{f}_t =  \operatorname*{arg\,min}_{f \in \sH_{\tilde{k}}}  \frac{1}{N}\sum_{a = 1}^N \sum_{\tau \in {t}_a} (f(\tilde{x}_{a,\tau})  - r_{a,\tau})^2 + \lambda \| f\|_{\sH_{\tilde{k}}}^2,
\end{equation}
as $\frac{1}{n_{a,t-1}} $ obscures the analysis. In this case
$ s_{a,t} = \lambda^{-1/2} \sqrt{\tilde{k}(\tilde{x}_{a,t},\tilde{x}_{a,t}) - \tilde{k}_{a,t}^T (\tilde{K}_{t-1} + \lambda I)^{-1} \tilde{k}_{a,t} }$.
Let $ ( \odot ) $ denote the Hadamard product and $ ( \otimes ) $ denote the Kronecker product.

\begin{lemma}
	\label{schurbound} \textnormal{\cite{linreversed}}
	Let A be a positive definite matrix partitioned according to	\[
	A =
	\left[
	\begin{array}{c|c}
	A_{11} & A_{12} \\
	\hline
	A_{21} & A_{22}
	\end{array}
	\right]
	.\] Then  
	\[ A_{22} \geq A_{22} - A_{12}^TA_{11}^{-1}A_{12} \geq \frac{4\lambda_{\max} \lambda_{\min}}{\Big(\lambda_{\max} + \lambda_{\min}\Big)^2}A_{22} \]
	where $\lambda_{\max}$ and $ \lambda_{\min}$ are the maximum and minimum eigenvalues of $A$ and $ A \geq B $ means $ A - B$ is a positive semidefinite matrix.    
\end{lemma}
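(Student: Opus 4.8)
The plan is to treat the two inequalities separately and to recognize the right-hand (\emph{reversed}) bound as a matrix Wielandt inequality. Throughout write $S := A_{22} - A_{12}^T A_{11}^{-1} A_{12}$ for the Schur complement and $\rho := \frac{\lambda_{\max} - \lambda_{\min}}{\lambda_{\max}+\lambda_{\min}}$, and record the algebraic identity
\[
1 - \rho^2 \;=\; \frac{(\lambda_{\max}+\lambda_{\min})^2 - (\lambda_{\max}-\lambda_{\min})^2}{(\lambda_{\max}+\lambda_{\min})^2} \;=\; \frac{4\lambda_{\max}\lambda_{\min}}{(\lambda_{\max}+\lambda_{\min})^2},
\]
so the target constant is exactly $1-\rho^2$. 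The left inequality $A_{22} \geq S$ is immediate: since $A$ is positive definite, its principal submatrix $A_{11}$ is positive definite, hence $A_{11}^{-1}$ is positive definite and $A_{12}^T A_{11}^{-1} A_{12} \geq 0$, which is precisely what $A_{22} \geq S$ asserts.

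For the right inequality I would reduce the matrix statement $S \geq (1-\rho^2)A_{22}$ to the scalar claim that, for every conformable vector $v$,
\[
v^T A_{12}^T A_{11}^{-1} A_{12} v \;\leq\; \rho^2\, v^T A_{22} v .
\]
This suffices, since subtracting it from $v^T A_{22} v$ gives $v^T S v \geq (1-\rho^2)\,v^T A_{22} v$ for all $v$, i.e.\ the desired ordering. To establish the scalar bound I would pass to two orthogonal test vectors. Fix $v$ and set, in block form, $x = \bigl[\begin{smallmatrix} 0 \\ v \end{smallmatrix}\bigr]$ and $y = \bigl[\begin{smallmatrix} A_{11}^{-1} A_{12} v \\ 0 \end{smallmatrix}\bigr]$; these lie in the two complementary coordinate subspaces, so $x^T y = 0$. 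A direct computation gives $\beta := x^T A x = v^T A_{22} v$ and $\gamma := x^T A y = y^T A y = v^T A_{12}^T A_{11}^{-1} A_{12} v$. The scalar Wielandt inequality $(x^T A y)^2 \leq \rho^2 (x^T A x)(y^T A y)$ then reads $\gamma^2 \leq \rho^2 \beta \gamma$; if $\gamma = 0$ the bound is vacuous, and otherwise dividing by $\gamma$ yields $\gamma \leq \rho^2 \beta$, which is exactly the claim.

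It then remains to justify the Wielandt inequality itself, which I would obtain by compression to a two-dimensional subspace. Since $x \perp y$, the pair $\{x/\|x\|,\, y/\|y\|\}$ is an orthonormal basis of $\mathrm{span}(x,y)$, and the compression $B$ of $A$ to this subspace is a $2\times 2$ positive definite matrix whose eigenvalues $\mu_1 \geq \mu_2$ lie in $[\lambda_{\min},\lambda_{\max}]$ by Cauchy eigenvalue interlacing. Writing $B = \bigl[\begin{smallmatrix} a & b \\ b & d \end{smallmatrix}\bigr]$, so that $(x^TAy)^2/\bigl((x^TAx)(y^TAy)\bigr) = b^2/(ad)$, one has $b^2 = ad - \det B = ad - \mu_1\mu_2$, and AM--GM gives $ad \leq \bigl(\tfrac{\mu_1+\mu_2}{2}\bigr)^2$; hence $b^2/(ad) \leq (\mu_1-\mu_2)^2/(\mu_1+\mu_2)^2$. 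Because this last expression is increasing in the ratio $\mu_1/\mu_2 \leq \lambda_{\max}/\lambda_{\min}$, it is bounded by $\rho^2$, completing the argument.

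The main obstacle is precisely this reversed bound: the left inequality is a one-line positivity fact, whereas the sharp constant $1-\rho^2$ requires the Wielandt inequality together with the monotonicity step that forces the extremal case to occur at the two extreme eigenvalues $\lambda_{\min}$ and $\lambda_{\max}$. (Since the lemma is quoted from \cite{linreversed}, an alternative is simply to cite the Wielandt inequality rather than re-derive it via compression; I have sketched the self-contained route above so that the constant is transparent.)
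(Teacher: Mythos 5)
Your proof is correct. Note, however, that the paper does not prove this lemma at all: it is stated as an imported result, cited from \cite{linreversed}, and used as a black box in the proof of Lemma~\ref{lem:BoundOnWidth}. What you have written is therefore a self-contained reconstruction of the cited result rather than an alternative to an argument in the paper. Your route is the standard one and all the steps check out: the left inequality is indeed just $A_{11}^{-1}\succ 0$; the block test vectors $x=(0,v)$, $y=(A_{11}^{-1}A_{12}v,0)$ correctly give $x^TAx=v^TA_{22}v$ and $x^TAy=y^TAy=v^TA_{12}^TA_{11}^{-1}A_{12}v$, so the scalar Wielandt inequality collapses to exactly the quadratic-form bound $v^TA_{12}^TA_{11}^{-1}A_{12}v\le\rho^2\,v^TA_{22}v$ needed for the Loewner ordering; and the two-dimensional compression argument (interlacing, $b^2=ad-\mu_1\mu_2$, AM--GM on $ad$ with $a+d=\mu_1+\mu_2$, then monotonicity in $\mu_1/\mu_2$) is a valid proof of the scalar Wielandt inequality with the sharp constant. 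The only cosmetic omission is the degenerate case $v=0$ or $A_{12}v=0$, where one of the test vectors vanishes and the compression is not two-dimensional; you already note the bound is vacuous when $\gamma=0$, which covers it. If you wanted to match the reference more closely you could instead invoke the matrix Wielandt inequality directly with the block coordinate injections $X=(0,I)^T$, $Y=(I,0)^T$, which yields $A_{12}^TA_{11}^{-1}A_{12}\le\rho^2A_{22}$ in one line without the vector-by-vector reduction, but your version has the advantage of making the constant's provenance completely transparent.
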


\begin{lemma}
	\label{lem:EigMaxSchur} \textnormal{\cite{rajarama2015submajorization}}
	Let $D,C$ be positive semidefinite matrices. Any eigenvalue $ \lambda(D \odot C)$ of $D \odot C$ satisfies   \[ \lambda(D \odot C) \leq \lambda_{\max}(D \odot C) \leq |\max_i d_{ii} | \lambda_{max}(C) \] and
	\[ |\min_i d_{ii}| \lambda_{min}(C) \leq  \lambda_{\min}(D \odot C) \leq \lambda(D \odot C).  \] 
\end{lemma}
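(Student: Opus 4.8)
The two outer inequalities are immediate, since by the variational characterization of the spectrum $\lambda_{\min}(D \odot C) \leq \lambda(D \odot C) \leq \lambda_{\max}(D \odot C)$ for every eigenvalue $\lambda(D \odot C)$. The real content is therefore the pair of bounds $\lambda_{\max}(D \odot C) \leq (\max_i d_{ii})\lambda_{\max}(C)$ and $\lambda_{\min}(D \odot C) \geq (\min_i d_{ii})\lambda_{\min}(C)$, and the plan is to prove both at once by controlling the quadratic form $x^T(D \odot C)x$ for an arbitrary unit vector $x$ and then invoking the Rayleigh quotient characterization of the extreme eigenvalues.

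The key idea is to exploit the positive semidefiniteness of $D$ through a factorization. Since $D \succeq 0$, I would write $D = B^T B$ with columns $B = [b_1, \dots, b_n]$, so that $d_{ij} = b_i^T b_j$ and in particular $d_{ii} = \|b_i\|^2$. Expanding the Hadamard quadratic form gives $x^T(D\odot C)x = \sum_{i,j} x_i x_j d_{ij} C_{ij} = \sum_{i,j} x_i x_j C_{ij} \sum_k B_{ki} B_{kj}$. The crucial step is to interchange the order of summation and regroup, rewriting this as $\sum_k y_k^T C y_k$, where $y_k$ is the vector with entries $(y_k)_i = x_i B_{ki}$. This expresses the Hadamard quadratic form as a sum of quadratic forms in $C$ alone, which is exactly where the Schur-product structure does the work.

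With this identity established, the two bounds follow by sandwiching each inner form via $\lambda_{\min}(C)\|y_k\|^2 \leq y_k^T C y_k \leq \lambda_{\max}(C)\|y_k\|^2$, both valid because $C \succeq 0$. Summing over $k$ and using $\sum_k \|y_k\|^2 = \sum_i x_i^2 \sum_k B_{ki}^2 = \sum_i x_i^2 \|b_i\|^2 = \sum_i x_i^2 d_{ii}$, I obtain $\lambda_{\min}(C)\sum_i x_i^2 d_{ii} \leq x^T(D\odot C)x \leq \lambda_{\max}(C)\sum_i x_i^2 d_{ii}$. Since $x$ is a unit vector, $\sum_i x_i^2 d_{ii}$ is a convex combination of the diagonal entries and hence lies between $\min_i d_{ii}$ and $\max_i d_{ii}$; combined with the nonnegativity of $\lambda_{\min}(C)$ and $\lambda_{\max}(C)$ this yields $(\min_i d_{ii})\lambda_{\min}(C) \leq x^T(D\odot C)x \leq (\max_i d_{ii})\lambda_{\max}(C)$, and taking the infimum and supremum over unit $x$ finishes both inequalities.

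The only genuine subtlety — and the main obstacle — is the regrouping step that produces $\sum_k y_k^T C y_k$; everything after it is bookkeeping. One might instead try to realize $D \odot C$ as the principal submatrix of $D \otimes C$ indexed by the ``diagonal'' pairs $(i,i)$ and apply Cauchy eigenvalue interlacing, but that route delivers only the weaker factor $\lambda_{\max}(D)$ (respectively $\lambda_{\min}(D)$) in place of $\max_i d_{ii}$ (respectively $\min_i d_{ii}$), since $\max_i d_{ii} \leq \lambda_{\max}(D)$ for $D \succeq 0$. The factorization argument is therefore what is needed to obtain the sharper diagonal constants claimed here.
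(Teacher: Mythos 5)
The paper does not prove this lemma; it is stated as an imported result with a citation, so there is no in-paper argument to compare against. Your proof is correct and is the standard argument for Schur's eigenvalue bounds on the Hadamard product: the factorization $D=B^TB$, the regrouping $x^T(D\odot C)x=\sum_k y_k^TCy_k$ with $(y_k)_i=x_iB_{ki}$, and the identity $\sum_k\|y_k\|^2=\sum_i x_i^2 d_{ii}$ are all verified correctly, and you rightly flag that nonnegativity of $\lambda_{\min}(C)$ and $\lambda_{\max}(C)$ is needed to pass from the convex combination $\sum_i x_i^2 d_{ii}$ to the extreme diagonal entries. Your side remark is also accurate: the interlacing route through $D\otimes C$ only yields the weaker constants $\lambda_{\max}(D)$ and $\lambda_{\min}(D)$, whereas the factorization gives the sharper diagonal constants the lemma claims (the absolute values in the statement are redundant since $d_{ii}\geq 0$ for $D\succeq 0$).
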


\begin{lemma}
	\label{lem:EigKron} \textnormal{\cite{horn2012matrix}}
	Let $D \in \mathbb{R}^{n \times n}$ and $ C \in \mathbb{R}^{m \times m} $. Any eigenvalue $ \lambda(D \otimes C)$ of $D \otimes C \in \mathbb{R}^{nm \times nm}$ is equal to the product of an eigenvalue of $ D $ and an eigenvalue of $ C$. 
\end{lemma}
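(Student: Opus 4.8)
The plan is to reduce the problem to upper-triangular matrices via Schur's unitary triangularization, working over $\mathbb{C}$ since the eigenvalues of the real matrices $D$ and $C$ may be complex. In this triangular form the eigenvalues lie on the diagonal and the action of the Kronecker product becomes transparent.

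First I would invoke Schur's theorem to write $D = U T_D U^*$ and $C = V T_C V^*$, where $U$ and $V$ are unitary and $T_D, T_C$ are upper triangular with the eigenvalues of $D$ and $C$ on their respective diagonals. The central algebraic tool is the mixed-product property $(A \otimes B)(E \otimes F) = (AE) \otimes (BF)$, which I would apply to obtain
\[
D \otimes C = (U T_D U^*) \otimes (V T_C V^*) = (U \otimes V)(T_D \otimes T_C)(U^* \otimes V^*).
\]
I would then check that $U \otimes V$ is unitary: combining the same mixed-product property with $(U \otimes V)^* = U^* \otimes V^*$ gives $(U \otimes V)(U \otimes V)^* = (UU^*) \otimes (VV^*) = I \otimes I = I$. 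Hence $D \otimes C$ is unitarily similar to $T_D \otimes T_C$ and shares its eigenvalues.

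It then remains to read off the spectrum of $T_D \otimes T_C$. Because the Kronecker product of two upper-triangular matrices is again upper triangular, its eigenvalues are precisely its diagonal entries, and these are exactly the pairwise products of the diagonal entries of $T_D$ with those of $T_C$, i.e., products of an eigenvalue of $D$ with an eigenvalue of $C$. Counting gives all $nm$ eigenvalues of $D \otimes C$ in this form, which is the claim.

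The hard part is obtaining every eigenvalue with the correct multiplicity rather than merely exhibiting that products of eigenvalues are eigenvalues. The tempting shortcut---taking eigenvectors $Dx = \mu x$ and $Cy = \nu y$ and noting $(D \otimes C)(x \otimes y) = \mu\nu\,(x \otimes y)$---only produces eigenvalues of product form and can fail to account for all of them when $D$ or $C$ is defective, since the tensor vectors $x \otimes y$ need not span $\mathbb{C}^{nm}$. Routing through the Schur form avoids this difficulty uniformly, which is why I would adopt it.
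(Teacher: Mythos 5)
Your proof is correct. The paper does not actually prove this lemma---it is stated as a known fact with a citation to Horn and Johnson---so there is no in-paper argument to compare against; your Schur-triangularization route is precisely the standard proof given in that reference (reduce $D$ and $C$ to upper triangular form, use the mixed-product property to see that $D\otimes C$ is unitarily similar to $T_D\otimes T_C$, and read the $nm$ diagonal entries off the resulting upper triangular matrix). Your closing remark is also well taken: the naive eigenvector argument $(D\otimes C)(x\otimes y)=\mu\nu\,(x\otimes y)$ does not by itself account for multiplicities when $D$ or $C$ is defective, and the Schur form is the clean way to get the full spectrum with the correct count.
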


We assume that $ n_{a,t} = \frac{t}{N}$ after time $t$ to get interpretibility (this is not needed for the general regret bound that we prove in Theorem 1 in main paper). For simplicity define $ n_t = n_{a,t}$. Let $ ( \odot ) $ denote the Hadamard product, $ ( \otimes ) $ denote the Kronecker product and $\mathbbm{1}_{n} \in R^{n} $ be the vector of ones. Let $ K_{X_t} = [k_{\sX}(x_{a_{\tau},\tau}, x_{a_{\tau^{\prime}},\tau^{\prime}} )]_{\tau,\tau^{\prime} =1}^{t} $ be the $ t \times t $ kernel matrix on contexts, $ K_{Z_t} = [k_{\sZ}(z_{a_{\tau}}, z_{a_{\tau^{\prime}}})]_{\tau,\tau^{\prime} =1}^{t}$ be the associated $ t \times t $ kernel matrix based on arm similarity, and $ K_Z = [k_{\sZ}(z_a, z_{a})]_{a =1}^{N}$ be the $ N \times N $ arm similarity matrix between N arms, where $ x_ {a_{\tau},\tau}$ is observed context and $ z_{a_{\tau}} $ is an associated arm descriptor. Using the definition of tilde{k}, $\tilde{k}\Big((z, x),(z^{\prime},x^{\prime})\Big) = k_{\sZ}(z,z^{\prime})k_{\sX}(x,x^{\prime})$, we can write $\tilde{K}_t = K_{Z_t} \odot K_{X_t} $. We rearrange a sequence of $ x_{a_{\tau},\tau}$ to get $[x_{a,\tau}]_{a = 1,\tau = (t+1)_a}^N $ such that elements $ (a-1)n_t $ to $ a n_t$ belong to arm $a$. Define   $\tilde{K}_t^r, K_{X_t}^r$ and $K_{Z_t}^r$ be rearranged kernel matrices based on the re-ordered set  $ [x_{a,\tau}]_{a = 1,\tau = (t+1)_a}^N $. 
Notice that we can write $ \tilde{K}_t^r = (K_Z \otimes \mathbbm{1}_{n_t}\mathbbm{1}_{n_t}^T ) \odot  K_{X_t}^r$ and the eigenvalues $\lambda(\tilde{K}_t) $ and $ \lambda(\tilde{K}_t^r) $ are equal. 
To summarize, we have 
\begin{eqnarray}
\nonumber \tilde{K}_t &=& K_{Z_t} \odot K_{X_t}
\end{eqnarray}
and
\begin{eqnarray}
\label{eq:KernelMatrixDef}
\lambda(\tilde{K}_t) &=& \lambda\Big( (K_Z \otimes \mathbbm{1}_{n_t}\mathbbm{1}_{n_t}^T ) \odot  K_{X_t}^r \Big).
\end{eqnarray}

\begin{lemma}
	\label{lem:BoundOnWidth}
	Assume $ \tilde{k}(\tilde{x},\tilde{x}) \leq c_{\tilde{k}}, \forall \tilde{x} \in \sXt,$ and let $ \tilde{K}_t$ be the final product kernel matrix and $K_Z$ be the task similarity matrix. Also write 
	\[ \tilde{K}_{t} + \lambda I_{t} =
	\left[
	\begin{array}{c|c}
	\tilde{K}_{t-1} + \lambda I_{t-1} &\tilde{k}_{a,t} \\
	\hline
	\tilde{k}_{a,t}^T & \tilde{k}(\tilde{x}_{a,t},\tilde{x}_{a,t}) + \lambda 
	\end{array}
	\right].
	\]
	Then \begin{equation}
	L_{s_{a,t}} = \frac{4 n c_{\tilde{k}} \lambda_{\max} (K_Z) + \lambda}{\Big(n c_{\tilde{k}} \lambda_{\max} (K_Z) + 2\lambda \Big)^2} \Big(\tilde{k}(\tilde{x}_{a,t},\tilde{x}_{a,t}) + \lambda \Big) -1   \leq s_{a,t}^2 \leq  \frac{c_{\tilde{k}}}{\lambda}.
	\end{equation}
\end{lemma}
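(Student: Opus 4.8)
The plan is to treat the two inequalities separately: the upper bound follows instantly from positive-definiteness, while the lower bound is obtained by recognizing the quantity $\lambda(s_{a,t}^2+1)$ as a Schur complement and then applying Lemmas~\ref{schurbound}--\ref{lem:EigKron}. For the upper bound I would start from
\[
s_{a,t}^2 = \lambda^{-1}\Big(\tilde{k}(\tilde{x}_{a,t},\tilde{x}_{a,t}) - \tilde{k}_{a,t}^T(\tilde{K}_{t-1}+\lambda I)^{-1}\tilde{k}_{a,t}\Big).
\]
Since $\tilde{K}_{t-1}+\lambda I \succ 0$, its inverse is positive definite, so the quadratic form is nonnegative and may be dropped, giving $s_{a,t}^2 \leq \lambda^{-1}\tilde{k}(\tilde{x}_{a,t},\tilde{x}_{a,t}) \leq c_{\tilde{k}}/\lambda$ by the boundedness assumption.

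For the lower bound the key observation is that, with $A := \tilde{K}_t + \lambda I_t$ partitioned as in the statement (so $A_{11}=\tilde{K}_{t-1}+\lambda I_{t-1}$, $A_{12}=\tilde{k}_{a,t}$, $A_{22}=\tilde{k}(\tilde{x}_{a,t},\tilde{x}_{a,t})+\lambda$), the Schur complement equals
\[
A_{22}-A_{12}^T A_{11}^{-1}A_{12} = \tilde{k}(\tilde{x}_{a,t},\tilde{x}_{a,t})+\lambda-\tilde{k}_{a,t}^T(\tilde{K}_{t-1}+\lambda I)^{-1}\tilde{k}_{a,t}=\lambda(s_{a,t}^2+1).
\]
Applying Lemma~\ref{schurbound} to $A$ then yields
\[
\lambda(s_{a,t}^2+1)\ \geq\ \frac{4\lambda_{\max}(A)\lambda_{\min}(A)}{\big(\lambda_{\max}(A)+\lambda_{\min}(A)\big)^2}\big(\tilde{k}(\tilde{x}_{a,t},\tilde{x}_{a,t})+\lambda\big),
\]
reducing the problem to controlling the extreme eigenvalues of $A$.

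Since $A=\tilde{K}_t+\lambda I_t$ with $\tilde{K}_t\succeq 0$, I have $\lambda_{\min}(A)=\lambda_{\min}(\tilde{K}_t)+\lambda\geq\lambda$ and $\lambda_{\max}(A)=\lambda_{\max}(\tilde{K}_t)+\lambda$. To bound $\lambda_{\max}(\tilde{K}_t)$ I would pass to the rearranged matrix $\tilde{K}_t^r=(K_Z\otimes\mathbbm{1}_{n}\mathbbm{1}_{n}^T)\odot K_{X_t}^r$, which by \eqref{eq:KernelMatrixDef} shares the eigenvalues of $\tilde{K}_t$. Lemma~\ref{lem:EigMaxSchur}, applied to the Hadamard factors $K_{X_t}^r$ and $K_Z\otimes\mathbbm{1}_n\mathbbm{1}_n^T$, bounds $\lambda_{\max}(\tilde{K}_t^r)$ by $\max_i (K_{X_t}^r)_{ii}$ times $\lambda_{\max}(K_Z\otimes\mathbbm{1}_n\mathbbm{1}_n^T)$; Lemma~\ref{lem:EigKron} factors the latter as $\lambda_{\max}(K_Z)\,\lambda_{\max}(\mathbbm{1}_n\mathbbm{1}_n^T)=n\,\lambda_{\max}(K_Z)$, and the kernel bound gives $\max_i(K_{X_t}^r)_{ii}\leq c_{\tilde{k}}$, so $\lambda_{\max}(A)\leq n c_{\tilde{k}}\lambda_{\max}(K_Z)+\lambda$. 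Finally I would use that the factor $\tfrac{4\lambda_{\max}\lambda_{\min}}{(\lambda_{\max}+\lambda_{\min})^2}$ is nondecreasing in $\lambda_{\min}$ and nonincreasing in $\lambda_{\max}$ on $\{\lambda_{\max}\geq\lambda_{\min}\}$; substituting the extremes $\lambda_{\min}=\lambda$ and $\lambda_{\max}=n c_{\tilde{k}}\lambda_{\max}(K_Z)+\lambda$ can only shrink the factor, and then dividing by $\lambda$ and subtracting $1$ delivers $L_{s_{a,t}}$.

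The main obstacle I anticipate is this last pair of steps: invoking the Hadamard and Kronecker eigenvalue lemmas on the rearranged matrix so as to extract exactly the factor $n$ and the bound $c_{\tilde{k}}$, and then rigorously justifying the monotonicity substitution so that plugging the extreme eigenvalues into the Schur-complement factor produces a genuine \emph{lower} bound rather than an uncontrolled approximation. (A careful bookkeeping of the $\lambda$-terms in the resulting numerator should also be checked against the stated constant.)
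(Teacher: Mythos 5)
Your proposal is correct and follows essentially the same route as the paper's own proof: both bounds come from the Kantorovich-type Schur-complement inequality (Lemma~\ref{schurbound}), with the extreme eigenvalues of $\tilde{K}_t+\lambda I$ controlled via the Hadamard bound (Lemma~\ref{lem:EigMaxSchur}) and the Kronecker factorization (Lemma~\ref{lem:EigKron}) on the rearranged matrix, followed by the same monotonicity substitution. Your direct observation that $\lambda_{\min}(\tilde{K}_t+\lambda I)\geq\lambda$ is a slightly cleaner shortcut than the paper's route through the zero eigenvalue of $\mathbbm{1}_{n}\mathbbm{1}_{n}^T$, and your closing caveat about the $\lambda$-bookkeeping in the numerator is warranted (the derivation yields $4(nc_{\tilde{k}}\lambda_{\max}(K_Z)+\lambda)$, which dominates the stated $4nc_{\tilde{k}}\lambda_{\max}(K_Z)+\lambda$, so the lemma as stated still holds).
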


\begin{proof}

	Using Lemma \ref{schurbound}, 
	
	\begin{eqnarray*}
		\tilde{k}(\tilde{x}_{a,t},\tilde{x}_{a,t}) + \lambda - \tilde{k}_{a,t}^T ( \tilde{K}_{t-1} + \lambda I_{t-1})^{-1} \tilde{k}_{a,t}   &\leq&    \tilde{k}(\tilde{x}_{a,t},\tilde{x}_{a,t}) + \lambda. \\
	\end{eqnarray*}
	
	Subtracting $ \lambda $ from both sides,
	\begin{eqnarray*}
		\lambda s_{a,t}^2  &\leq&  \tilde{k}(\tilde{x}_{a,t},\tilde{x}_{a,t})
	\end{eqnarray*} 
	and therefore
	\begin{eqnarray*}
		s_{a,t}^2 &\leq& \frac{c_{\tilde{k}}}{\lambda}.
	\end{eqnarray*} 	
	
	This proves the upper bound. Again by using Lemma \ref{schurbound},
	\begin{equation*}
	\tilde{k}(\tilde{x}_{a,t},\tilde{x}_{a,t}) + \lambda - \tilde{k}_{a,t}^T ( \tilde{K}_{t-1} + \lambda I_{t-1})^{-1} \tilde{k}_{a,t}  \geq   \frac{4\lambda_{\max}(\tilde{K}_t + \lambda I_t) \lambda_{\min}(\tilde{K}_t + \lambda I_t)}{\Big(\lambda_{\max}(\tilde{K}_t + \lambda I_t) + \lambda_{\min}(\tilde{K}_t + \lambda I_t)\Big)^2} \Big(\tilde{k}(\tilde{x}_{a,t},\tilde{x}_{a,t}) + \lambda \Big) 
	\end{equation*}

	Notice that the right hand side of the above equation is a monotonically decreasing function of $\frac{\lambda_{\max}}{\lambda_{\min}} $.  Then
	\begin{eqnarray*}
		\lambda s_{a,t}^2 + \lambda &\geq&  \frac{4\lambda_{\max}(\tilde{K}_t + \lambda I_t) \lambda_{\min}(\tilde{K}_t + \lambda I_t)}{\Big(\lambda_{\max}(\tilde{K}_t + \lambda I_t) + \lambda_{\min}(\tilde{K}_t + \lambda I_t)\Big)^2} \Big(\tilde{k}(\tilde{x}_{a,t},\tilde{x}_{a,t}) + \lambda \Big) \\
		& = &  \frac{\frac{4 \lambda_{\max} (\tilde{K}_t) + \lambda}{ \lambda_{\min}(\tilde{K}_t ) + \lambda }}{\Big(\frac{ \lambda_{\max} (\tilde{K}_t) + \lambda }{ \lambda_{\min}(\tilde{K}_t) + \lambda  } + 1\Big)^2} \Big(\tilde{k}(\tilde{x}_{a,t},\tilde{x}_{a,t}) + \lambda \Big) \\ 
		& = &  \frac{\frac{4 \lambda_{\max} (\tilde{K}_t^r) + \lambda}{ \lambda_{\min}(\tilde{K}_t^r ) + \lambda }}{\Big(\frac{ \lambda_{\max} (\tilde{K}_t^r) + \lambda }{ \lambda_{\min}(\tilde{K}_t^r) + \lambda  } + 1\Big)^2} \Big(\tilde{k}(\tilde{x}_{a,t},\tilde{x}_{a,t}) + \lambda \Big) \\ 
		&\geq&  \frac{\frac{4 c_{\tilde{k}} \lambda_{\max} (K_{Z_t}^r) + \lambda}{\min_i K_{X_t}^r(ii) \lambda_{\min } (K_{Z_t}^r) + \lambda }}{\Big(\frac{c_{\tilde{k}} \lambda_{\max} (K_{Z_t}^r) + \lambda }{\min_i K_{X_t}^r(ii) \lambda_{\min } (K_{Z_t}^r) + \lambda} + 1 \Big)^2} \Big(\tilde{k}(\tilde{x}_{a,t},\tilde{x}_{a,t}) + \lambda \Big) 
	\end{eqnarray*}     
	where $K_{X_t}^r(ii)$ are the diagonal elements of $K_{X_t}^r $ and the last inequality is due to Lemma \ref{lem:EigMaxSchur}. The smallest eigenvalue of $\mathbbm{1}_{n_t}\mathbbm{1}_{n_t}^T $ is zero and therefore according to Lemma \ref{lem:EigKron}, the smallest eigenvalue of $K_{Z_t}^r$ is zero. This implies 
	
	\begin{eqnarray*}
		\lambda s_{a,t}^2 + \lambda	&\geq&  \frac{\frac{4 n c_{\tilde{k}} \lambda_{\max} (K_{Z_t}^r) + \lambda}{ \lambda }}{\Big(\frac{n c_{\tilde{k}} \lambda_{\max} (K_{Z_t}^r) + \lambda }{ \lambda} + 1 \Big)^2} \Big(\tilde{k}(\tilde{x}_{a,t},\tilde{x}_{a,t}) + \lambda \Big) \\
		&=&  \frac{4 n c_{\tilde{k}} \lambda_{\max} (K_Z) + \lambda}{\Big(n c_{\tilde{k}} \lambda_{\max} (K_Z) + 2\lambda \Big)^2} \Big(\tilde{k}(\tilde{x}_{a,t},\tilde{x}_{a,t}) + \lambda \Big)\lambda 
	\end{eqnarray*}
	where the last equality is again due to  Lemma \ref{lem:EigKron}. Dividing both sides by $ \lambda$ and then subtracting one gives
	\begin{eqnarray*}
		s_{a,t}^2 &\geq &  \frac{4 n c_{\tilde{k}} \lambda_{\max} (K_Z) + \lambda}{\Big(n c_{\tilde{k}} \lambda_{\max} (K_Z) + 2\lambda \Big)^2} \Big(\tilde{k}(\tilde{x}_{a,t},\tilde{x}_{a,t}) + \lambda \Big) -1 
	\end{eqnarray*}
\end{proof}

Theorem  \ref{thm:InterpretWidth} below says that the lower bound on width decreases as task similarity increases. In particular, assume that all distinct tasks are similar to each other with task similarity equal to $ \mu $ and there are $ N $ tasks (arms).  Thus $ K_Z(\mu)  :=  (1 - \mu) I_{N} + \mu \mathbbm{1}_N\mathbbm{1}_N^T$. 

Define \[ L_{s_{a,t}}(\mu) :=   \frac{4 n  c_{\tilde{k}} \lambda_{\max} (K_Z(\mu)) + \lambda}{\Big(n  c_{\tilde{k}} \lambda_{\max} (K_Z(\mu)) + 2\lambda \Big)^2} \Big(\tilde{k}(\tilde{x}_{a,t},\tilde{x}_{a,t}) + \lambda \Big) -1 . \]
\begin{thm}
	\label{thm:InterpretWidth}
	Let $L_{s_{a,t}}$ be the lower bound on width as defined in Lemma \ref{lem:BoundOnWidth}. If $ \mu_1 \leq \mu_2$   then 
	\begin{equation}
	L_{s_{a,t}}({\mu_1}) \geq L_{s_{a,t}}({\mu_2}).
	\end{equation}
\end{thm}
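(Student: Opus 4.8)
The plan is to recognize that $L_{s_{a,t}}(\mu)$ depends on $\mu$ only through the scalar $\lambda_{\max}(K_Z(\mu))$, and to write it as the composition of an \emph{increasing} map $\mu \mapsto \lambda_{\max}(K_Z(\mu))$ with a \emph{decreasing} scalar function. Once both monotonicities are established, the claim $L_{s_{a,t}}(\mu_1) \ge L_{s_{a,t}}(\mu_2)$ for $\mu_1 \le \mu_2$ is immediate, since the multiplicative factor $\tilde{k}(\tilde{x}_{a,t},\tilde{x}_{a,t}) + \lambda$ is strictly positive and the additive $-1$ is irrelevant to monotonicity.

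First I would compute the top eigenvalue of $K_Z(\mu) = (1-\mu) I_N + \mu \mathbbm{1}_N \mathbbm{1}_N^T$. Since $\mathbbm{1}_N \mathbbm{1}_N^T$ has eigenvalues $N$ (once) and $0$ (with multiplicity $N-1$), the matrix $K_Z(\mu)$ has eigenvalues $1 + (N-1)\mu$ and $1 - \mu$; for $\mu \in [0,1]$ the former is the larger, so $\lambda_{\max}(K_Z(\mu)) = 1 + (N-1)\mu$, which is non-decreasing in $\mu$ (strictly increasing once $N \ge 2$, and constant when $N = 1$, in which case the theorem is trivial). Consequently $u(\mu) := n c_{\tilde{k}}\, \lambda_{\max}(K_Z(\mu)) \ge 0$ is non-decreasing in $\mu$.

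Next I would set $h(u) = \dfrac{4(u + \lambda)}{(u + 2\lambda)^2}$, so that $L_{s_{a,t}}(\mu) = h(u(\mu))\bigl(\tilde{k}(\tilde{x}_{a,t},\tilde{x}_{a,t}) + \lambda\bigr) - 1$, and show $h$ is decreasing on $[0,\infty)$ by differentiating:
\[
h'(u) = \frac{4(u+2\lambda)^2 - 8(u+\lambda)(u+2\lambda)}{(u+2\lambda)^4} = \frac{-4u}{(u+2\lambda)^3} \le 0
\]
for every $u \ge 0$ (using $\lambda > 0$). Composing the increasing map $u(\cdot)$ with the decreasing map $h$ shows that $L_{s_{a,t}}(\mu)$ is non-increasing in $\mu$, which is exactly the assertion.

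The computation is essentially routine, so the only real point of care — and the step I expect could trip one up — is the exact algebraic form of the lower bound $L_{s_{a,t}}$: the factor $4$ must multiply the \emph{entire} numerator $n c_{\tilde{k}}\lambda_{\max}(K_Z) + \lambda$ (as produced by substituting $x = (u+\lambda)/\lambda$ into the expression $4x/(x+1)^2$ in the proof of Lemma \ref{lem:BoundOnWidth}), giving numerator $4(u+\lambda)$. This is precisely what makes $h'(u)$ sign-definite, namely $-4u \le 0$; had the numerator instead been $4u + \lambda$, the function $h$ would increase for small $u$ and the monotonicity would fail. Hence I would first confirm this parsing from the derivation of Lemma \ref{lem:BoundOnWidth} before invoking the sign of the derivative.
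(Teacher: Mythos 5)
Your proof is correct and follows the same route as the paper: compute the eigenvalues of $K_Z(\mu)=(1-\mu)I_N+\mu\mathbbm{1}_N\mathbbm{1}_N^T$, observe that $\lambda_{\max}(K_Z(\mu))=1+(N-1)\mu$ is non-decreasing in $\mu$, and conclude because $L_{s_{a,t}}$ is a non-increasing function of $\lambda_{\max}(K_Z(\mu))$. The only difference is that you explicitly verify the latter monotonicity via $h'(u)=-4u/(u+2\lambda)^3\le 0$ (with the correct parsing of the numerator as $4(n c_{\tilde{k}}\lambda_{\max}(K_Z)+\lambda)$, which is what the derivation of Lemma \ref{lem:BoundOnWidth} produces), whereas the paper simply asserts it.
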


\begin{proof}
	
	The eigenvalues of $ K_Z(\mu)  =  (1 - \mu) I_{N} + \mu \mathbbm{1}_N\mathbbm{1}_N^T$ are $ 1 + \mu (N  -1 )$ with multiplicity 1 and $ 1 - \mu $ with multiplicity $N -1$.

	That means $\lambda_{\max} (K_Z(\mu))$ is highest when tasks are more similar and it decreases as task similarity $ \mu $ goes to zero. The theorem follows as $ L_{s_{a,t}(\mu)} $ is a monotonically decreasing function of  $\lambda_{\max} (K_Z(\mu))$
\end{proof}

This is important because if the lower bound on $ s_{a,t} $ is small then we may be more confident about the reward estimates and this may lead to a tighter regret bound.  In the next subsection we discuss the upper bound on regret. 

\section{Regret Analysis}

\begin{algorithm}
	\caption{BaseKMTL-UCB at step $t$}\label{alg:BaseKMTLUCB}
	\begin{algorithmic}[1]
		\STATE \textbf{Input:} $ \alpha \in R_+, c, \lambda, \Psi \subseteq \{ 1,2,...,t-1\} $
		\STATE Get $ \tilde{K}_{\Psi} = \Phi_{\Psi}  \Phi^T_{\Psi} $, where $\Phi_{\Psi} = [\phi(\tilde{x}_{a_{\tau},\tau})^T]_{\tau \in \Psi} $
		\STATE Get $ y_{\Psi} = \Big[r_{a_{\tau},\tau}\Big]_{\tau \in \Psi}$
		\STATE Observe context features at time $ t$: $ x_{a,t}$  for each $a \in N$  
		\STATE Calculate $\tilde{k}_{a,\Psi} =  \Phi^T_{\Psi}\phi(\tilde{x}_{a,t})$ and $\tilde{k}(\tilde{x}_{a,t},\tilde{x}_{a,t})$  for each $a \in N$.
		\FOR{all $a$ at time $t$}
		\STATE $ s_{a,t} =   \lambda^{-1/2} \sqrt{\tilde{k}(\tilde{x}_{a,t},\tilde{x}_{a,t}) - \tilde{k}_{a,\Psi}^T ( \tilde{K}_{\Psi} + \lambda I)^{-1}\tilde{k}_{a,\Psi}}$
		\STATE $ucb_{a,t} \leftarrow   \tilde{k}_{a,\Psi}^T( \tilde{K}_{\Psi} + \lambda I)^{-1} y_{\Psi}  + (\alpha +c\sqrt{\lambda} ) s_{a,t}$
		\ENDFOR
	\end{algorithmic}
\end{algorithm}

\begin{algorithm}
	\caption{SupKMTL-UCB}\label{alg:SupKMTLUCB}
	Using same notation as in \cite{chu2011contextual}: 
	\begin{algorithmic}[1]
		\STATE \textbf{Input:} $ \alpha \in R_+, T \in \mathbb{N} $
		\STATE $ Q \leftarrow \lceil \log T \rceil$
		\STATE $ \Psi_1^q \leftarrow \emptyset $ and $\forall q \in [Q]$.
		\FOR{$t = 1,...,T$}
		\STATE $ q \leftarrow 1$ and $ \hat{A}_1 \leftarrow [N]$
		\REPEAT
		\STATE $ s_{a,t}, ucb_{a,t} \leftarrow $ BaseKMTL-UCB with $ \Psi_t^q $ and $ \alpha $, for all $ a \in \hat{A}_q$
		\STATE $w_{a,t} = (\alpha +c\sqrt{\lambda} ) s_{a,t}$ 
		\IF{$ w_{a,t} \leq \frac{1}{\sqrt{T}}$  for all $ a \in \hat{A}_q$}
		\STATE Choose $ a_t = \argmax_{a \in \hat{A}_q} ucb_{a,t}$
		\STATE $ \Psi_{t+1}^{q^{\prime}} \leftarrow \Psi_{t}^{q^{\prime}}  $ for all $ q^{\prime} \in [Q]$
		\ELSIF{ $ w_{a,t} \leq 2^{-q}$ for all $ a \in \hat{A}_q$  }
		\STATE $ \hat{A}_{q+1} \leftarrow \{ a \in \hat{A}_q | ucb_{a,t} \geq \max_{a^{\prime} \in \hat{A}_q} ucb_{a^{\prime},t} - 2^{1-q}  \}$
		\STATE $ q \leftarrow q + 1$
		\ELSE
		\STATE Choose $ a_t \in \hat{A}_q $ such that $ w_{a_t,t} > 2^{-q}$
		\STATE Update $\Psi_{t+1}^{q} \leftarrow \Psi_{t}^{q} \cup \{t\}   $ and $\forall  q^{\prime} \neq q$, $\Psi_{t+1}^{q^{\prime}} \leftarrow \Psi_{t}^{q^{\prime}} $ 
		\ENDIF
		\UNTIL{$ a_t $ is found}
		\STATE Observe reward $r_{a_t,t} $ 
		\ENDFOR
	\end{algorithmic}
\end{algorithm}

We use the Lemma \ref{lem:zi2009schur}to prove the Lemma \ref{lem:UCB_Determinant_Ineq}

\begin{lemma}[Lemma 1.1 in \cite{zi2009schur}]\label{lem:zi2009schur}
	Let $A \in \mathbb{R}^{n \times n}$ be a positive definite matrix partitioned according to	\[
	A =
	\left[
	\begin{array}{c|c}
	A_{11} & A_{12} \\
	\hline
	A_{12}^T & A_{22}
	\end{array}
	\right]
	.\] where $A_{11} \in  \mathbb{R}^{(n-1) \times (n-1)}, A_{12} \in \mathbb{R}^{(n-1)}$ and $A_{22} \in  \mathbb{R}^1$. Then $\det(A) = \det(A_{11}) (A_{22} - A_{12}^TA_{11}^{-1}A_{12}) $.
\end{lemma}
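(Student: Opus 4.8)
The plan is to prove the identity through a block triangular (Schur complement) factorization of $A$ together with multiplicativity of the determinant. First I would record that, since $A$ is positive definite, its leading principal submatrix $A_{11}$ is positive definite as well, hence invertible; this is exactly what makes $A_{11}^{-1}$, the Schur complement, and the claimed formula well defined. Abbreviating the Schur complement by $s = A_{22} - A_{12}^T A_{11}^{-1} A_{12}$ and setting
\[
L = \begin{pmatrix} I_{n-1} & 0 \\ A_{12}^T A_{11}^{-1} & 1 \end{pmatrix},
\]
I would then verify by a direct block multiplication that
\[
A = L \begin{pmatrix} A_{11} & 0 \\ 0 & s \end{pmatrix} L^T .
\]
The off-diagonal blocks reproduce as $A_{11} A_{11}^{-1} A_{12} = A_{12}$ and its transpose, and the scalar bottom-right entry comes out as $A_{12}^T A_{11}^{-1} A_{12} + s = A_{22}$ by the definition of $s$, so the product equals $A$ exactly.

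Taking determinants of both sides and using multiplicativity gives $\det(A) = \det(L)\,\det(\mathrm{diag}(A_{11},s))\,\det(L^T)$. Since $L$ is lower triangular with every diagonal entry equal to $1$, we have $\det(L) = \det(L^T) = 1$, and since the middle factor is block diagonal its determinant is $\det(A_{11})\cdot s$. Combining these yields precisely $\det(A) = \det(A_{11})\,(A_{22} - A_{12}^T A_{11}^{-1} A_{12})$, which is the claim.

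There is essentially no deep obstacle here, as this is the standard Schur complement determinant identity; the only points needing care are justifying the invertibility of $A_{11}$ (immediate from positive definiteness) and checking the block arithmetic so that the Schur complement lands in the correct position. An equivalent and perhaps cleaner route, which I would mention as an alternative, is to left-multiply $A$ by the elementary block matrix that subtracts $A_{12}^T A_{11}^{-1}$ times the first block row from the last row: this determinant-preserving operation brings $A$ to block upper triangular form with diagonal blocks $A_{11}$ and $s$, whence $\det(A)$ is read off directly as $\det(A_{11})\cdot s$.
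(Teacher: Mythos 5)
Your proof is correct and complete: the factorization $A = L\,\mathrm{diag}(A_{11},s)\,L^{T}$ with $L$ unit lower triangular checks out by direct block multiplication, and taking determinants immediately gives the claim; you also rightly note that positive definiteness of $A$ guarantees $A_{11}$ is invertible so the Schur complement is well defined. The paper itself offers no proof of this lemma --- it is imported as Lemma~1.1 from the cited reference --- so there is nothing to compare against; your argument is the standard one and would serve as a self-contained justification.
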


Using the notations of BaseKMTL-UCB, we write
$ \tilde{K}_{\Psi} = \Phi_{\Psi}  \Phi^T_{\Psi} $ and $\tilde{k}_{a,\Psi} =  \Phi^T_{\Psi}\phi(\tilde{x}_{a,t}) $ where $\Phi_{\Psi} = [\phi(\tilde{x})^T_{a_{\tau},\tau}]_{\tau \in \Psi} $ and $ \Psi \subseteq \{1,...,t-1 \}$. Define

\[ \tilde{K}_{\Psi+1} + \lambda I =
\left[
\begin{array}{c|c}
\tilde{K}_{\Psi} + \lambda I_{|\Psi|} & \tilde{k}_{a,\Psi} \\
\hline
\tilde{k}_{a,\Psi}^T & \tilde{k}(\tilde{x}_{a,t},\tilde{x}_{a,t}) + \lambda 
\end{array}
\right]
\]
Also, define $\tilde{k}_1 = \tilde{k}(\tilde{x}_{a_\sigma,\sigma},\tilde{x}_{a_\sigma,\sigma}) $, where $\sigma $ is the smallest element of $ \Psi$. 

\begin{lemma}
	\label{lem:UCB_Determinant_Ineq}   Using notations in BaseKMTL-UCB and suppose $|\Psi| \geq 2 $. Then 
	\[ {\sum_{\tau \in \Psi} s^2_{a_\tau,\tau}} \leq 2 m \log  g(\Psi), \]
	where $ m = \max(1, \frac{c_{\tilde{k}}}{\lambda})$ and  \[ g(\Psi) = \frac{	\det(\tilde{K}_{\Psi+1} + \lambda I)}{\lambda^{|\Psi|+1} }.\] 
\end{lemma}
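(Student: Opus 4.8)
The plan is to convert the sum of squared widths into a telescoping product of determinants via the Schur-complement identity, and then pass to logarithms with an elementary scalar inequality.

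First I would order the index set as $\Psi = \{\tau_1 < \tau_2 < \cdots < \tau_n\}$ with $n = |\Psi|$, and set $\Psi^{(j)} = \{\tau_1,\dots,\tau_j\}$, so that $\Psi^{(0)} = \emptyset$ and $\Psi^{(n)} = \Psi$. The key structural fact, inherited from the way SupKMTL-UCB grows the sets $\Psi_t^q$, is that when index $\tau_j$ was placed into $\Psi$ the width $s_{a_{\tau_j},\tau_j}$ was computed from exactly the earlier members $\Psi^{(j-1)}$; hence by definition $\lambda s_{a_{\tau_j},\tau_j}^2 = \tilde k(\tilde x_{a_{\tau_j},\tau_j},\tilde x_{a_{\tau_j},\tau_j}) - \tilde k_{a_{\tau_j},\Psi^{(j-1)}}^T(\tilde K_{\Psi^{(j-1)}} + \lambda I)^{-1}\tilde k_{a_{\tau_j},\Psi^{(j-1)}}$. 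Applying Lemma~\ref{lem:zi2009schur} to the bordered matrix $\tilde K_{\Psi^{(j)}} + \lambda I$, whose leading block is $\tilde K_{\Psi^{(j-1)}} + \lambda I$, the corresponding Schur complement equals $\lambda(1 + s_{a_{\tau_j},\tau_j}^2)$, giving the one-step identity
\[
\frac{\det(\tilde K_{\Psi^{(j)}} + \lambda I)}{\det(\tilde K_{\Psi^{(j-1)}} + \lambda I)} = \lambda\big(1 + s_{a_{\tau_j},\tau_j}^2\big).
\]

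Next I would telescope this over $j = 1,\dots,n$, using $\det(\tilde K_{\Psi^{(0)}} + \lambda I) = 1$; the base case $j=1$ is where the Schur complement is simply $\tilde k_1 + \lambda$, which explains the special role of $\tilde k_1$ and the hypothesis $|\Psi|\ge 2$. This yields $\prod_{\tau\in\Psi}(1 + s_{a_\tau,\tau}^2) = \det(\tilde K_\Psi + \lambda I)/\lambda^{|\Psi|}$, hence $\sum_{\tau\in\Psi}\log(1 + s_{a_\tau,\tau}^2) = \log\big(\det(\tilde K_\Psi + \lambda I)/\lambda^{|\Psi|}\big)$. To reach $g(\Psi)$, which is built from the larger matrix $\tilde K_{\Psi+1}$ that also includes the current point at time $t$, I would note that appending one more point multiplies the determinant by its Schur complement $\lambda(1 + s_{a,t}^2) \ge \lambda$ (since $s_{a,t}^2\ge 0$); therefore $\det(\tilde K_\Psi + \lambda I)/\lambda^{|\Psi|} \le \det(\tilde K_{\Psi+1} + \lambda I)/\lambda^{|\Psi|+1} = g(\Psi)$, so that $\sum_{\tau\in\Psi}\log(1 + s_{a_\tau,\tau}^2) \le \log g(\Psi)$.

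Finally I would remove the logarithm by a scalar comparison. Writing $B = c_{\tilde k}/\lambda$, the upper bound $s_{a,t}^2 \le B$ from Lemma~\ref{lem:BoundOnWidth} (a one-line Schur-complement estimate) together with the monotonicity of $x\mapsto x/\log(1+x)$ on $(0,\infty)$ shows that each term obeys $s_{a_\tau,\tau}^2 \le \frac{B}{\log(1+B)}\log(1+s_{a_\tau,\tau}^2)$. A short case analysis then gives $\frac{B}{\log(1+B)} \le 2\max(1,B) = 2m$: for $B\le 1$ this is the inequality $B\le 2\log(1+B)$, and for $B\ge 1$ it reduces to $\log(1+B)\ge 1/2$. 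Summing over $\tau\in\Psi$ and combining with the determinant bound yields $\sum_{\tau\in\Psi} s_{a_\tau,\tau}^2 \le 2m\sum_{\tau\in\Psi}\log(1+s_{a_\tau,\tau}^2) \le 2m\log g(\Psi)$, as claimed.

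I expect the main obstacle to be the bookkeeping in the first step rather than any deep inequality: one must justify that the width recorded at each $\tau\in\Psi$ was genuinely computed against only the earlier elements of $\Psi$, so that the telescoping is exact, and one must correctly account for the off-by-one between the $|\Psi|$ points that contribute widths and the $|\Psi|+1$ points appearing in $g(\Psi)$. The two analytic ingredients, the Schur determinant identity and the constant bound $B/\log(1+B)\le 2m$, are routine by comparison.
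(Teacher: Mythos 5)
Your proof is correct and follows essentially the same route as the paper's: factor $\det(\tilde K_{\Psi+1}+\lambda I)$ into Schur complements of the form $\lambda(1+s^2_{a_\tau,\tau})$ (using the fact that each width in $\Psi$ was computed against exactly the earlier members of $\Psi$), then compare $x$ with $2\max(1,c_{\tilde k}/\lambda)\log(1+x)$. If anything, your explicit telescoping and the inequality $\det(\tilde K_\Psi+\lambda I)/\lambda^{|\Psi|}\le g(\Psi)$ treat the extra row/column of $\tilde K_{\Psi+1}$ more carefully than the paper's displayed equality, and your monotonicity argument for $x/\log(1+x)$ replaces the paper's rescaling trick while yielding the same constant $2m$.
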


\begin{proof}
	Using the Lemma \ref{lem:zi2009schur},
	\begin{eqnarray*}
		\det(\tilde{K}_{\Psi +1 } + \lambda I) &=& (\tilde{k}_1 + \lambda) \prod_{\tau \in \Psi \backslash \{\sigma \}} \lambda (1 +  s_{a_\tau,\tau}^2 )\\
		&=& \lambda (\frac{\tilde{k}_1}{\lambda} + 1) \prod_{\tau \in \Psi \backslash \{\sigma \}} \lambda (1 +  s_{a_\tau,\tau}^2 )\\
		&=& \lambda \prod_{\tau \in \Psi} \lambda (1 +  s_{a_\tau,\tau}^2 ),
	\end{eqnarray*}
	where the last step is because $ s_{a_{\sigma},\sigma}^2 = \frac{k_1}{\lambda} $.
	
	From Lemma \ref{lem:BoundOnWidth}, $ \max s^2_{a_\tau,\tau} = \frac{c_{\tilde{k}}}{\lambda} $. When $\frac{c_{\tilde{k}}}{\lambda} \leq 1$, 
	using $ x \leq 2 \log (1+x), \forall  x \in [0,1] $ , $ s^2_{a_\tau,\tau} \leq 2 \log(1 + s^2_{a_\tau,\tau} )$. In this case,
	
	\begin{eqnarray*}
		{\sum_{\tau \in \Psi} s^2_{a_\tau,\tau}} &\leq & 2 \sum_{\tau \in \Psi} \log(1+  s^2_{a_\tau,\tau}) \\
		& = & 2 \log \prod_{\tau \in \Psi} (1 +  s^2_{a_\tau,\tau} ) \\
		& = & 2 \log \frac{\det(\tilde{K}_{\Psi+1} + \lambda I )}{\lambda^{|\Psi|+1
			}  }.
		\end{eqnarray*}
		
		When $\frac{c_{\tilde{k}}}{\lambda} > 1$,
		\begin{eqnarray*}
			{\sum_{\tau \in \Psi} \frac{c_{\tilde{k}}}{\lambda}  \frac{\lambda}{c_{\tilde{k}}}s^2_{a_\tau,\tau}} &\leq & \frac{2 c_{\tilde{k}}}{\lambda} \sum_{\tau \in \Psi} \log(1+  \frac{\lambda}{c_{\tilde{k}}}s^2_{a_\tau,\tau}) \\
			&\leq &  \frac{2 c_{\tilde{k}}}{\lambda} \sum_{\tau \in \Psi} \log(1+ s^2_{a_\tau,\tau}) \\
			& = &  \frac{2 c_{\tilde{k}}}{\lambda} \log \prod_{\tau \in \Psi} (1 +  s^2_{a_\tau,\tau} ) \\
			& = &  \frac{2 c_{\tilde{k}}}{\lambda} \log \frac{\det(\tilde{K}_{\Psi+1} + \lambda I )}{\lambda^{|\Psi|+1
				} }.
			\end{eqnarray*}

			Combining both cases,
			
			\begin{eqnarray*}
				{\sum_{\tau \in \Psi} s^2_{a_\tau,\tau}} &\leq & 2  \max(1, \frac{c_{\tilde{k}}}{\lambda}) \log \frac{\det(\tilde{K}_{\Psi+1} + \lambda I )}{\lambda^{|\Psi| +1
					}  } \\
					& = & 2 m \log g(\Psi). 
				\end{eqnarray*}
				
			\end{proof}

			\begin{lemma}
				Using the same notations as in Lemma \ref{lem:UCB_Determinant_Ineq},
				\label{lem:Upper_Bounding_Sum_UCB}
				\[ \sum_{\tau \in  \Psi}  s_{a_\tau,\tau} \leq   \sqrt{2 m |\Psi| \log  g(\Psi)} \]
			\end{lemma}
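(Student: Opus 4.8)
The plan is to bound the sum of widths $\sum_{\tau \in \Psi} s_{a_\tau,\tau}$ by relating it to the sum of squared widths $\sum_{\tau \in \Psi} s^2_{a_\tau,\tau}$, which has already been controlled by Lemma \ref{lem:UCB_Determinant_Ineq}. The natural tool is the Cauchy--Schwarz inequality, which converts a sum of terms into a sum of squares at the cost of a factor involving the number of terms.

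First I would apply the Cauchy--Schwarz inequality to the sequence $\{s_{a_\tau,\tau}\}_{\tau \in \Psi}$ against the all-ones vector, giving
\[
\sum_{\tau \in \Psi} s_{a_\tau,\tau} = \sum_{\tau \in \Psi} 1 \cdot s_{a_\tau,\tau} \leq \sqrt{|\Psi|} \, \sqrt{\sum_{\tau \in \Psi} s^2_{a_\tau,\tau}}.
\]
Next, I would substitute the bound from Lemma \ref{lem:UCB_Determinant_Ineq}, namely $\sum_{\tau \in \Psi} s^2_{a_\tau,\tau} \leq 2m \log g(\Psi)$, directly into the right-hand side. This yields
\[
\sum_{\tau \in \Psi} s_{a_\tau,\tau} \leq \sqrt{|\Psi|}\,\sqrt{2m \log g(\Psi)} = \sqrt{2m |\Psi| \log g(\Psi)},
\]
which is exactly the claimed inequality.

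There is essentially no serious obstacle here: the statement is a one-line corollary of the preceding lemma combined with Cauchy--Schwarz, and the factor $|\Psi|$ appearing under the square root is precisely the number of summands that Cauchy--Schwarz introduces. The only point requiring care is that the hypothesis $|\Psi| \geq 2$ carried over from Lemma \ref{lem:UCB_Determinant_Ineq} must be in force so that the bound on $\sum s^2_{a_\tau,\tau}$ is valid; once that lemma applies, the rest is immediate. I expect this proof to be short, consisting of the displayed Cauchy--Schwarz step followed by the substitution, with no delicate estimates or case analysis needed.
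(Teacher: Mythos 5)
Your proposal is correct and matches the paper's own proof exactly: both apply Cauchy--Schwarz to reduce the sum of widths to $\sqrt{|\Psi|\sum_{\tau\in\Psi}s^2_{a_\tau,\tau}}$ and then substitute the bound from Lemma \ref{lem:UCB_Determinant_Ineq}. Nothing further is needed.
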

			\begin{proof}
				\begin{eqnarray*}
					\sum_{t \in \Psi} s_{a_\tau,\tau} &\leq& \sqrt{ |\Psi| \sum_{\tau \in \Psi} s^2_{a_\tau,\tau}}\\
					&\leq & \sqrt{2|\Psi| m \log \frac{\det(\tilde{K}_{\Psi+1} + \lambda I )}{\lambda^{|\Psi| +1
							} }}
						\end{eqnarray*}
						where the first inequality is due to Cauchy-Schwarz and the last inequality is due to Lemma \ref{lem:UCB_Determinant_Ineq}. 
					\end{proof}

					\begin{lemma}
						\textnormal{\cite{auer2002using}} Using notations in SupKMTL-UCB, for each $t \in [T]$, $q \in [Q]$, and any fixed sequence of feature vectors $ x_{a_t,t}$ with $ t \in \Psi^q_t$, the corresponding rewards $r_{a_t,t} $ are independent random variables such that $ \mathbb{E}[r_{a_t,t}] = \phi(\tilde{x}_{a_t,t})^T\theta^* $. 
					\end{lemma}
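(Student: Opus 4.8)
The plan is to follow Auer's argument for SupLinRel/SupLinUCB \cite{auer2002using}, adapted to SupKMTL-UCB. The structural property to exploit is that an index $\tau$ enters the stage-$q$ set $\Psi^q$ only through the ELSE branch of the REPEAT loop, where the selected arm $a_\tau$ is chosen purely because $w_{a_\tau,\tau} > 2^{-q}$. Since the width $s_{a,\tau}$ (hence $w_{a,\tau}$) depends on the past only through the kernel quantities $\tilde{K}_{\Psi^q_\tau}$, $\tilde{k}_{a,\Psi^q_\tau}$ and $\tilde{k}(\tilde{x}_{a,\tau},\tilde{x}_{a,\tau})$, the decision to include $\tau$ in $\Psi^q$ and the identity of $a_\tau$ never consult the reward $r_{a_\tau,\tau}$, which is observed only at the end of the iteration.

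First I would fix the entire sequence of contexts $x_{a,\tau}$ once and for all, so that the only randomness resides in the rewards. Then I would set up an induction on the stage index $q$ proving the stronger statement: for every $t$ and $q$, the candidate set $\hat{A}_q$, the membership indicator $\mathbbm{1}\{\tau \in \Psi^q_t\}$, and the chosen arm $a_\tau$ are all measurable with respect to the information generated by the contexts together with the rewards $\{r_{a_s,s} : s \in \Psi^{q'}_t,\, q' < q\}$ from strictly lower stages. The base case $q=1$ is immediate, since $\hat{A}_1 = [N]$ and the width-based selection uses no rewards at all. For the inductive step, the refined set $\hat{A}_{q+1}$ is produced in the ELIF branch by thresholding the indices $ucb_{a,t}$, and these depend on rewards only through $y_{\Psi^q_t}$, i.e.\ stage-$q$ rewards; but $\hat{A}_{q+1}$ governs decisions only at stage $q+1$, so attributing those rewards to stages strictly below $q+1$ is exactly what the induction hypothesis permits.

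Having established this measurability, I would conclude as follows. Conditioned on the contexts and on all rewards from stages strictly below $q$, the set $\Psi^q$ and the associated arm choices $\{a_\tau : \tau \in \Psi^q\}$ are deterministic. Because the underlying rewards $\{r_{a_\tau,\tau}\}_{\tau=1}^T$ are independent across $\tau$ given the contexts, and because none of the stage-$q$ rewards was consulted in forming $\Psi^q$, the collection $\{r_{a_\tau,\tau} : \tau \in \Psi^q\}$ consists of mutually independent random variables, each with conditional mean $\mathbb{E}[r_{a_\tau,\tau} \mid x_{a_\tau,\tau}] = \phi(\tilde{x}_{a_\tau,\tau})^T\theta^*$ by the modeling assumption of Lemma~\ref{lem:KMTLUCBound}. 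This is precisely the independence needed so that McDiarmid's inequality can be invoked within each stage in the proof of Lemma~\ref{lem:UCB_kernelization}.

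The hard part will be the bookkeeping in the inductive step: although $ucb_{a,t}$ (used to prune $\hat{A}_{q+1}$) does depend on the stage-$q$ rewards $y_{\Psi^q_t}$, this dependence must be shown harmless, because those rewards are used only to decide membership at the next stage, never to decide membership at stage $q$ itself. Formalizing the claim that stage-$q$ membership decisions are a function of lower-stage rewards only — and in particular that the ELSE branch's arm selection is reward-free at stage $q$ — is the delicate point, and it is exactly where the algorithmic design of thresholding by the width $2^{-q}$ rather than by an estimated reward is essential.
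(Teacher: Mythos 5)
The paper does not actually prove this lemma: it is imported verbatim (as Lemma 14 essentially) from Auer's analysis of SupLinRel/SupLinUCB via the citation \cite{auer2002using}, and no argument is reproduced in the appendix. Your proposal is therefore not competing with a proof in the paper but reconstructing the cited one, and it reconstructs it correctly: the two load-bearing observations are exactly Auer's, namely (i) an index $\tau$ is added to $\Psi^q$ only in the ELSE branch, where the arm is selected by the condition $w_{a_\tau,\tau}>2^{-q}$, and the width is a function of the kernel quantities over $\Psi^q_\tau$ alone, hence of feature vectors and never of rewards; and (ii) the candidate set $\hat{A}_q$ is built by thresholding $ucb$ indices computed from $y_{\Psi^{q'}_\tau}$ with $q'<q$, so by induction over the stages (really a joint induction over $(t,q)$ in the order the algorithm visits them) the event $\{\tau\in\Psi^q_t\}$ and the identity of $a_\tau$ are measurable with respect to the contexts and the rewards indexed by $\bigcup_{q'<q}\Psi^{q'}_t$. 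The one point I would make fully explicit, since it is where the conclusion actually comes from, is that the index sets $\Psi^1_t,\dots,\Psi^Q_t$ are mutually exclusive, so the conditioning event determining the composition of $\Psi^q_t$ involves only rewards at time indices \emph{disjoint} from $\Psi^q_t$; combined with the assumed independence of the rewards across time given the contexts, this is what lets you condition without disturbing either the independence or the means $\phi(\tilde{x}_{a_\tau,\tau})^T\theta^*$ of the stage-$q$ rewards. You should also note in passing that the IF branch ($w_{a,t}\leq 1/\sqrt{T}$) does consult stage-$q$ rewards through $ucb_{a,t}$, but is harmless because no index set is augmented there. With those two clarifications your argument is complete and matches the intended (cited) proof.
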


					\begin{lemma} \label{lem:AuerLemma} \textnormal{\cite{auer2002using}}
						Using notations in SupKMTL-UCB, let $ \|\theta^*\| \leq c$ and $ a_t^*  $ be the best arm at time $t$. With probability $ 1 - \delta Q $ and $ \forall t \in [T], q \in [Q]$, the following hold
						\begin{itemize}
							\item $|\phi(\tilde{x}_{a,t})^T\hat{\theta}_t - \mathbb{E}[r_{a,t}|x_{a,t}]| \leq \Big( \sqrt{\frac{\log 2TN/\delta}{2}} +  \sqrt{ \lambda}c \Big) s_{a,t} $ 
							\item $ a_t^* \in \hat{A}_q$
							\item $\mathbb{E}[r_{a^*_t,t}] -  \mathbb{E}[r_{a,t}] \leq 2^{3-q}  $.
						\end{itemize}
					\end{lemma}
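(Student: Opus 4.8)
The plan is to establish the three bullet points in sequence: the confidence bound (first bullet) as a direct consequence of the concentration analysis already carried out, and the two structural claims (second and third bullets) by a joint induction on the phase index $q$.

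First I would handle the confidence bound. The immediately preceding lemma guarantees that, for a fixed index set $\Psi_t^q$, the rewards $r_{a_\tau,\tau}$ with $\tau \in \Psi_t^q$ are independent with conditional mean $\phi(\tilde{x}_{a_\tau,\tau})^T\theta^*$. This is precisely the independence hypothesis needed to invoke the argument behind Lemma \ref{lem:KMTLUCBound} (equivalently Lemma \ref{lem:UCB_kernelization}): running BaseKMTL-UCB on $\Psi_t^q$ produces $\hat{\theta}_t$ and width $s_{a,t}$, and McDiarmid's inequality yields $|\phi(\tilde{x}_{a,t})^T\hat{\theta}_t - \mathbb{E}[r_{a,t}|x_{a,t}]| \leq (\alpha + \sqrt{\lambda}c)s_{a,t}$ with failure probability $\delta/(TN)$ for each fixed arm. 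A union bound over the $N$ arms and the $Q = \lceil \log T \rceil$ phases, together with the per-step union bound, delivers the first bullet uniformly with probability at least $1 - \delta Q$. The essential point is that the partition into the sets $\Psi_t^q$ is exactly what decouples the otherwise dependent reward stream, keeping the concentration inequality applicable.

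Next I would prove the last two bullets by induction on $q$, conditioning on the event from the first bullet. For the base case $q=1$ we have $\hat{A}_1 = [N]$, so $a_t^* \in \hat{A}_1$ trivially, and since rewards lie in $[0,1]$ the gap is at most $1 \leq 2^{3-1}$. For the inductive step, write $w_{a,t} = (\alpha + \sqrt{\lambda}c)s_{a,t}$; the confidence bound then gives $\mathbb{E}[r_{a,t}] \leq ucb_{a,t} \leq \mathbb{E}[r_{a,t}] + 2w_{a,t}$ for every active arm. To show $a_t^* \in \hat{A}_{q+1}$, let $a'' = \argmax_{a'} ucb_{a',t}$ and chain $ucb_{a_t^*,t} \geq \mathbb{E}[r_{a_t^*,t}] \geq \mathbb{E}[r_{a'',t}] \geq ucb_{a'',t} - 2w_{a'',t} \geq \max_{a'} ucb_{a',t} - 2^{1-q}$, using the phase-$q$ width condition $w_{a,t} \leq 2^{-q}$ from the ELSIF branch; hence $a_t^*$ meets the retention rule. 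For the gap bound, take any $a \in \hat{A}_{q+1}$ and estimate $\mathbb{E}[r_{a,t}] \geq ucb_{a,t} - 2w_{a,t} \geq ucb_{a_t^*,t} - 2^{1-q} - 2w_{a,t} \geq \mathbb{E}[r_{a_t^*,t}] - 2^{1-q} - 2^{1-q}$, so that $\mathbb{E}[r_{a_t^*,t}] - \mathbb{E}[r_{a,t}] \leq 2^{2-q} = 2^{3-(q+1)}$, closing the induction.

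The main obstacle is not the algebra of the inductive step, which is standard, but verifying that the independence hypothesis underlying the concentration bound genuinely holds for the sets $\Psi_t^q$ built by SupKMTL-UCB. The subtlety is that earlier arm selections determine which rewards enter $\Psi_t^q$, so one must argue, as in Auer et al.~\cite{auer2002using}, that conditioning on the fixed context sequence in $\Psi_t^q$ leaves the corresponding rewards independent; this is the content of the preceding lemma and is exactly what makes the master/base decomposition valid. The remaining work is the union-bound bookkeeping that aggregates the per-arm, per-phase failure probabilities into the overall $1 - \delta Q$ guarantee.
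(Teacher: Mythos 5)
The paper does not actually prove this lemma: it is imported verbatim from Auer et al.\ \cite{auer2002using} (see also Lemma~15 of that paper and Lemma~6 of \cite{chu2011contextual}), with the preceding independence lemma serving as the only supporting statement. Your reconstruction is the standard argument those references use and it is correct: the independence of rewards indexed by $\Psi_t^q$ justifies invoking the McDiarmid-based bound of Lemma~\ref{lem:KMTLUCBound} with the union-bound bookkeeping giving $1-\delta Q$, and the induction on $q$ (base case from $\hat{A}_1=[N]$ and $r_{a,t}\in[0,1]$, inductive step from the elimination rule $ucb_{a,t}\geq \max_{a'}ucb_{a',t}-2^{1-q}$ together with $w_{a,t}\leq 2^{-q}$) correctly yields $a_t^*\in\hat{A}_q$ and the gap bound $2^{3-q}$. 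So your proposal supplies exactly the omitted proof rather than deviating from it.
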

					
					\begin{lemma}
						\label{lem:BoundingPsi}
						Using notations in SupKMTL-UCB, $\forall q \in [Q]$, 
						\begin{equation*}
						|\Psi_{T+1}^q | \leq  2^q \Big( \sqrt{\frac{\log 2TN/\delta}{2}} +c \sqrt{\lambda} \Big) \sqrt{2m \Big(\log g([T]) \Big)   |\Psi_{T+1}^q| }  
						\end{equation*}
						where $ [T] = \{1,...,T\}$. 
					\end{lemma}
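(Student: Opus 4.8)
The plan is to play the defining property of the set $\Psi_{T+1}^q$ against the summed-width estimate of Lemma~\ref{lem:Upper_Bounding_Sum_UCB}. First I would read off from SupKMTL-UCB exactly when an index enters $\Psi^q$: a time $\tau$ is appended to $\Psi^q$ only in the final \textbf{else} branch, which is reached precisely when neither the ``all widths $\le 1/\sqrt{T}$'' nor the ``all widths $\le 2^{-q}$'' condition holds, and the arm chosen there is selected so that $w_{a_\tau,\tau} > 2^{-q}$. Since $w_{a_\tau,\tau} = (\alpha + c\sqrt{\lambda})\,s_{a_\tau,\tau}$, every $\tau \in \Psi_{T+1}^q$ therefore satisfies
\[
s_{a_\tau,\tau} > \frac{2^{-q}}{\alpha + c\sqrt{\lambda}}.
\]

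Summing this over all $\tau \in \Psi_{T+1}^q$ yields the lower bound $\frac{2^{-q}}{\alpha + c\sqrt{\lambda}}\,|\Psi_{T+1}^q| < \sum_{\tau \in \Psi_{T+1}^q} s_{a_\tau,\tau}$, while Lemma~\ref{lem:Upper_Bounding_Sum_UCB} applied with $\Psi = \Psi_{T+1}^q$ gives the matching upper bound $\sum_{\tau \in \Psi_{T+1}^q} s_{a_\tau,\tau} \le \sqrt{2m\,|\Psi_{T+1}^q|\,\log g(\Psi_{T+1}^q)}$. Chaining these two inequalities, multiplying through by $2^q(\alpha + c\sqrt{\lambda})$, and substituting $\alpha = \sqrt{\log(2TN/\delta)/2}$ produces the stated bound, except with $g(\Psi_{T+1}^q)$ in place of $g([T])$. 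Here one must note that the widths $s_{a_\tau,\tau}$ entering the sum are exactly those BaseKMTL-UCB computes from the level-$q$ submatrix at the moment $\tau$ was added, so that the chronological telescoping underlying Lemma~\ref{lem:Upper_Bounding_Sum_UCB} applies to $\Psi_{T+1}^q$ verbatim.

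The remaining step is to upgrade $g(\Psi_{T+1}^q)$ to $g([T])$, i.e. to establish the monotonicity $g(S) \le g(S')$ whenever $S \subseteq S'$. I would obtain this from Lemma~\ref{lem:zi2009schur}: appending one point to the index set multiplies $\det(\tilde{K} + \lambda I)$ by $\lambda(1 + s^2)$, where $s^2 \ge 0$ is the corresponding Schur complement divided by $\lambda$, while the normalization $\lambda^{|\cdot|}$ picks up a single extra factor of $\lambda$; hence each appended point scales the ratio $\det(\tilde{K}+\lambda I)/\lambda^{|\cdot|}$ by a factor at least one. Iterating from $\Psi_{T+1}^q$ up to $[T]$ gives $g(\Psi_{T+1}^q) \le g([T])$ and closes the argument. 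I expect this monotonicity to be the main obstacle, since it is the only place that uses the PSD/Schur-complement structure of the kernel matrix rather than the bookkeeping of the master algorithm; the preceding steps are a routine pairing of the elementary threshold bound with Lemma~\ref{lem:Upper_Bounding_Sum_UCB}.
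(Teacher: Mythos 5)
Your proposal is correct and takes essentially the same route as the paper: the algorithm's final else-branch forces $w_{a_\tau,\tau} > 2^{-q}$ for every $\tau \in \Psi_{T+1}^q$, giving the lower bound $2^{-q}\,|\Psi_{T+1}^q| \le \sum_{\tau} w_{a_\tau,\tau}$, which is chained against the upper bound from Lemma~\ref{lem:Upper_Bounding_Sum_UCB}. The only step you elaborate beyond the paper is the monotonicity $g(\Psi_{T+1}^q) \le g([T])$, which the paper dispatches with the terse remark that each factor $1+s_{a_t,t}^2 \ge 1$ in the telescoping determinant product---precisely the Schur-complement iteration you describe.
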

					\begin{proof}
						\begin{eqnarray*}
							\sum_{t \in \Psi_{T+1}^q} w_{a_t,t} &=& \sum_{t \in \Psi_{T+1}^q} \Big( \sqrt{\frac{\log 2TN/\delta}{2}} +c \sqrt{\lambda} \Big) s_{a_t,t} \\
							&\leq & \Big( \sqrt{\frac{\log 2TN/\delta}{2}} +c \sqrt{\lambda} \Big) \sqrt{2m |\Psi_{T+1}^q| \log g(\Psi_{T+1}^q)} \\ 
							&\leq & \Big( \sqrt{\frac{\log 2TN/\delta}{2}} +c \sqrt{\lambda} \Big) \sqrt{2m \Big(\log g([T]) \Big)  |\Psi_{T+1}^q| } 
						\end{eqnarray*}
						
						where the first inequality is due to Lemma \ref{lem:Upper_Bounding_Sum_UCB} and the last inequality holds because $ 1 + s_{a_t,t}^2 \geq 1$ for all $t$.
						
						From the third step (line 16) in SupKMTL-UCB algorithm \ref{alg:SupKMTLUCB}, we choose and alternative $ a_t \in \hat{A}_q$ such that $w_{a_t,t} \geq 2^{-q}$ and include that $t$ in $\Psi_{t+1}^q $ for the next round of estimates. Therefore,  
						\[\sum_{t \in \Psi_{T+1}^q} w_{a_t,t} \geq 2^{-q} |\Psi_{T+1}^q | \]. 
						
						Combining the above two equations completes the proof. 
					\end{proof}
					
					\begin{lemma}\label{lem:Azuma}\textnormal{[Azuma's inequality \cite{azuma1967weighted}]}
						Let $ r_1, ... , r_T$ be random variables with $| r_{\tau} | \leq a_{\tau}$, for some $ a_1, ... , a_T \geq = 0$. Then
						\begin{equation}
						P\Bigg( \Bigg| \sum_{\tau = 1}^{T} r_{\tau} - \sum_{\tau = 1}^{T} \mathbb{E} [r_{\tau}| r_1,...,r_{\tau-1}] \Bigg| \geq B \Bigg) \leq 2 \exp \Bigg( - \frac{B^2}{2 \sum_{\tau = 1}^{T} a_{\tau}^2} \Bigg)
						\end{equation}
					\end{lemma}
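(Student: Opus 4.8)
The plan is to follow the classical exponential (Chernoff) argument for martingale concentration. First I would introduce the martingale difference sequence $D_\tau := r_\tau - \mathbb{E}[r_\tau \mid r_1,\dots,r_{\tau-1}]$, so that the quantity controlled in the statement is exactly $S_T := \sum_{\tau=1}^T D_\tau$. By construction $\mathbb{E}[D_\tau \mid r_1,\dots,r_{\tau-1}] = 0$, and since $|r_\tau| \le a_\tau$ the conditional mean also lies in $[-a_\tau,a_\tau]$; hence, conditioned on the past, $D_\tau$ takes values in an interval of width at most $2a_\tau$. This bounded-increment, zero-conditional-mean structure is all the machinery needs.

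To bound the upper tail $P(S_T \ge B)$, I would apply Markov's inequality to $e^{sS_T}$ for a free parameter $s>0$, giving $P(S_T \ge B) \le e^{-sB}\,\mathbb{E}[e^{sS_T}]$. The moment generating function is then handled by peeling off one term at a time using the tower property: conditioning on $r_1,\dots,r_{T-1}$ factors the expectation into $\mathbb{E}\big[e^{sS_{T-1}}\, \mathbb{E}[e^{sD_T}\mid r_1,\dots,r_{T-1}]\big]$, which isolates the contribution of the last increment.

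The key ingredient is Hoeffding's lemma: for a zero-mean random variable supported in an interval of width $w$, the conditional MGF is at most $e^{s^2 w^2/8}$. Applied here with $w = 2a_\tau$, this yields $\mathbb{E}[e^{sD_\tau}\mid \text{past}] \le e^{s^2 a_\tau^2/2}$. Iterating the conditioning gives $\mathbb{E}[e^{sS_T}] \le \exp\!\big(\tfrac{s^2}{2}\sum_{\tau=1}^T a_\tau^2\big)$, so that $P(S_T \ge B) \le \exp\!\big(-sB + \tfrac{s^2}{2}\sum_\tau a_\tau^2\big)$. Optimizing over $s$ (choosing $s = B/\sum_\tau a_\tau^2$) produces the one-sided bound $\exp\!\big(-B^2/(2\sum_\tau a_\tau^2)\big)$. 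Finally I would repeat the argument for $-S_T$ to control the lower tail and combine the two via a union bound, which introduces the factor of $2$ and yields the two-sided statement.

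The step I expect to require the most care is establishing Hoeffding's lemma and verifying that the width-$2a_\tau$ conditional support is exactly what drives the constant: it is the factor $(2a_\tau)^2/8 = a_\tau^2/2$ that makes $\sum_\tau a_\tau^2$ (rather than $4\sum_\tau a_\tau^2$) appear in the denominator, matching the stated bound. Since this is a standard result, in practice I would simply cite \cite{azuma1967weighted}, but the route above is the derivation I would reconstruct if needed.
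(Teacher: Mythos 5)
Your derivation is correct: the martingale-difference decomposition, the conditional Hoeffding lemma applied to an interval of width $2a_\tau$ (giving the factor $a_\tau^2/2$ and hence the denominator $2\sum_\tau a_\tau^2$), the optimization $s = B/\sum_\tau a_\tau^2$, and the union bound for the two-sided statement all check out. The paper itself gives no proof of this lemma --- it simply cites Azuma (1967) and uses the statement as a black box --- so your reconstruction is the standard argument filling in what the authors omit, and as you note, citing the reference would also suffice here.
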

					
					\subsection{Proof of Theorem 1 in Main paper}
					We use same proof technique proposed by Auer et al. \cite{auer2002using}. 
					\begin{proof}
						Let $\Psi_0 $ be the set of trials for which an alternative ($ w_{a,t} \leq \frac{1}{\sqrt{T}}$ ) at line 9 of SupKMTL-UCB algorithm \ref{alg:SupKMTLUCB} is chosen . Since $2^{-Q} \leq \frac{1}{\sqrt{T}}$, we have $\{1,...,T\} = \Psi_0 \cup \bigcup_q \Psi^q_{T+1} $.

						With probability $ 1 - \delta Q$,
						\begin{eqnarray*}
							\mathbb{E} [ R(T)] &=& \sum_{t = 1}^T \mathbb{E}[r_{a_t^*,t}] - \mathbb{E}[r_{a_t,t}] \\
							&=& \sum_{t \in \Psi_0} \mathbb{E}[r_{a_t^*,t}] - \mathbb{E}[r_{a_t,t}] +\sum_{q = 1}^{Q}\sum_{t \in \Psi_{T+1}^q} \mathbb{E}[r_{a_t^*,t}] - \mathbb{E}[r_{a_t,t}] \\
							&\leq & \frac{2}{\sqrt{T}} \Psi_0 + \sum_{q = 1}^{Q}\sum_{t \in \Psi_{T+1}^q} \mathbb{E}[r_{a_t^*,t}] - \mathbb{E}[r_{a_t,t}] \\
							&\leq & \frac{2}{\sqrt{T}} T + \sum_{q = 1}^{Q}\sum_{t \in \Psi_{T+1}^q} 2^{3-q} \\
							&\leq & 2 \sqrt{T} + \sum_{q = 1}^{Q} 2^{3-q}  |\Psi_{T+1}^q| \\
							&\leq & 2 \sqrt{T} + \sum_{q = 1}^{Q} 2^{3-q} 2^q \Big( \sqrt{\frac{\log 2TN/\delta}{2}} +c \sqrt{\lambda} \Big) \sqrt{2m \Big(\log g([T]) \Big)   |\Psi_{T+1}^q| }   \\
							&\leq & 2 \sqrt{T} + 8 \Big( \sqrt{\frac{\log 2TN/\delta}{2}} +c \sqrt{\lambda} \Big) \sqrt{2m \Big(\log g([T]) \Big)} \sum_{q = 1}^{Q}  \sqrt{|\Psi_{T+1}^q| } \\
							&\leq & 2 \sqrt{T} + 8 \Big( \sqrt{\frac{\log 2TN/\delta}{2}} +c \sqrt{\lambda} \Big) \sqrt{2m \Big(\log g([T]) \Big)} \sqrt{Q \sum_{q = 1}^{Q}   |\Psi_{T+1}^q| } \\
							&\leq & 2 \sqrt{T} + 8 \Big( \sqrt{\frac{\log 2TN/\delta}{2}} +c \sqrt{\lambda} \Big) \sqrt{2m \Big(\log g([T]) \Big)} \sqrt{Q T } 
						\end{eqnarray*}
						
						where the first inequality is because of line 9 of SupKMTL-UCB algorithm \ref{alg:SupKMTLUCB}, the second inequality is due to Lemma \ref{lem:AuerLemma} and the fourth inequality is due to Lemma \ref{lem:BoundingPsi}. 
						
						Using $B = \sqrt{2 T \log(2/\delta)} $ and $ a_{\tau} = 1$ in Azuma's inequality (Lemma \ref{lem:Azuma}), with probability at least $ 1 - \delta(Q + 1)$,
						
						\begin{eqnarray*}
							R(T) &\leq& \mathbb{E} [ R(T)] +  \sqrt{2 T \log(2/\delta)} \\
							&\leq& 2 \sqrt{T} + 8 \Big( \sqrt{\frac{\log 2TN/\delta}{2}} +c \sqrt{\lambda} \Big) \sqrt{2m \Big(\log g([T]) \Big)} \sqrt{Q T }  +  \sqrt{2 T \log(2/\delta)} \\
							&\leq& 2 \sqrt{T} + 10 \Big( \sqrt{\frac{\log 2TN/\delta}{2}} +c \sqrt{\lambda} \Big) \sqrt{2m \Big(\log g([T]) \Big)} \sqrt{Q T }. 
						\end{eqnarray*}
						
						Replacing $\delta$ with $ \frac{\delta}{Q + 1}$, we get that with probability at least $ 1- \delta$,
						\begin{eqnarray}
						R(T) &\leq& 2 \sqrt{T} + 10 \Big( \sqrt{\frac{\log 2TN(Q+1)/\delta}{2}} +c \sqrt{\lambda} \Big) \sqrt{2m \Big(\log g([T]) \Big)} \sqrt{Q T } \\
						&\leq& 2 \sqrt{T} + 10 \Bigg( \sqrt{\frac{\log 2TN(\log(T)+1)/\delta}{2}} +c \sqrt{\lambda} \Bigg) \sqrt{2m \log g([T]} \sqrt{T \lceil\log(T)\rceil}. 
						\end{eqnarray}
						
					\end{proof}
					
					We use following definitions and lemmas to interpret the regret bound and to establish a regret bound in terms of the effective rank of the kernel matrix. 
					
					\begin{defn}
						\label{def:Majorize}
						Let $ x,y \in \mathbb{R}^n $ and $ x_1 \geq x_2 \geq .... \geq x_n$, $ y_1 \geq y_2 \geq .... \geq y_n$ . We say $ x $ is majorized by $ y $, i.e. $ x  \prec y, $ if $ \sum_{i=1}^{k} x_i \leq  \sum_{i=1}^{k} y_i  $, for $ k = 1,...,n-1$ and $ \sum_{i=1}^{n} x_i =  \sum_{i=1}^{n} y_i  $.
					\end{defn}

					\begin{defn}
						\label{def:SchurConcave}
						A real valued function on $ g $ defined on set $ \mathcal{S} \subset \mathbb{R}^n$ is said to be Schur concave on $ \mathcal{S}$ if $ x \prec y \implies g(x) \geq g(y)$. 
					\end{defn}
					
					\begin{lemma} \textnormal{\cite{marshall1979inequalities}}
						\label{lem:SchurConcaveProd}
						If $ x, y \in \mathbb{R}^n_{+} $ and $ x \prec y $, then $ \prod_{i = 1}^{n} x_i \geq \prod_{i = 1}^{n} y_i$. This means $ \prod x_i$ is a Schur concave function. 
					\end{lemma}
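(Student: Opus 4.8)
The plan is to prove that $g(x) = \prod_{i=1}^n x_i$ is Schur concave on $\mathbb{R}^n_+$, which by Definition \ref{def:SchurConcave} is precisely the assertion that $x \prec y$ implies $\prod_i x_i \geq \prod_i y_i$. I would reduce the majorization relation to a finite chain of elementary ``Robin Hood'' transfers and check that each such transfer weakly increases the product.

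The key structural fact I would invoke is the standard characterization of majorization: if $x \prec y$, then $x$ can be obtained from $y$ by a finite sequence of T-transforms, where each T-transform fixes all but two coordinates, say positions $i$ and $j$ with $y_i \geq y_j$, and replaces the pair $(y_i, y_j)$ by $\big(\lambda y_i + (1-\lambda) y_j,\ (1-\lambda) y_i + \lambda y_j\big)$ for some $\lambda \in [0,1]$. Such a transfer preserves the coordinate sum and moves the two entries closer together, so it is consistent with the partial-sum ordering in Definition \ref{def:Majorize}.

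Given this decomposition, the core computation is a two-variable inequality. Writing $a = y_i$ and $b = y_j$, the product of the transformed pair minus the original product simplifies to
\[
\big(\lambda a + (1-\lambda)b\big)\big((1-\lambda)a + \lambda b\big) - ab = \lambda(1-\lambda)(a-b)^2 \geq 0,
\]
while all the remaining coordinates are unchanged. Hence each T-transform weakly increases the total product, and composing the finite chain that carries $y$ down to $x$ yields $\prod_i x_i \geq \prod_i y_i$. This is just the statement that, for a fixed sum, a more balanced pair has a larger product, i.e.\ the AM--GM phenomenon applied coordinatewise.

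The main obstacle is not the arithmetic but cleanly invoking the T-transform decomposition of majorization (equivalently, the Hardy--Littlewood--P\'olya doubly stochastic representation). An alternative that sidesteps it is the Schur--Ostrowski criterion, under which a symmetric, continuously differentiable $g$ is Schur concave iff $(x_i - x_j)(\partial_i g - \partial_j g) \leq 0$ for all $i \neq j$. For $g(x) = \prod_k x_k$ one has $\partial_i g = g(x)/x_i$ on the interior, so $(x_i - x_j)(\partial_i g - \partial_j g) = -\,g(x)\,(x_i - x_j)^2/(x_i x_j) \leq 0$ since $g \geq 0$ on $\mathbb{R}^n_+$; continuity then extends the conclusion to the boundary. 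Either route reduces the claim to a single nonnegativity check, and I would present the T-transform version as it is the most self-contained.
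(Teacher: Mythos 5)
The paper offers no proof of this lemma at all; it is imported verbatim from Marshall and Olkin's monograph on majorization, so your argument cannot be compared against an in-paper derivation --- it instead supplies one. Your proof is correct. The T-transform route is the standard one: the Hardy--Littlewood--P\'olya characterization guarantees that $x \prec y$ means $x$ is reachable from $y$ by finitely many two-coordinate averaging steps, and your central identity
\[
\bigl(\lambda a + (1-\lambda)b\bigr)\bigl((1-\lambda)a + \lambda b\bigr) - ab = \lambda(1-\lambda)(a-b)^2 \geq 0
\]
checks out (expand and collect: the cross terms contribute $(2\lambda^2 - 2\lambda)ab$, which combines with $\lambda(1-\lambda)(a^2+b^2)$ to give exactly $\lambda(1-\lambda)(a-b)^2$). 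Since every other coordinate is fixed and all coordinates are nonnegative, the full product weakly increases at each step, which is precisely the claimed Schur concavity. Your alternative via the Schur--Ostrowski criterion is also sound, with the minor caveat you already flag: $\partial_i g = g(x)/x_i$ only makes sense on the open orthant, so the boundary case (some $x_i = 0$) needs the continuity extension you mention, whereas the T-transform argument handles zeros with no extra care. Either version would serve as a legitimate self-contained replacement for the citation; the only thing you are taking on faith is the T-transform decomposition itself, which is the content of the very reference the paper cites.
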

					
					\begin{lemma} \textnormal{\cite{bapat1985majorization}} 
						\label{lem:SchurProductMajorize}
						Let $ A,B$ be positive semidefinite matrices of the same size and let all elements on diagonal of $B$ are 1. Then $ \lambda (A \odot B) \prec \lambda(A)$.  
					\end{lemma}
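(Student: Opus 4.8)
The plan is to verify the two defining conditions of majorization from Definition \ref{def:Majorize} separately: the equality of the full traces, and the chain of partial-sum inequalities for the largest eigenvalues. The first is immediate. Because the diagonal of $B$ consists of ones, $(A \odot B)_{ii} = A_{ii} B_{ii} = A_{ii}$, so $A \odot B$ and $A$ share the same diagonal and hence the same trace; thus $\sum_{i=1}^n \lambda_i(A \odot B) = \trace(A \odot B) = \trace(A) = \sum_{i=1}^n \lambda_i(A)$, which is exactly the equality constraint in the definition of $\prec$.

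For the partial sums I would invoke the Ky Fan variational principle: for any symmetric $M$, $\sum_{i=1}^k \lambda_i(M) = \max\{\trace(U^T M U) : U \in \mathbb{R}^{n \times k},\ U^T U = I_k\}$. The idea is to rewrite $A \odot B$ in a form that isolates $A$. Taking the spectral decomposition $B = \sum_m \beta_m w_m w_m^T$ with $\beta_m = \lambda_m(B) \geq 0$, and using the elementary identity $A \odot (w w^T) = D_w A D_w$ with $D_w = \diag(w)$, I obtain $A \odot B = \sum_m \beta_m D_{w_m} A D_{w_m}$. The unit-diagonal hypothesis translates into $\sum_m \beta_m D_{w_m}^2 = I_n$, since the $i$-th diagonal entry of that sum is $\sum_m \beta_m (w_m)_i^2 = B_{ii} = 1$.

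Combining these facts, for any $U$ with $U^T U = I_k$ I get $\trace(U^T (A \odot B) U) = \sum_m \beta_m \trace\big((D_{w_m} U)^T A (D_{w_m} U)\big) = \trace(A S)$, where $S := \sum_m \beta_m D_{w_m} U U^T D_{w_m}$. Two properties of $S$ then close the argument. First, $\trace(S) = \trace\big(U^T (\sum_m \beta_m D_{w_m}^2) U\big) = \trace(U^T U) = k$. Second, since $0 \preceq U U^T \preceq I_n$, each congruence satisfies $D_{w_m} U U^T D_{w_m} \preceq D_{w_m}^2$, so $0 \preceq S \preceq \sum_m \beta_m D_{w_m}^2 = I_n$. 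Hence $S$ lies in the fantope $\{S : 0 \preceq S \preceq I_n,\ \trace(S) = k\}$, over which $\max \trace(A S) = \sum_{i=1}^k \lambda_i(A)$ (attained by the orthogonal projection onto the top-$k$ eigenspace of $A$). Maximizing over $U$ gives $\sum_{i=1}^k \lambda_i(A \odot B) \leq \sum_{i=1}^k \lambda_i(A)$ for every $k$, which with the trace equality is exactly $\lambda(A \odot B) \prec \lambda(A)$.

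I expect the main obstacle to be the final maximization step: bounding $\trace(AS)$ requires showing that $S$ is confined to the fantope and then identifying the maximum of $\trace(AS)$ over that set with the sum of the $k$ largest eigenvalues of $A$. The operator-inequality bookkeeping $D_{w_m} U U^T D_{w_m} \preceq D_{w_m}^2$, and the resulting $S \preceq I_n$, is the point where positive semidefiniteness of $B$, its unit diagonal, and the orthonormality of the columns of $U$ must all be used at once. An equivalent, more abstract route is to observe that $X \mapsto X \odot B$ is a unital positive linear map on symmetric matrices and to cite the general fact that the eigenvalues of the image under such a map are majorized by those of the argument; the computation above is essentially an explicit proof of that fact in this special case.
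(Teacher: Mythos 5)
Your argument is correct, but note that the paper does not prove this statement at all: Lemma~\ref{lem:SchurProductMajorize} is imported verbatim from Bapat's 1985 work (the Bapat--Sunder majorization theorem) and used as a black box in the proof of Theorem~\ref{thm:RBInterpret2}. What you have written is therefore a self-contained reproof of the cited result rather than an alternative to anything in the paper. The proof itself is sound: the trace equality from the unit diagonal of $B$ is immediate; the identity $A \odot (ww^T) = D_w A D_w$ and the spectral decomposition of $B$ correctly express $A \odot B = \sum_m \beta_m D_{w_m} A D_{w_m}$ as a sum of congruences with $\sum_m \beta_m D_{w_m}^2 = I_n$; the Ky Fan reduction to $\trace(AS)$ with $S$ in the fantope $\{0 \preceq S \preceq I_n,\ \trace(S)=k\}$ is valid (the congruence monotonicity $D_{w_m} UU^T D_{w_m} \preceq D_{w_m}^2$ follows since $UU^T$ is an orthogonal projection); and the fantope maximum $\max \trace(AS) = \sum_{i=1}^k \lambda_i(A)$ is a standard fact, provable by writing $\trace(AS) = \sum_i \lambda_i(A)\, v_i^T S v_i$ with each $v_i^T S v_i \in [0,1]$ summing to $k$. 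Your closing remark is also accurate: the map $X \mapsto X \odot B$ is unital (because $\diag(B) = I$), positive (Schur product theorem), and trace-preserving, i.e.\ doubly stochastic, and the majorization is an instance of the general eigenvalue majorization under such maps; your computation is an explicit unwinding of that fact. The only step a careful reader would want expanded is the fantope maximization you already flag, and that expansion is routine.
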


					\begin{lemma} \textnormal{\cite{horn2012matrix}}
						\label{lem:RankHadamard}
						Let $ A,B$ be matrices of size $ \mathbb{R}^{n \times m} $ then
						$\rank (A \odot B) \leq \rank (A) \rank (B)$.
					\end{lemma}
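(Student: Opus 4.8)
The plan is to exploit the bilinearity of the Hadamard product together with the rank-one decompositions of $A$ and $B$. First I would set $r_A = \rank(A)$ and $r_B = \rank(B)$ and write each matrix as a sum of rank-one outer products, $A = \sum_{i=1}^{r_A} u_i v_i^T$ and $B = \sum_{j=1}^{r_B} p_j q_j^T$, where $u_i, p_j \in \mathbb{R}^n$ and $v_i, q_j \in \mathbb{R}^m$. Such decompositions exist precisely because the rank equals the minimum number of rank-one matrices summing to the given matrix.

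Next I would use the fact that the Hadamard product is bilinear, i.e. it distributes over matrix addition in each argument, so that
\[
A \odot B = \sum_{i=1}^{r_A} \sum_{j=1}^{r_B} (u_i v_i^T) \odot (p_j q_j^T).
\]
The crux of the argument is then the identity $(u_i v_i^T) \odot (p_j q_j^T) = (u_i \odot p_j)(v_i \odot q_j)^T$, where on the right $u_i \odot p_j$ and $v_i \odot q_j$ denote the entrywise products of the respective vectors. One verifies this by comparing the $(k,\ell)$ entries of both sides: the left-hand entry is $(u_i)_k (v_i)_\ell (p_j)_k (q_j)_\ell$, which factors as $[(u_i)_k (p_j)_k][(v_i)_\ell (q_j)_\ell]$, exactly the $(k,\ell)$ entry of the right-hand side. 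Hence each summand is itself a rank-one matrix.

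Finally, since $A \odot B$ is expressed as a sum of $r_A r_B$ rank-one matrices, the subadditivity of rank yields $\rank(A \odot B) \leq r_A r_B = \rank(A)\rank(B)$, which is the claim. I do not anticipate a genuine obstacle in this argument; the only point requiring care is the rank-one identity for the Hadamard product of two outer products, and this is a direct entrywise computation rather than a conceptual difficulty.
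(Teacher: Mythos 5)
Your proof is correct. The paper does not prove this lemma at all---it is quoted directly from Horn and Johnson's \emph{Matrix Analysis}---and your argument (rank-one decompositions of $A$ and $B$, bilinearity of $\odot$, the identity $(u_iv_i^T)\odot(p_jq_j^T)=(u_i\odot p_j)(v_i\odot q_j)^T$ verified entrywise, and subadditivity of rank) is precisely the standard proof given in that reference, with every step justified.
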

					
					\begin{lemma}\label{lem:AMGM} \textnormal{[Arithmetic Mean-Geometric Mean Inequality \cite{steele2004cauchy}]} 
						For every sequence of nonnegative real numbers $ a_1,a_2,...a_n$ one has 
						\[ (\prod_{i=1}^n a_i)^{1/n} \leq \frac{\sum_{i=1} a_i}{n} \]
						with equality if and only if $ a_1 = a_2 = ...=  a_n$.  
					\end{lemma}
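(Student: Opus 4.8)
The plan is to recall the standard proof of this classical inequality, which is cleanest via the concavity of the logarithm (Jensen's inequality), turning the multiplicative claim into an additive one. First I would dispose of the degenerate case: if some $a_i = 0$ then the left-hand side is $0$ while the right-hand side is nonnegative, so the inequality holds, and equality forces every term to vanish, which is consistent with the stated equality clause. Reducing to $a_i > 0$ for all $i$, I would apply the concavity of $t \mapsto \log t$ on $(0,\infty)$ with uniform weights $1/n$ to obtain
\[
\log\!\Big(\tfrac{1}{n}\sum_{i=1}^n a_i\Big) \;\geq\; \tfrac{1}{n}\sum_{i=1}^n \log a_i \;=\; \log\Big(\prod_{i=1}^n a_i\Big)^{1/n},
\]
and then exponentiate, using monotonicity of $\exp$, to recover $(\prod_i a_i)^{1/n} \leq \tfrac{1}{n}\sum_i a_i$. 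The equality characterization comes for free from strict concavity: Jensen is tight exactly when all evaluation points coincide, i.e. $a_1 = \cdots = a_n$.

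If one prefers a self-contained argument that does not invoke Jensen, I would instead use Cauchy's forward--backward induction. The base case $n=2$ is just the rearrangement $(\sqrt{a_1}-\sqrt{a_2})^2 \geq 0$, i.e. $\sqrt{a_1 a_2} \leq \tfrac{1}{2}(a_1+a_2)$. The forward step shows that validity for $n$ implies validity for $2n$ by grouping the terms into $n$ pairs and applying the two-term case inside the $n$-term inequality; iterating covers all powers of two. The backward step shows that validity for $n$ implies validity for $n-1$: given $a_1,\dots,a_{n-1}$, append the auxiliary term $a_n := \tfrac{1}{n-1}\sum_{i<n} a_i$ and apply the $n$-term inequality, after which the appended mean cancels algebraically and leaves the $(n-1)$-term statement.

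The main obstacle in either route is the equality analysis rather than the inequality itself: in the Jensen route one must appeal to \emph{strict} concavity to pin down the case $a_1 = \cdots = a_n$, while in the inductive route one must propagate the equality condition through both the doubling and the deletion steps, where the auxiliary term being the mean forces all original terms to coincide. Since the paper only uses the inequality direction when bounding $g([T]) = \prod_{t}(\lambda_t+\lambda)/\lambda$, I would present the Jensen argument as the primary proof, as it yields both the bound and the equality clause in one line, and mention the forward--backward induction only as an elementary alternative.
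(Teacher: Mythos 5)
Your proposal is correct. The paper does not actually prove this lemma---it is stated as a classical fact with a citation to Steele's book and used only as a tool in bounding $g([T])$---so there is no in-paper argument to compare against. Your Jensen's-inequality derivation (handling the case of a zero term separately, then using strict concavity of $\log$ on $(0,\infty)$ to get both the inequality and the equality characterization) is a complete and standard proof, and the forward--backward induction you sketch is a valid elementary alternative; either would suffice here, and as you note only the inequality direction is ever invoked in the paper.
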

					
					\subsection{Proof of Theorem 2 in Main Paper}
					
					Suppose the rank of $\tilde{K}_{T+1}$ is $r$. Hence only the first $ r $ eigenvalues are non zero. In that case $  g([T])  $ attains its maximum when each of these  $r$ eigenvalues is equal to $ \frac{\trace(\tilde{K}_{T+1})}{r}$ (using Lemma \ref{lem:AMGM}). Thus,

					\begin{eqnarray*}
						g([T]) &=&  \frac{	 \prod_{i = 1}^{T+1} (\lambda_i + \lambda)}{\lambda^{T+1}}\\
						&\leq&  \frac{	 \prod_{i = 1}^{r} (\trace(\tilde{K}_{T+1})/r + \lambda)}{\lambda^{r} } \\
						&=&  \Big( \frac{\trace(\tilde{K}_{T+1})/r + \lambda}{\lambda } \Big)^r. 
					\end{eqnarray*}
					It follows that,
					\begin{eqnarray*}  
						\log(g([T])) &\leq&  r \log\Big( \frac{\trace(\tilde{K}_{T+1})/r + \lambda}{\lambda } \Big) \\
						&\leq &    r \log\Big( \frac{\trace(\tilde{K}_{T+1}) + \lambda}{\lambda } \Big) \\
						&= &  r_z r_x \log\Big( \frac{\trace(\tilde{K}_{T+1})+ \lambda}{\lambda } \Big)\\
						&\leq &  r_z r_x \log\Big( \frac{(T+1)c_{\tilde{k}} + \lambda}{\lambda } \Big),
					\end{eqnarray*}

					where the second inequality is due to Lemma \ref{lem:RankHadamard}.
					
					\subsection{Proof of Theorem 3 in Main Paper}
					
					\begin{proof}
						Suppose the $ \tilde{K}_{T+1}({\mu_1})$ and $ \tilde{K}_{T+1}({\mu_2})$ are final kernel matrices after time $T$,  $ {K}_{Z_{T+1}^r}({\mu_1})$ and $ {K}_{Z_{T+1}^r}({\mu_2})$ are corresponding matrices using the definition \ref{eq:KernelMatrixDef}. Also suppose that $ {K}_{Z}(\mu_1)$ and $ {K}_{Z}(\mu_2)$ are task similarity matrices. The eigenvalues of $ K_Z(\mu)  =  (1 - \mu) I_{N} + \mu \mathbbm{1}_N\mathbbm{1}_N^T$ are $ 1 + \mu (N  -1 )$ with multiplicity 1 and $ 1 - \mu $ with multiplicity $N -1$. 
						
						Let $n$ be positive integer with $n \leq N-1$ and define $df$ to be the difference between sum of largest $n+1$ eigenvalues of ${K}_{Z}(\mu_1) $ and ${K}_{Z}(\mu_2)$. Thus,
						\begin{eqnarray*}
							df &=& 1 + \mu_1(N-1) + n(1-\mu_1) - \Big( 1 + \mu_2(N-1) + n(1-\mu_2)\Big)\\
							&=& (N-1)(\mu_1 - \mu_2) + n (1-\mu_1-1+\mu_2)\\
							&=& (N-1)(\mu_1 - \mu_2) + n (\mu_2-\mu_1) \\
							&=& (\mu_1 - \mu_2)(N -1 -n)\\
							&\leq& 0
						\end{eqnarray*}
						
						where the last inequality holds because $ \mu_1 \leq \mu_2$. This implies
						
						\begin{eqnarray*}
							\lambda ({K}_{Z}({\mu_1})) \prec \lambda ({K}_{Z}({\mu_2}))
						\end{eqnarray*}
						and the Lemma \ref{lem:EigKron} implies 
						\begin{eqnarray*}
							\lambda ({K}_{Z_{T+1}}^r({\mu_1})) \prec \lambda ({K}_{Z_{T+1}}^r({\mu_2})).
						\end{eqnarray*}
						
						Using the Lemma \ref{lem:SchurProductMajorize} and the definition \ref{eq:KernelMatrixDef}, we have
						
						\[ \lambda (\tilde{K}_{T+1}({\mu_1})) \prec \lambda (\tilde{K}_{T+1}({\mu_2})) \] 
						
						This implies
						
						\[ \lambda (\tilde{K}_{T+1}({\mu_1})) + \lambda  \prec \lambda (\tilde{K}_{T+1}({\mu_2})) + \lambda. \] 
						
						Using the Lemma \ref{lem:SchurConcaveProd}, we conclude that	
						\[  \prod_{t = 1}^{T+1} (\lambda_t(\tilde{K}_{T+1}({\mu_1})) + \lambda) \geq \prod_{t = 1}^{T+1} (\lambda_t(\tilde{K}_{T+1}({\mu_2})) + \lambda). \]
						This completes the proof. 
					\end{proof}
					
					\subsection{Proof of Corollary 1}
					\begin{proof}
						
						Let's find the upper bound of maximum of $ g([T])$. We know that  $r\lambda \log T \geq \sum_{i = r+1}^{T+1} \lambda_i $. Let $ \epsilon$ be a constant such that $ r\lambda \log T = \sum_{i = r+1}^{T+1} \lambda_i + \epsilon$. Notice that $\epsilon \leq (T+1)  c_{\tilde{k}}$. Consider
						\begin{eqnarray*}
							\max &\prod_{i = 1}^{T+1}& (\lambda_i + \lambda) \\
							s.t.  &\sum_{i = 1}^{r}& \lambda_i +\lambda = (T+1)c_{\tilde{k}}+r \lambda - r\lambda \log T + \epsilon \\
							and   &\sum_{i = r+1}^{T+1}& \lambda_i +\lambda = r\lambda \log T - \epsilon + (T+1-r) \lambda
						\end{eqnarray*}
						
						Using Lemma \ref{lem:AMGM}, the maximum of above constrained optimization problem occurs at
						\begin{equation}\label{eq:MaximumDet}
						\lambda_i + \lambda = 
						\begin{cases}
						\frac{(T+1) c_{\tilde{k}} + r\lambda - r\lambda \log T +  \epsilon }{r},& \text{if } \lambda_i \leq r, \\
						\frac{r \lambda \log T + (T+1-r)\lambda }{(T+1-r)}- \frac{\epsilon}{T+1-r}              & \text{otherwise}.
						\end{cases}
						\end{equation}
						
						Therefore,
						
						\begin{eqnarray*}
							g([T]) &=&  \prod_{t = 1}^{T+1} \frac{(\lambda_t + \lambda)}{\lambda}\\
							& \leq &   \Big(\frac{(T+1)c_{\tilde{k}} + r\lambda - r\lambda \log T +  \epsilon }{r \lambda } \Big)^r \Big(\frac{r \lambda \log T + (T+1-r)\lambda }{(T+1-r)\lambda}\Big)^{T+1-r} \\
							& = &  \Big(\frac{(T+1)c_{\tilde{k}} + r\lambda - r\lambda \log T +  \epsilon }{r \lambda } \Big)^r \Big(\frac{r \log T + (T+1-r) }{(T+1-r)}\Big)^{T+1-r}\\
							& = &  \Big(\frac{(T+1)c_{\tilde{k}} + r\lambda - r\lambda \log T +  \epsilon }{r \lambda} \Big)^r \Big(\frac{r \log T }{T+1-r} + 1 \Big)^{T+1-r} \\
							& = &  \Big(\frac{(T+1)c_{\tilde{k}} + r\lambda - r\lambda \log T +  \epsilon }{r \lambda} \Big)^r \Big(\frac{r \log T }{T+1-r} + 1 \Big)^{T+1-r} \\
							& \leq  &  \Big(\frac{(T+1)c_{\tilde{k}} + r\lambda - r\lambda \log T +  \epsilon }{r \lambda} \Big)^r \Big(\frac{r \log (T+r-1) }{T} + 1 \Big)^{T} \\
							& \leq &  \Big(\frac{(T+1)c_{\tilde{k}} + r\lambda - r\lambda \log T +  \epsilon }{r \lambda} \Big)^r \exp \Big( r \log (T+r-1) \Big)
						\end{eqnarray*}
						where the first inequality is due to eqn. (\ref{eq:MaximumDet}), the second inequality holds because $ (1+ \frac{\log(x)}{x})^x $ is monotonically increasing function $\forall x \geq 1 $ and the last inequality holds because $ \log(1+x) \leq x, \forall x > -1$.  
						
						Taking $ \log $ on both sides
						
						\begin{eqnarray*}
							\log(g([T])) &\leq& r \log\Big(\frac{(T+1)c_{\tilde{k}} + r\lambda - r\lambda \log T +  \epsilon }{r \lambda}\Big) + r \log (T+r-1) \\
							&\leq& r \log\Big(\frac{(T+1)c_{\tilde{k}} + r\lambda - r\lambda \log T +  \epsilon }{r \lambda}\Big) + r \log (2T) \\
							\log(g([T])) &\leq& r \log\Big(2T\frac{2(T+1)c_{\tilde{k}} + r\lambda - r\lambda \log T }{r \lambda}\Big).
						\end{eqnarray*}
					\end{proof}
					
					\section{Results}
					\begin{figure}[h]
						\caption{Results on Multiclass Dataset with confidence interval}
						\centering
						\includegraphics[height=100mm,width=150mm]{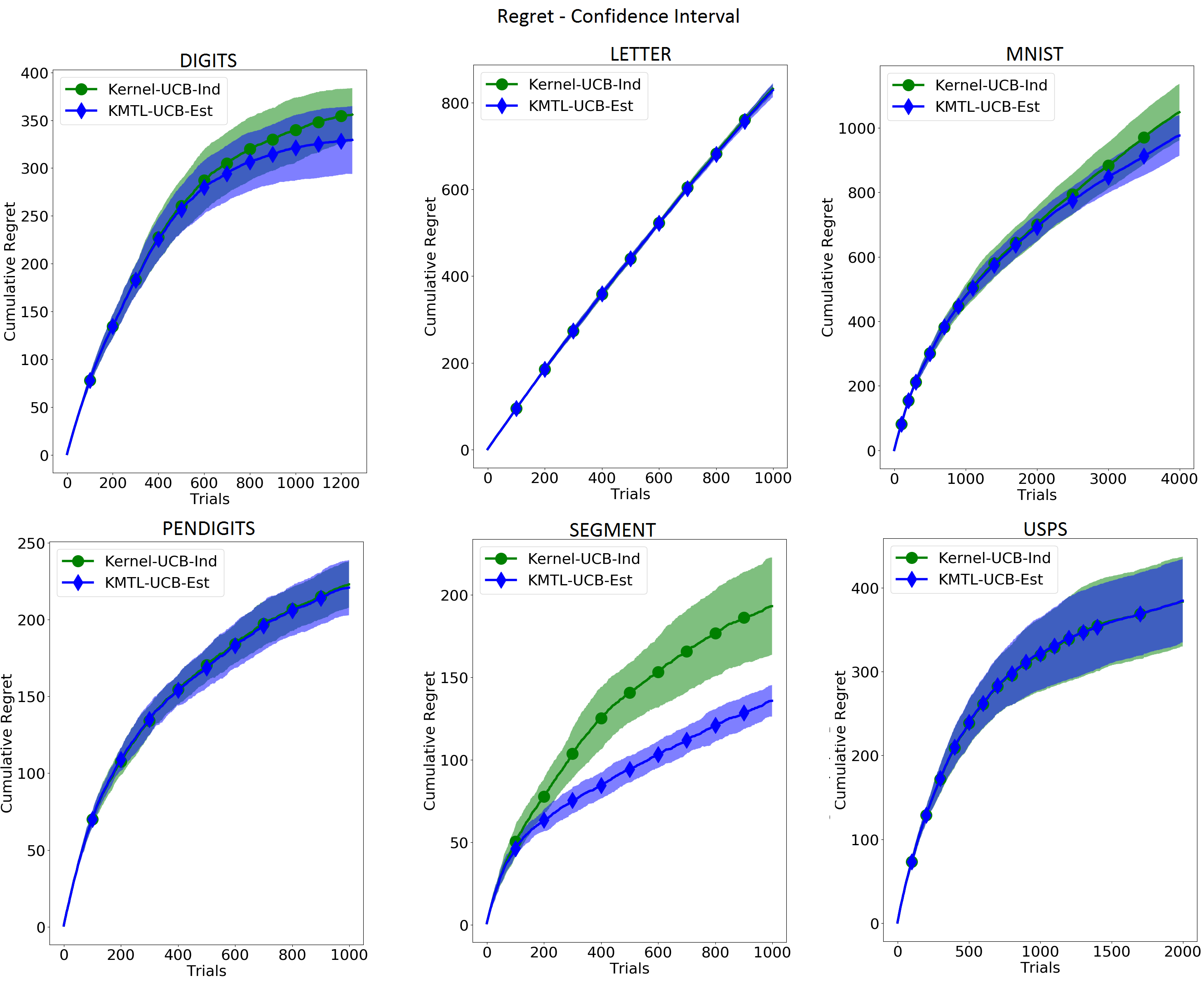}
						\label{fig:real}
					\end{figure}
					
					\vskip 0.2in
					\bibliographystyle{abbrv}

						\bibliography{references}


\end{document}